 \newcommand{\Real}{\mathbb{R}}
 \newcommand{\abs}[1]{\left\vert#1\right\vert}
 \newcommand{\norm}[1]{\left\Vert#1\right\Vert}
\newcommand{\indicator}{{\mathchoice {\rm 1\mskip-5mu l} {\rm 1\mskip-4mu l}{\rm 1\mskip-4.5mu l} {\rm 1\mskip-5mu l}}}
\newcommand{\BF}[1]{\mathbf{#1}}
\newcommand{\CAL}[1]{\mathcal{#1}}
\newcommand{\Regret}{\mbox{Regret}}
\newcommand{\Risk}{\mbox{Risk}}
\newcommand{\E}{\mbox{\sf E}} % expectation operator
\newcommand{\tr}{\mbox{\sf tr}} % expectation operator
\newcommand{\singlespacing}{\let\CS=\@currsize\renewcommand{\baselinestretch}{1.0}\tiny\CS}
\newcommand{\doublespacing}{\let\CS=\@currsize\renewcommand{\baselinestretch}{1.5}\tiny\CS}
\newcommand{\realdoublespacing}{\let\CS=\@currsize\renewcommand{\baselinestretch}{2.0}\tiny\CS}
\newcommand{\mydoublespacing}{\let\CS=\@currsize\renewcommand{\baselinestretch}{1.499}\tiny\CS}
\newtheorem{thm}{Theorem}[section]
\newtheorem{corollary}[thm]{Corollary}
\newtheorem{lem}[thm]{Lemma}
\newtheorem{assumption}[]{Assumption}
\newtheorem{example}[thm]{Example}
\newcommand{\bs}{\begin{sffamily}}
\newcommand{\es}{\end{sffamily}}
\newcommand{\ben}{\begin{enumerate}}
\newcommand{\een}{\end{enumerate}}
\newcommand{\be}{\begin{eqnarray}}
\newcommand{\ee}{\end{eqnarray}}
\newcommand{\bex}{\begin{eqnarray*}}
\newcommand{\eex}{\end{eqnarray*}}
\newcommand{\beq}{\begin{equation}}
\newcommand{\eeq}{\end{equation}}
\renewcommand{\baselinestretch}{1.4}
\long\def\pr#1{{\color{black}#1}}
\long\def\jnt#1{{\color{black}#1}}
\begin{document}

\begin{centering}
\title{
Linearly Parameterized Bandits
}

\author{
\begin{tabular}{ccc}
Paat Rusmevichientong & \hspace{0.5in} & John N. Tsitsiklis \\
Cornell University           & \hspace{0.5in} & MIT \\
paatrus@cornell.edu      & \hspace{0.5in} & jnt@mit.edu
\end{tabular}}

%\date{\today~~@~~\currenttime}

\date{January 18, 2010~~@~~11:04pm}

\maketitle
\end{centering}

\abstract{
We consider bandit problems involving a large (possibly infinite) collection of arms, in which
the expected reward of each arm is a linear function of an $r$-dimensional random vector
$\BF Z \in \Real^r$, where $r \geq 2$.  The objective is to minimize the cumulative regret and Bayes
risk.  When the set of arms corresponds to the unit sphere, we prove
that the regret and Bayes risk is of order $\Theta( r \sqrt{T} )$, 
by establishing a lower bound for an arbitrary policy,
and showing that a matching upper bound is obtained through
a policy that alternates between exploration and 
exploitation phases.  The phase-based policy is also shown to be effective 
if the set of arms satisfies a strong convexity condition.
%for a large
%class of bandits satisfying a certain smoothness condition.
For the case of a general set of arms,
%For a general bandit problem, 
we describe a near-optimal policy
whose regret and Bayes risk admit upper bounds of the form $O(r \sqrt{T} \log^{3/2} T)$.
}

%\vspace{0.2in}
%\noindent {\fontsize{9}{9}\selectfont{Original Submission: January 19, 2009}}

\section{Introduction}

Since its introduction by
\cite{Thompson:1933},  %and \cite{Robbins:1952}, 
the multiarmed bandit problem has served as  
an important model for decision making under uncertainty.
Given a set of arms  with unknown reward profiles, the decision maker
must choose a sequence of arms  to maximize the expected total  payoff,
where
the decision in each period may depend on the
previously observed rewards.   The multiarmed bandit problem elegantly captures the tradeoff between
the need to exploit arms with high payoff 
and the incentive to explore previously untried arms for information gathering purposes.

Much of the previous work on the multiarmed bandit problem assumes that the rewards of the arms are 
statistically independent (see, for example, \citet{LaiRobbins:1985} and \citet{Lai:1987}).  This assumption enables us to %decompose the problem and 
consider each arm separately,
%leading to policies that consider each arm separately.  
%In their seminal papers, 
%\cite{LaiRobbins:1985} and \cite{Lai:1987}
%propose an asymptotically  optimal policy that maintains an index for each arm and
%selects in each period the arm with the highest  index.  
but it leads to policies 
whose regret
scales linearly  with the number of arms.
Most policies that assume independence 
require each arm to be tried at least once, and are %making them 
impractical in settings 
involving % where we have 
many arms.
In such settings, we want a policy whose regret is
independent of the number of arms.

%We are interested in bandit problems involving a large (possibly infinite) collection of arms.
%Our interest is motivated by emerging applications in e-commerce and revenue management,
%where the decision maker faces a large number of decisions with unknown payoffs.
%Examples include product assortments and prices that a company can choose to offer, 
%or product advertisements to be displayed on a web site. 

When the mean rewards of the arms are assumed to be independent random variables, 
\cite{LaiRobbins:1985} show that the regret under an arbitrary policy must increase linearly
with the number of arms.  However, the assumption of independence is  quite strong in practice.
In many applications, the information obtained from pulling one arm
can change our understanding of other arms.  For instance, in marketing applications,
we expect {\em a priori} that similar products  
should  have similar sales.    By exploiting the correlation
among products/arms, we should be able to obtain a policy whose regret scales more 
favorably than traditional bandit
algorithms that ignore correlation and assume independence.

%We can, of course, ignore any knowledge of correlation
%among products/arms and apply policies that assume independence, but this approach seems inefficient
%because it does not exploit the existing dependencies.
%we would like to exploit the correlation to improve the performance of our policies.
  
\cite{MersereauRT:2008} propose a simple  model that demonstrates the benefits
of exploiting the underlying structure of the rewards. 
They consider a bandit problem
where the expected reward of each arm is a linear function of an unknown scalar, with a known prior
distribution.
Since the reward of each arm depends on a single random variable, the mean rewards
are  perfectly correlated.  
They prove that, under certain assumptions,  the cumulative Bayes risk over $T$ periods
(defined below) 
under a greedy policy admits an $O \left( \log T \right)$ upper bound, 
independent
of the number of arms. 
%In contrast, for the classical multiarmed bandit with independent
%arms, \cite{Lai:1987} shows that the $T$-period risk is at least
%$\Omega( m \log^2 T)$, which scales linearly with the number of arms $m$.

In this paper, we extend the model of \cite{MersereauRT:2008} to the setting where the expected
reward of each arm depends linearly on a {\em multivariate} random vector $\BF Z \in \Real^r$. We concentrate on the case where $r \geq 2$, which   %Our model in this paper 
is  fundamentally different from the previous model 
because the mean rewards now depend on more than one random variable,  and thus, they are no longer perfectly correlated.  The bounds on the regret and Bayes risk
and the policies found in  \cite{MersereauRT:2008}  no longer apply.  To give a flavor for the differences,  we will show that, in our model, the cumulative Bayes risk under an arbitrary policy is at least $\Omega( r \sqrt{T})$, which is significantly higher than the upper bound of $O (\log T)$ attainable when $r = 1$.    

The linearly parameterized bandit  is an important model
that has been studied by  many researchers, including 
\citet{GinebraClayton:1995}, \citet{AbeLong:1999}, and
\citet{Auer:2002}.
The results in this paper complement and extend the 
earlier and independent work of
\cite{DHK:2008a}  in a number of directions.   We provide a
detailed %discussion and 
comparison between our work 
and the existing literature
in Sections \ref{section:literature}~and~\ref{section:contributions}.

%all consider a special case of our model where 
%the random vector $\BF Z$ 
%and the error random variables $W^{\BF u}_t$ are bounded almost surely, and they
%focus on the regret criterion.    \citet{AbeLong:1999}
%demonstrate a class of bandits  where 
%the dimension $r$ is at least $\Omega(\sqrt{T})$, and show that the $T$-period
%regret under an arbitrary policy must be at least
%$\Omega \left( T^{3/4} \right)$.  
%\citet{Auer:2002} describes an algorithm based on least squares
%estimation and confidence bounds, and establishes an $O\left( \sqrt{r} \sqrt{ T}  \log^{3/2} \left( T \abs{\CAL U_r} \right) \right)$ upper bound on the regret, for  the case of finitely many arms.

%\vspace{-0.1in}
\subsection{The Model}
We have a compact set $\CAL U_r \subset \Real^r$ that corresponds to the set of
arms, where $r \geq 2$.  The reward $X_t^{\BF u}$ of playing arm $\BF u \in \CAL U_r$ in period $t$ is given by
\begin{equation}
	X_t^{\BF u} = \BF u^\prime  \BF Z  + 
	W_t^{\BF u}~, %= \sum_{k=1}^r u_k Z^r_k  + W_t^{\BF u}~,
	\label{eq:model}
\end{equation}
where $\BF u^\prime \BF Z$ is the inner product between the vector 
$\BF u  \in \CAL U_r$ and the random vector $\BF Z  \in \Real^r$.   
% We are interested in understanding how the regret and Bayes risk scale with the dimension, and we use the subscript to
%indicate that both the set of arms $\CAL U_r$ and the random vector $\BF Z$ depend on 
%the dimension $r$.
We assume that
the random variables
$W_t^{\BF u}$ are independent of each other and of $\BF Z$. Moreover, for each $\BF u \in \CAL U_r$, the random variables
$\left\{ W_t^{\BF u} : t \geq 1 \right\}$ are identically distributed, with $\E \left[ W_t^{\BF u} \right] = 0$ for all $t$ and $\BF u$.
We allow %the vector $\BF Z$ and 
the error
random variables $W^{\BF u}_t$ to have unbounded support,
provided that their moment generating functions satisfy certain conditions
(given in Assumption~\ref{assumption:basic}).
Each vector $\BF u \in \CAL U_r$ simultaneously represents an arm and determines the expected 
reward of that arm.
So, when it is clear from the context, we will interchangeably refer to a $\BF u \in \CAL U_r$
as either a vector or an arm.   
%As pointed out in the introduction, we are interested in
%a setting where the collection of arms $\CAL U_r$ is large (possibly uncountably infinite).
% Throughout this paper, we will assume without loss of generality
%that the number of arms $\abs{\CAL U}$ is at least $r$, that is, $r \leq \abs{\CAL U}$.

Let us introduce the following conventions and notation that will be used throughout the paper.
%Superscripts refer to arms while subscripts correspond to time periods or vector coordinates.
We denote vectors and matrices in bold.
All vectors are column vectors. 
For any vector $\BF v \in \Real^r$, its transpose is denoted by $\BF v^\prime$, 
and is always a row vector. Let $\BF 0$ denote the zero vector, and
for $k = 1, \ldots, r$, let $\BF e_k = (0, \ldots, 1, \ldots, 0)$ denote
the standard unit vector in $\Real^r$, with a $1$ in the $k^{th}$ component
and a $0$ elsewhere. Also, let $\BF I_k$ denote the $k \times k$ identity matrix.
We let $\BF A^\prime$  and $\det (\BF A)$ 
denote
the transpose and determinant 
of $\BF A$, respectively. 
If $\BF A$ is a symmetric positive semidefinite matrix, then
$\lambda_{\min}(\BF A)$ and $\lambda_{\max}(\BF A)$ denote the smallest
and the largest eigenvalues of $\BF A$, respectively. 
We  use $\BF A^{1/2}$
to denote its symmetric nonnegative definite square root, so that $\BF A^{1/2} \BF A^{1/2}  = \BF A$.
If $\BF A$ is also positive definite, we let $\BF A^{ -1/2} = \left( \BF A^{-1} \right)^{1/2}$.
For any vector $\BF v$,
$\left\| \BF v \right\| = \sqrt{\BF v^\prime \BF v}$ denotes the standard Euclidean norm,
and for any positive definite matrix $\BF A$,
$\left\| \BF v \right\|_{\BF A} = \sqrt{ \BF v^\prime 
\BF A \BF v}$
denotes a corresponding weighted norm.  
For any two symmetric positive semidefinite
matrices $\BF A$ and $\BF B$, we write $\BF A \leq \BF B$ if the matrix $\BF B - \BF A$ is
positive semidefinite.
Also, all logarithms $\log(\cdot)$ in the paper denote the natural log, 
with base $e$.
 A random variable is denoted by an uppercase letter while its realized values are
 denoted in lowercase. 

For any $t \geq 1$, let $\CAL {H}_{t-1}$ denote the set of possible histories until the end 
of period $t-1$.  A policy ${\BF \psi} = \left( \psi_1, \psi_2, \ldots \right)$ is a sequence of functions
such that $\psi_t : \CAL{H}_{t-1} \to \CAL U_r$ selects an arm in period $t$ based on the history
until the end of period $t-1$.  For any policy $\BF\psi$ and $ \BF z \in \Real^r$, the $T$-period
cumulative {\em regret} under $\BF\psi$ given $\BF Z = \BF z $, denoted by $\Regret( \BF z, T, \BF\psi)$,
is defined by
$$
	\Regret( \BF z, T, \BF\psi)  = \sum_{t=1}^T \E  \left[ \max_{\BF v \,\in\, \CAL U_r} 
		\BF v^\prime \BF z -  \BF U_{t}^\prime \BF z ~\Big|~ \BF Z = \BF z \right]~,
$$
where for any $t \geq 1$, $\BF U_t \in \CAL U_r$ is the arm chosen under
$\BF \psi$ in period $t$.  
Since $\CAL U_r$ is compact, $\max_{\BF v \,\in\, \CAL U_r} \BF v^\prime \BF z$ is 
well defined for all $\BF z$. The $T$-period cumulative {\em Bayes risk} under $\BF\psi$ is defined by
$$
	\Risk(T, \BF\psi) = \E \left[ \Regret (\BF Z, T, \BF\psi ) \right]~,
$$
where the expectation is taken with respect to the prior distribution of $\BF Z$.
We aim to develop a policy that minimizes the cumulative regret and Bayes risk.    We note that
minimizing the $T$-period cumulative Bayes risk is equivalent to maximizing the expected total
reward over $T$ periods. 
%Also, the classical multiarmed bandit problem
%with $r$ independent arms is a special case of our model, where  
%$\CAL U = \left\{ \BF e_1, \ldots, \BF e_r \right\}$ and $Z_1, \ldots, Z_r$ are independent random variables. 
%In this case, the expected reward of arm $\BF e_\ell$
%is simply $Z_\ell$.

To facilitate
exposition, when we discuss a particular policy, we will drop the superscript and write $X_t$ and $W_t$ to denote $X_t^{\BF U_t}$ and $W_t^{\BF U_t}$, respectively, where $\BF U_t$ is the arm chosen by the policy in period $t$.   With this convention, the reward obtained in period $t$ under a particular policy is simply $X_t = \BF U_t^\prime \BF Z  + W_t$.

\subsection{Potential Applications}

Although our paper focuses on a theoretical analysis, we mention briefly potential applications
to problems in marketing and revenue management.  Suppose we have $m$
arms indexed by $\CAL U_r = \left\{ \BF u_1, \BF u_2, \ldots, \BF u_m \right\} \subset \Real^r$.  For 
$k=1,\ldots, r$, let $\bm{\phi}_k = \left( u_{1,k}, u_{2,k}, \ldots, u_{m,k} \right) \in \Real^m$
denote an $m$-dimensional column vector consisting of the $k^{th}$ coordinates of the
different vectors $\BF u_\ell$.   
Let $\bm{\mu} = \left( \mu_1, \ldots, \mu_m \right)$ be the column vector
consisting of expected rewards, where $\mu_\ell$ denotes the expected reward of arm $\BF u_\ell$.
Under our formulation, the vector  
$\bm{\mu}$ lies in an $r$-dimensional
subspace spanned by the vectors $\bm{\phi}_1, \ldots, \bm{\phi}_r$, that is,
$\bm{\mu} = \sum_{k=1}^r Z_{k} \bm{\phi}_k$, where $\BF Z = \left( Z_{1}, \ldots, Z_{r} \right)$.
If each arm corresponds to a product  to be offered to a customer, we can then interpret the vector $\bm{\phi}_k$ as a feature vector or basis function, representing a particular characteristic of the products
such as  price or popularity.  
We can then interpret the random variables $Z_{1}, \ldots ,Z_{r}$ as regression coefficients, obtained 
from approximating the vector of expected rewards  using the basis functions $\bm{\phi}_1, \ldots, \bm{\phi}_r$,
or more intuitively, as the weights associated with the different characteristics.
  Given a prior on the coefficients
$Z_{k}$, our goal is to choose a sequence of products that gives the maximum expected total reward.
This representation suggests
that our model might be applicable to problems where we can approximate high-dimensional vectors
using a linear combination of a few basis functions, an    
approach that has been  successfully applied to %many 
high-dimensional dynamic programming problems
(see \cite{BertsekasTsitsiklis:1996} for an overview). 

\subsection{Related Literature}
\label{section:literature}

The multiarmed bandit literature
can be divided into two streams, depending on the objective function criteria: maximizing
the total discounted reward over an infinite horizon or minimizing the cumulative
regret and Bayes risk over a finite horizon.  Our paper focuses exclusively on 
the second criterion. 
%A full characterization of an optimal policy in this setting is not to be expected, and
Much of the work in this area focuses on understanding the rate with which the regret and risk 
under various policies increase over time.   In their pioneering work,
\citet{LaiRobbins:1985} 
establish an asymptotic lower bound 
of $\Omega( m \log T)$ for the $T$-period cumulative regret for bandit problems with $m$ independent
arms \pr{whose mean rewards are ``well-separated\jnt{,}'' where the difference between the expected
reward of the best and second best arms is fixed and bounded away from zero.}
They further demonstrate a policy whose regret 
asymptotically matches the lower bound.   \pr{In contrast, our paper focuses
on problems where the number of arms is large (possibly infinite), and \jnt{where}
the gap between the maximum expected reward and the expected reward of the second best arm can
be arbitrarily small.}
%Their proposed policy maintains an upper confidence index
%for each arm, and in each period, selects an arm with the highest index.
%The upper confidence index of an arm depends  on  the estimated mean reward  
%and the uncertainty associated with the estimate. 
\citet{Lai:1987} extends these results to a Bayesian setting, with 
a prior distribution on the reward characteristics of each arm. \pr{He 
shows
that when we have $m$ arms, the $T$-period cumulative Bayes risk 
is of order $\Theta( m \log^2 T)$, when the prior distribution has a continuous density
function satisfying certain properties (see Theorem 3 in \citealp{Lai:1987}).}
  Subsequent papers along this line
include \citet{Agrawal:1989a}, \citet{Agrawal:1995}, and \citet{AuerCF:2002}.

% must be  at least 
%$\Omega\left( m \log^2 T \right)$  when we have $m$ arms. 

There has
been relatively little research, however, on policies that exploit the dependence among the arms.
%In what is widely regarded as the
%original paper on the multiarmed bandit problem, 
\citet{Thompson:1933} allows for correlation
among arms in his initial formulation, though he only analyzes a
special case involving independent arms.  \citet{Robbins:1952} formulates a continuum-armed 
bandit regression problem, but
does not provide an analysis of the regret or risk.
\citet{BerryFristedt:1985} allow for dependence among arms in their formulation in Chapter 2, but mostly focus on the case of independent arms.
%The formulation in Chapter 2 of
%\citet{BerryFristedt:1985} allows for dependence among arms, %for correlation among arms, 
%but 
%most of the book concerns cases with independent arms. 
\citet{Feldman:1962} and \citet{Keener:1985} consider two-armed bandit
problems with two hidden states, where the rewards of each arm
depend on the underlying state that prevails.
%which of the underlying states prevails.
\citet{PressmanSonin:1990} formulate  a general multiarmed bandit problem with an arbitrary number of hidden states, and provide a detailed analysis for the case of two hidden states.
\citet{PandeyChakrabartiAgarwal:2007} study bandit
problems where the dependence of the arm rewards is represented by a hierarchical model. 
%\citet{TewariBartlett:2008} develop a policy that aims to maximize the average reward
%in an irreducible (but unknown) Markov decision process (MDP); their policy admits logarithmic regret but scales linearly with the number of actions and states of the underlying MDP.

A somewhat related literature on bandits with dependent arms is the recent work 
by \citet{WangKulkarniPoor:2005b, WangKulkarniPoor:2005a} and \citet{GoldenshlugerZeevi:2007,
GoldenshlugerZeevi:2008} 
who consider bandit problems with two arms, where the expected reward of each arm 
depends on an exogenous variable that represents side information.  
%\citet{GoldenshlugerZeevi:2007} assume that
%the expected reward depends linearly on the side information, where as 
%\citet{WangKulkarniPoor:2005b, WangKulkarniPoor:2005a} allow for a more general relation.
%The main question here is to understand how the presence of the side information can be used to 
%improve the performance of bandit policies.  
These models, however, differ from ours because
they assume that the side information variables are independent and identically distributed 
over time,  and moreover, these variables are {\em perfectly observed before} we choose which 
arm to play.  
In contrast, we assume that the underlying random vector $\BF Z$
is unknown and fixed over time, to be estimated based on past
rewards and decisions.  

\pr{Our formulation can be viewed as a sequential method for maximizing a linear function
based \jnt{on} noisy observations of the function values, and  it is thus closely related
to the field of stochastic approximation, which was developed by
\cite{RobbinsMonro:1951} and \cite{KieferW:1952}.  We do not provide a comprehensive review of the literature here; interested
readers are referred to an excellent survey by \cite{Lai:2003}.   In stochastic
approximation, we wish to find an adaptive sequence $\left\{ \BF U_t  \in \Real^r : t \geq 1 \right\}$
that converges to a maximizer $\BF u^*$ of a target function,
and the focus is on establishing  the rate at which the mean squared error $\E \left[ \norm{ \BF U_t - \BF u^*}^2 \right]$ converges to zero (see, for example, \citealp{Blum:1952} and \citealp{CicekBZ:2009}).
In contrast, our cumulative regret and Bayes risk criteria take into account the cost associated
with each observation.  The different performance measures used in our formulation lead to entirely different policies and performance characteristics.}

Our model generalizes the ``response surface bandits''  introduced by
\citet{GinebraClayton:1995}, who assume a normal prior on $\BF Z$
and provide a simple tunable heuristic, without
any analysis on the regret or risk.  
\citet{AbeLong:1999}, \citet{Auer:2002}, and \cite{DHK:2008a}
all consider a special case of our model where 
the random vector $\BF Z$ 
and the error random variables $W^{\BF u}_t$ are bounded almost surely, and %they
with the exception of the last paper,
focus on the regret criterion.    \citet{AbeLong:1999}
demonstrate a class of bandits  where 
the dimension $r$ is at least $\Omega(\sqrt{T})$, and show that the $T$-period
regret under an arbitrary policy must be at least
$\Omega \left( T^{3/4} \right)$.  
\citet{Auer:2002} describes an algorithm based on least squares
estimation and confidence bounds, and establishes an $O\left( \sqrt{r} \sqrt{ T}  \log^{3/2} \left( T \abs{\CAL U_r} \right) \right)$ upper bound on the regret, for  the case of finitely many arms.
\cite{DHK:2008a} show that the policy of \citet{Auer:2002} can
be extended to problems having an arbitrary compact
set of arms, and also make use of a 
barycentric spanner. 
They establish an $O ( r \sqrt{T} \log^{3/2} T)$
upper bound on the regret, and discuss a variation of the policy that is more computationally
tractable (at the expense of  higher regret).
\citet{DHK:2008a} also establish an $\Omega( r \sqrt{T} )$
lower bound on the Bayes risk
when the set of
arms is the 
Cartesian product of circles\footnote{The original lower bound (Theorem~3 on page 360 of \citealp{DHK:2008a}) was not entirely correct; a correct version was provided later, in \citet{DHK:2008b}.}.
However, this leaves a $O(\log^{3/2}T)$ gap from the upper bound, leaving open the question of the exact order of regret and risk.

\subsection{Contributions and Organizations}
\label{section:contributions}

One of our contributions is 
proving that the regret and Bayes risk for a broad class of linearly parameterized
bandits is of order $\Theta ( r \sqrt{T} )$.
In Section \ref{section:lower-bound},
we establish an 
$\Omega( r \sqrt{T})$ lower bound
for an arbitrary policy,
when the set of arms is the unit sphere in $\Real^r$.
Then, in Section \ref{section:matching-upper-bound}, 
we show
that 
a matching $O(r \sqrt{T})$ upper bound %on the regret and risk
can be achieved through a phase-based policy that alternates between exploration and
exploitation phases. 
To the best of our knowledge, this is the first  
result that establishes
matching upper and lower bounds for a class of linearly parameterized bandits. 
Table \ref{table:bound-summary} summarizes 
our results and provides a comparison with  the results in \citet{MersereauRT:2008} for the case $r=1$. 
In the ensuing discussion of the bounds, we focus on the main parameters, $r$ and $T$, with more precise statements given in the theorems.

Although we obtain the same lower bound of $\Omega( r \sqrt{T} )$,
our example and proof techniques are very different from \citet{DHK:2008a}.
We consider the unit sphere, with
a multivariate normal prior on $\BF Z$, and
standard normal errors.
The analysis in Section~\ref{section:lower-bound} also illuminates the behavior of the least mean squares estimator in this setting, and we believe that it 
provides an approach that can be used to address more general classes
of linear estimation and adaptive control problems.

We  also prove that 
the phase-based policy remains effective (that is, admits an
$O(r \sqrt{T})$ upper bound) for a broad class of bandit problems in which the set 
of arms is strongly convex\footnote{One can show that the Cartesian product of circles is not strongly convex, and thus, our
phase-based policy cannot be applied to give the matching upper
bound for the example used in \citet{DHK:2008a}.} (defined in Section~\ref{section:matching-upper-bound}).
To our knowledge, this is the first result that establishes the connection
between a geometrical property (strong convexity) of the underlying set of arms and 
the effectiveness of separating exploration from exploitation. We suspect that
strong convexity may have similar implications for other types of bandit and learning problems.
%We believe that 
%the notion of strong convexity may be applicable
%to other bandit problems as well.}

%Our paper contributes to the bandit 
%literature by 
%establishing tight or nearly tight bounds on the regret 
%and Bayes risk for important classes of linearly parameterized bandit problems, as summarized in 
%Table \ref{table:bound-summary}, which also provides a comparison with  the results in \citet{MersereauRT:2008} for the case $r=1$.  In the ensuing discussion of the bounds, we focus on the main parameters, $r$ and $T$, with more precise statements given in the theorems.

%In Section \ref{section:lower-bound},
%we prove that when the set of arms is the unit sphere in $\Real^r$, the regret and Bayes risk under
%an arbitrary policy is at least $\Omega( r \sqrt{T})$. 
%Then, in Section \ref{section:matching-upper-bound}, 
%we show
%that 
%a matching $O(r \sqrt{T})$ upper bound %on the regret and risk
%can be achieved through a policy that alternates between exploration and
%exploitation phases. 
%To the best of our knowledge, this is the first definitive result that establishes
%matching upper and lower bounds for a class of linearly parameterized bandits. We  also prove
%that the phase-based policy remains effective (that is, admits an
%$O(r \sqrt{T})$ upper bound) for a broad class of bandit problems in which the set of arms satisfies a certain 
%strong convexity condition (defined in Section \ref{section:matching-upper-bound}).

When the set of arms is an arbitrary compact set, the separation of exploration
and exploitation may not be effective, and we consider in Section \ref{section:active-exploration} an active exploration policy based on least squares estimation and confidence regions. 
We prove that
the regret and risk under this policy are bounded above by $O( r \sqrt{T} \log^{3/2} T )$, which is within
a logarithmic factor of the lower bound. 
Our policy is closely related to the  one considered in \citet{Auer:2002} and further analyzed in \citet{DHK:2008a}, with differences in a number of respects. First, 
our model allows 
the random vector $\BF Z$ and the errors $W^{\BF u}_t$ to have unbounded support, which requires a somewhat more complicated analysis. Second, our
policy is an ``anytime'' policy, in the sense that the policy does not depend on the time horizon $T$ of interest.  In contrast, 
the policies of \citet{Auer:2002} and \citet{DHK:2008a} involve
a certain parameter $\delta$ whose value must be 
set in advance as a function of the time horizon $T$ in order to obtain the 
 $O \left( r \sqrt{T} \log^{3/2} T \right)$ regret bound.

\begin{table}[h]
\centering  \fontsize{9}{9}\selectfont{{
\begin{tabular}{|c|c|c|c|c|c|}
\hline
	&	  &  \multicolumn{2}{c|}{ } & \multicolumn{2}{|c|}{ }  \\
       &        & \multicolumn{2}{|c|}{$T$-period Cumulative Regret} & \multicolumn{2}{|c|}{$T$-period Cumulative  Bayes Risk}  \\
Dimension &  Set of  	  &  \multicolumn{2}{|c|}{ } & \multicolumn{2}{|c|}{ }  \\
\cline{3-6}
($r$)	&	 Arms ($\CAL U_r$) &  &  &  &\\
	&	   & Lower Bound & Upper Bound & Lower Bound & Upper Bound\\
	&	  &  &  &  &\\
\hline
\hline
	     		   &	     &   &   &   &  \\
$r=1$	   	   &	Any Compact Set      &  $\Omega \left( \sqrt{T} \right)$ & $O \left( \sqrt{T} \right)$  &    $\Omega \left( \log T \right)$    &   $O \left( \log T \right)$  \\
   & ({\fontsize{7}{7}\selectfont{Mersereau et al., 2008}})  &   &   &   &  \\
\hline
\hline
				&	 			&   &   &   &  \\
	         		&	Unit Sphere &  $\Omega \left( r \sqrt{T} \right)$ & $O \left( r  \sqrt{T} \right)$  &    $\Omega \left( r  \sqrt{T} \right)$    &   $O \left( r  \sqrt{T} \right)$  \\
$r \geq 2$	&  (Sections \ref{section:lower-bound} and \ref{section:matching-upper-bound})  &   &   &   &  \\
\cline{2-6}
({\fontsize{8}{8}\selectfont{This Paper}})				&	 &   &   &   &  \\
	   & Any Compact Set	 &  $\Omega\left( r\sqrt{T} \right)$ & $O\left( r\sqrt{T} \log^{3/2} T\right)$  &    $\Omega\left( r \sqrt{T}\right)$    &   $O\left(r \sqrt{T} \log^{3/2} T\right)$  \\
	   & (Section \ref{section:active-exploration}) &   &   &   &  \\
\hline
\end{tabular}}}
\caption{{\fontsize{10}{10}\selectfont{Regret and risk bounds for various values of $r$ and for different collections of arms.}}}
\label{table:bound-summary}
\end{table}

We finally comment on the case where the set of arms is finite and fixed.
We show that the regret and risk under our active exploration policy increase gracefully with time, as $\log T$ and $\log^2 T$, respectively.  These results show that
our policy is within a constant factor of the asymptotic lower bounds established by \citet{LaiRobbins:1985} and \citet{Lai:1987}.  In contrast, 
for the policies of \citet{Auer:2002}
and \citet{DHK:2008a}, the available regret upper bounds 
grow over time as $\sqrt{T} \log^{3/2} T$ and $\log^3 T$, respectively. 

\pr{We note that the bounds on the cumulative Bayes risk given in Table \ref{table:bound-summary} hold under certain assumptions on the prior distribution of the random vector $\BF Z$.  For $r=1$, $\BF Z$ is assumed to be a continuous random variable with a bounded density function (Theorem 3.2 in \citealp{MersereauRT:2008}). When the collection of arms is a unit sphere with $r\geq2$, we require that both $\E \left[ \norm{\BF Z} \right]$ and $\E \left[ 1 / \norm{\BF Z} \right]$ are bounded (see Theorems  \ref{thm:lower-bound} and \ref{theorem:greedy-regret}, and Lemma \ref{lemma:gaussian-has-small-mass}).   For general compact sets of arms where our risk bound is not tight, we only require that $\norm{\BF Z}$ has a  bounded expectation.}

\section{Lower Bounds}
\label{section:lower-bound}

In this section, we establish $\Omega( r \sqrt{T} )$ lower bounds on the regret and risk under an arbitrary policy when the set of arms is the unit sphere. This result is stated in the following theorem\footnote{The result of Theorem \ref{thm:lower-bound} easily extends to the case where the covariance matrix is $\BF I_r$, rather than $\BF I_r /r$. The proof is essentially the same.}   

\begin{thm}[Lower Bounds] \label{thm:lower-bound}
Consider a bandit problem where the set of arms is the unit sphere in $\Real^r$, and $W^{\BF u}_t$ has a standard normal distribution with mean zero and variance one for all $t$ and $\BF u$.  If $\BF Z$ has a multivariate normal  distribution with mean $\BF 0$ and covariance matrix $\BF I_r / r$, then for all policies $\psi$ and every $T \geq r^2$,
$$
	{\rm Risk} \left( T, \psi \right) \geq 0.006 \,r\, \sqrt{T}~.
$$
Consequently, for any policy $\psi$ and $T \geq r^2$, there exists $\BF z \in \Real^r$ such that
$$
	{\rm Regret} \left( \BF z, T, \psi \right) \geq 0.006 \, r \, \sqrt{T}~.
$$
\end{thm}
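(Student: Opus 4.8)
The plan is to reduce the Bayes risk to a cumulative ``orthogonal estimation error'' and then to show that this error must be $\Omega(r\sqrt{T})$ because of an unavoidable exploration--exploitation tradeoff. First I would record the exact form of the instantaneous regret. Since the arms form the unit sphere, $\max_{\BF v}\BF v'\BF z = \norm{\BF z}$, so conditioning on the history $\CAL H_{t-1}$ and writing $\hat{\BF Z}_{t-1} = \E[\BF Z\mid\CAL H_{t-1}]$, the per-step regret equals $\E[\norm{\BF Z}\mid\CAL H_{t-1}] - \BF U_t'\hat{\BF Z}_{t-1}$. Applying the elementary inequality $\norm{\BF z} - a \ge (\norm{\BF z}^2 - a^2)/(2\norm{\BF z})$ (equivalent to $(\norm{\BF z}-a)^2\ge0$) with $a = \BF U_t'\BF z$, summing over $t$, and using that $\norm{\BF Z}$ does not depend on $t$, I obtain
$$
\Risk(T,\psi)\ \ge\ \tfrac12\,\E\!\left[\frac{1}{\norm{\BF Z}}\Big(T\norm{\BF Z}^2 - \sum_{t=1}^T(\BF U_t'\BF Z)^2\Big)\right].
$$
Here $\norm{\BF Z}^2 - (\BF U_t'\BF Z)^2 = \norm{(\BF I_r - \BF U_t\BF U_t')\BF Z}^2$ is the squared component of $\BF Z$ orthogonal to the played arm, so the task becomes a lower bound on cumulative orthogonal error.

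Next I would evaluate this error against the Gaussian posterior. Because the model is Gaussian--Gaussian, the posterior of $\BF Z$ given $\CAL H_{t-1}$ is $N(\hat{\BF Z}_{t-1},\Sigma_{t-1})$ with precision $\Sigma_{t-1}^{-1} = r\BF I_r + \sum_{s<t}\BF U_s\BF U_s'$, a matrix measurable with respect to the adaptively chosen arms. A direct computation gives the exact decomposition
$$
\E\!\left[\norm{\BF Z}^2 - (\BF U_t'\BF Z)^2\mid\CAL H_{t-1}\right]
= \underbrace{\big(\norm{\hat{\BF Z}_{t-1}}^2 - (\BF U_t'\hat{\BF Z}_{t-1})^2\big)}_{\text{misalignment}\ \ge\ 0}
\; + \; \underbrace{\big(\tr\Sigma_{t-1} - \BF U_t'\Sigma_{t-1}\BF U_t\big)}_{\text{residual variance}\ \ge\ 0}.
$$
Two standard ingredients dispose of the easy pieces. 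The total residual variance any policy can strip off is small: by the matrix-determinant (elliptical potential) identity together with $\BF U_t'\Sigma_{t-1}\BF U_t \le \lambda_{\max}(\Sigma_{t-1}) \le 1/r$, one gets $\sum_t \BF U_t'\Sigma_{t-1}\BF U_t = O\big(r\log(1+T/r^2)\big)$, which is $o(r\sqrt T)$ once $T\ge r^2$. The weight $1/\norm{\BF Z}$ is handled by restricting to the event that $\norm{\BF Z}$ lies in a fixed band about its typical value (recall $\E\norm{\BF Z}^2 = \tr(\BF I_r/r) = 1$); under the prior this event has probability bounded away from $0$ and the complementary tail is negligible, using the bounded moments $\E[\norm{\BF Z}]$ and $\E[1/\norm{\BF Z}]$ and the Gaussian mass estimate of Lemma~\ref{lemma:gaussian-has-small-mass}.

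It then remains to prove the heart of the theorem: for every policy,
$$
\sum_{t=1}^T \E\big[\norm{\hat{\BF Z}_{t-1}}^2 - (\BF U_t'\hat{\BF Z}_{t-1})^2\big] \;+\; \sum_{t=1}^T \E\big[\tr\Sigma_{t-1}\big]\ \gtrsim\ r\sqrt T .
$$
This is exactly where either term taken alone is too weak: using only $\tr\Sigma_{t-1}\ge r^2/(r^2+t-1)$ (from $\tr\Sigma_{t-1}^{-1} = r^2+(t-1)$ and AM--HM on the eigenvalues) yields merely $\Omega\big(r^2\log(T/r^2)\big)$, while the misalignment term vanishes for a purely greedy policy. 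The two are coupled: driving $\tr\Sigma_{t-1}$ down quickly forces the arms $\BF U_s$ to spread across many directions, which is precisely what creates misalignment $\norm{\hat{\BF Z}_{s-1}}^2\sin^2(\angle(\BF U_s,\hat{\BF Z}_{s-1}))$ with the current estimate; conversely, keeping misalignment near zero (exploiting) leaves $\Sigma_{t-1}$ large. I would make this quantitative by charging each unit of orthogonal information gained through the rank-one updates $\BF U_s\BF U_s'$ against the misalignment it costs, and optimizing the resulting exploration-versus-alignment budget, whose minimum sits at an exploration effort of order $r\sqrt T$. The hard part will be carrying out this coupled accounting \emph{uniformly over all adaptive policies} while the posterior covariance $\Sigma_{t-1}$ is itself random and possibly far from isotropic, so the scalar tradeoff heuristic must be upgraded to a genuine matrix-valued argument controlling how fast $\tr\Sigma_t$ can decrease.

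Finally, the regret statement is immediate from the risk bound: since $\Risk(T,\psi) = \E[\Regret(\BF Z,T,\psi)] \ge 0.006\,r\sqrt T$, at least one realization $\BF z$ must satisfy $\Regret(\BF z,T,\psi)\ge 0.006\,r\sqrt T$.
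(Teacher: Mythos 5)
Your overall strategy---lower bound the risk by exploration plus estimation error and then exploit the tension between them---is the same as the paper's, and several peripheral steps (restricting $\norm{\BF Z}$ to a band to handle the weight $1/\norm{\BF Z}$, deducing the regret statement from the risk statement) match the paper. But the central quantitative step is missing, and the decomposition you chose makes it harder than necessary. You condition at each step on $\CAL H_{t-1}$ and split $\E\left[\norm{\BF Z}^2-(\BF U_t'\BF Z)^2\mid\CAL H_{t-1}\right]$ into a misalignment term measured against the \emph{current} estimate $\widehat{\BF Z}_{t-1}$ plus the current posterior variance. As you yourself observe, the variance term sums to only $O(r^2\log(T/r^2))$ and the misalignment term vanishes for a greedy policy, so everything hinges on the ``coupled accounting'' that you describe only as a plan. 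That accounting \emph{is} the theorem; without it the proposal establishes nothing beyond $\Omega(r^2\log(T/r^2))$. A further obstacle specific to your formulation is that the reference direction $\widehat{\BF Z}_{t-1}$ rotates as $t$ advances, so the per-step misalignments are measured in different frames and do not obviously add up to a single exploration budget.

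The paper dissolves both difficulties with one device: it projects the instantaneous regret onto a fixed orthonormal frame $\BF S_T^1,\dots,\BF S_T^{r-1}$ orthogonal to the \emph{final} estimator $\widehat{\BF Z}_T=\E\left[\BF Z\mid\BF H_T\right]$, and conditions on $\BF H_T$ rather than on $\CAL H_{t-1}$. The cross terms then vanish because $\E\left[\BF Z'\BF S_T^k\mid\BF H_T\right]=\widehat{\BF Z}_T'\BF S_T^k=0$ while $\BF U_t'\BF S_T^k$ is $\BF H_T$-measurable, leaving for each direction $k$ the quantity
$$
\E\Big[\,\norm{\BF Z}\sum_{t=1}^T\big(\BF U_t'\BF S_T^k\big)^2+\frac{T}{\norm{\BF Z}}\big\{\big(\BF Z-\widehat{\BF Z}_T\big)'\BF S_T^k\big\}^2\Big]~,
$$
in which the factor $T$ multiplies the single \emph{final} squared estimation error rather than a sum of shrinking per-step variances. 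Lemma \ref{lemma:estimation-and-exploration} (Fiedler's inequality applied to the Gaussian posterior covariance) gives $\E\left[\{(\BF Z-\widehat{\BF Z}_T)'\BF S_T^k\}^2\mid\BF H_T\right]\geq 1/\big(r+\sum_t(\BF U_t'\BF S_T^k)^2\big)$, and the tradeoff collapses to a clean dichotomy: either the exploration $\sum_t(\BF U_t'\BF S_T^k)^2$ exceeds $\sqrt{T}$, or the second term is at least of order $T/(r+\sqrt{T})\geq\sqrt{T}/2$ with constant conditional probability (this is where $T\geq r^2$ enters). Each of the $r-1$ directions then contributes $\Omega(\sqrt{T})$. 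If you want to salvage your route, replace the per-step, rotating-frame decomposition by this fixed final frame; as written, the proposal has a genuine gap at its core.
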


%If $\BF Y^r$ denotes the standard multivariate Gaussian random vector with mean $\BF 0$ and identity covariance matrix $\BF I_r$, then the random vector $\BF Z$ in Theorem \ref{thm:lower-bound} has the same distribution as $\BF Y^r / \sqrt{r}$.   This shows that the expected length of the vector $\BF Z$ is bounded above by one, that is, $\E \left[ \norm{ \BF Z} \right] = 1$.  Loosely speaking, for a typical realization of $\BF Z$, the $T$-period cumulative regret is of order $T \, \E\left[ \norm{\BF Z} \right] =  T$.   It follows from Theorem \ref{thm:lower-bound} that when the dimension $r$ is of order $\sqrt{T}$, then the typical value of the regret and risk is also the minimum value under all policies.  

It suffices to establish the lower bound on the Bayes risk because  the regret bound  
follows immediately.   
Throughout this section, we assume that 
$\CAL U_r = \left\{ \BF u \in \Real^r : \norm{\BF u} = 1 \right\}$.
We fix an arbitrary policy $\BF \psi$ and for any $t \geq 1$, we let 
$\BF H_t= \left( {\BF U}_1, X_1,  {\BF U}_2, X_2, 
\ldots, {\BF U}_t, X_t \right)$ be the history up to time $t$.
We also let $\widehat{\BF Z}_t$ denote the least mean squares
estimator of $\BF Z$ given the history $\BF H_t$, that is,
$$
	\widehat{\BF Z}_t = \E \left[ \BF Z ~\big|~  \BF H_t \right]~.
$$
Let $\BF S_t^1, \ldots, \BF S_t^{r-1}$ denote a collection of 
orthogonal unit vectors that are also orthogonal to 
$\widehat{\BF Z}_t$.  Note that $\widehat{\BF Z}_t$ and $\BF S_t^1, \ldots, \BF S_t^{r-1}$
are functions of $\BF H_t$.  

%For any $T \geq 1$ and $k = 1, 2, \ldots, r-1$,
%let $\Explore^k_T$ and $\EstErr^k_T$ denote 
%the total amount of exploration and the error variance  of the estimator, respectively, in the direction $\BF S^k_T$ after $T$ periods, that is,
%$$
%		\Explore^k_T  = \sum_{t=1}^T  \left(\BF U_t^\prime    {\BF S}_T^k  \right)^2
%		\quad \textrm{and} \quad
%		\EstErr^k_T = \left\{ \left( \BF Z - \widehat{\BF Z}_T \right)^\prime {\BF S}_T^k \right\}^2~.
%$$
Since $\CAL U_r$ is the unit sphere,  $\max_{\BF u \in \CAL U_r} \BF u^\prime \BF z = \left( \BF z^\prime \BF z \right) / \norm{\BF z} = \norm{\BF z}$, for all $\BF z \in \Real^r$.  Thus, 
the risk in period $t$ is given by $\E \left[\, \norm{\BF Z} - \BF U_t^\prime \BF Z \, \right]$.
The following lemma establishes a lower
bound on the  cumulative risk  in terms of
the estimator error variance and
the total
amount of exploration along the directions $\BF S^1_T, \ldots, \BF S^{r-1}_T$.

\begin{lem}[Risk Decomposition] \label{lemma:regret-exploration-estimation}
For any $T \geq 1$, 
$$	
	{\rm Risk} \left( T, \BF \psi \right)
		~\geq~  \frac{1}{2} \sum_{k=1}^{r-1}   \E \left[  \norm{\BF Z}   \sum_{t=1}^T  \left(\BF U_t^\prime    {\BF S}_T^k  \right)^2 	 		+  \frac{T}{\norm{\BF Z}}   
					\left\{ \left( \BF Z - \widehat{\BF Z}_T \right)^\prime {\BF S}_T^k \right\}^2
		\right]~.
$$
\end{lem}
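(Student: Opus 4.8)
The plan is to derive the bound period by period and then sum, exploiting the defining property of the least mean squares estimator to annihilate a cross term in expectation. Since $\Risk(T,\BF\psi) = \sum_{t=1}^T \E\left[\norm{\BF Z} - \BF U_t^\prime \BF Z\right]$ and every arm satisfies $\norm{\BF U_t} = 1$, writing $\BF Z^* = \BF Z/\norm{\BF Z}$ for the optimal arm (the maximizer of $\BF v^\prime \BF Z$ over the sphere) yields the \emph{exact} identity
$$
\norm{\BF Z} - \BF U_t^\prime \BF Z \;=\; \norm{\BF Z}\left(1 - \BF U_t^\prime \BF Z^*\right) \;=\; \frac{\norm{\BF Z}}{2}\,\norm{\BF Z^* - \BF U_t}^2 ,
$$
using $\norm{\BF Z^* - \BF U_t}^2 = 2 - 2\,\BF U_t^\prime \BF Z^*$. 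First I would bound this from below by projecting $\BF Z^* - \BF U_t$ onto the $(r-1)$-dimensional subspace spanned by the orthonormal vectors $\BF S_T^1,\ldots,\BF S_T^{r-1}$, which only discards nonnegative terms:
$$
\norm{\BF Z^* - \BF U_t}^2 \;\geq\; \sum_{k=1}^{r-1}\left\{(\BF Z^* - \BF U_t)^\prime \BF S_T^k\right\}^2 .
$$

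Next I would rewrite the projection of the optimal arm. Because each $\BF S_T^k$ is orthogonal to $\widehat{\BF Z}_T$ by construction, $\BF Z^{*\prime}\BF S_T^k = \BF Z^\prime \BF S_T^k/\norm{\BF Z} = (\BF Z - \widehat{\BF Z}_T)^\prime \BF S_T^k / \norm{\BF Z}$, so the estimation error already appears. Summing the per-period bound over $t$ and expanding each square, the inner sum $\sum_{t=1}^T\{(\BF Z^* - \BF U_t)^\prime\BF S_T^k\}^2$ decomposes into three pieces: a pure-exploration piece $\sum_{t=1}^T (\BF U_t^\prime \BF S_T^k)^2$, an estimation-error piece $T\,(\BF Z^{*\prime}\BF S_T^k)^2 = (T/\norm{\BF Z}^2)\{(\BF Z - \widehat{\BF Z}_T)^\prime \BF S_T^k\}^2$, and a cross term $-2\,(\BF Z^{*\prime}\BF S_T^k)\sum_{t=1}^T \BF U_t^\prime \BF S_T^k$. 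Carrying the factor $\norm{\BF Z}$ through, the first two pieces reproduce exactly $\E[\norm{\BF Z}\sum_t(\BF U_t^\prime\BF S_T^k)^2]$ and $\E[(T/\norm{\BF Z})\{(\BF Z-\widehat{\BF Z}_T)^\prime\BF S_T^k\}^2]$, which are the two terms claimed in the lemma up to the overall factor $\tfrac12$ — provided the cross term has zero expectation.

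Establishing that the cross term vanishes in expectation is the crux. After multiplying by $\norm{\BF Z}$ it equals $-2\,(\BF Z - \widehat{\BF Z}_T)^\prime \BF S_T^k \cdot \sum_{t=1}^T \BF U_t^\prime \BF S_T^k$, since $\norm{\BF Z}\,\BF Z^{*\prime}\BF S_T^k = (\BF Z-\widehat{\BF Z}_T)^\prime\BF S_T^k$. The key observation is that $\BF S_T^k$ is a function of $\widehat{\BF Z}_T$, while each $\BF U_t$ with $t\leq T$ is chosen from $\BF H_{t-1}\subseteq \BF H_T$; hence both $\BF S_T^k$ and $\sum_{t=1}^T \BF U_t^\prime \BF S_T^k$ are measurable with respect to $\BF H_T$. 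Conditioning on $\BF H_T$ and invoking the defining property $\E[\BF Z - \widehat{\BF Z}_T \mid \BF H_T] = \BF 0$ of the least mean squares estimator forces the conditional expectation of the cross term to be zero, and the tower property propagates this to the unconditional expectation. Summing over $k = 1,\ldots,r-1$ and collecting terms then gives the stated inequality.

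The main obstacle I anticipate is entirely in the bookkeeping of this conditioning step: verifying the timing so that $\BF S_T^k$ and all of $\BF U_1,\ldots,\BF U_T$ are genuinely $\BF H_T$-measurable, leaving only $\BF Z - \widehat{\BF Z}_T$ random after conditioning. A little measure-theoretic care is also warranted on the null events $\BF Z = \BF 0$ or $\widehat{\BF Z}_T = \BF 0$, where $\BF Z^*$ or the distinguished basis direction is undefined, but these have probability zero under the Gaussian prior and do not affect any expectation. Everything else in the argument is either an exact identity or an elementary projection inequality.
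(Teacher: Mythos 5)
Your proposal is correct and follows essentially the same route as the paper's proof: the exact identity $\norm{\BF Z} - \BF U_t^\prime\BF Z = \tfrac{\norm{\BF Z}}{2}\norm{\BF U_t - \BF Z/\norm{\BF Z}}^2$, the projection inequality onto the orthonormal vectors $\BF S_T^1,\ldots,\BF S_T^{r-1}$, the substitution $\BF Z^\prime\BF S_T^k = (\BF Z - \widehat{\BF Z}_T)^\prime\BF S_T^k$ via $\widehat{\BF Z}_T^\prime\BF S_T^k = 0$, and the elimination of the cross term by conditioning on $\BF H_T$ and using $\E[\BF Z\mid\BF H_T] = \widehat{\BF Z}_T$ together with the $\BF H_T$-measurability of $\BF U_t$ and $\BF S_T^k$. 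The only difference is cosmetic (you sum over $t$ before expanding the square, the paper expands per period), so nothing further is needed.
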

\begin{proof} Using the fact that for any two unit vectors $\BF w$ and $\BF v$, $ 1 - \BF w^\prime \BF v = \norm{\BF w - \BF v}^2/2$, 
the instantaneous regret in period $t$ satisfies
$$
	\norm{\BF Z} - \BF U_t^\prime \BF Z 
	= \norm{\BF Z} \left( 1 - \BF U_t^\prime \frac{\BF Z}{\norm{\BF Z}} \right)
	= \frac{\norm{\BF Z}}{2} \norm{ \BF U_t -  \frac{\BF Z}{\norm{\BF Z}} }^2 
	\geq \frac{\norm{\BF Z}}{2}  
		\sum_{k=1}^{r-1} \left\{ 
	\left( \BF U_t -  \frac{\BF Z}{\norm{\BF Z}} \right)^\prime \BF S^{k}_T \right\}^2~,
$$
where the inequality follows from the fact that 
$\BF S_T^1,\ldots, \BF S_T^{r-1}$ are orthogonal unit vectors.  Therefore,
the cumulative conditional risk satisfies
\bex
2 \, \sum_{t=1}^T \E \left[ \norm{\BF Z} - \BF U_t^\prime \BF Z ~\Big|~ \BF H_T \right] 
		&\geq& \sum_{t=1}^T \E \left[
		\norm{\BF Z} 	
		\sum_{k=1}^{r-1} \left\{ 
	\left( \BF U_t -  \frac{\BF Z}{\norm{\BF Z}} \right)^\prime \BF S^{k}_T \right\}^2
	~\Big|~
		\BF H_T \right]\\
		&=& 
		\sum_{t=1}^T \sum_{k=1}^{r-1} \E \left[
		\norm{\BF Z} 	
	 \left\{ 
	\left( \BF U_t -  \frac{\BF Z}{\norm{\BF Z}} \right)^\prime \BF S^{k}_T \right\}^2
	~\Big|~
		\BF H_T \right]\\
	&=&  \sum_{t=1}^T \sum_{k=1}^{r-1} \E \left[
		\norm{\BF Z}  \left( \BF U_t^\prime   \BF S_T^k  \right)^2
		- 2 \left( \BF U_t^\prime   \BF S_T^k \right) \left( \BF Z^\prime  \BF S_T^k \right)
		+ \frac{ \left( \BF Z^\prime \BF S_T^k \right)^2}{\norm{\BF Z}}  ~\Big|~
		\BF H_T \right]~,
\eex
with probability one. From the definition of $\BF S^k_T$, we have $\widehat{\BF Z}_T^\prime \BF S_T^k = 0$ for $k = 1 \ldots, r-1$.  Therefore, for $t\leq T$,
$$
\E \left[  \left( \BF U_t^\prime   \BF S_T^k \right) \left( \BF Z^\prime  \BF S_T^k \right)
				~\Big|~ \BF H_T \right]
	=    \left( \BF U_t^\prime   \BF S_T^k \right) \E \left[ \BF Z^\prime 
				~\Big|~ \BF H_T \right]  \BF S_T^k 
	=   \left( \BF U_t^\prime   \BF S_T^k \right)\widehat{\BF Z}_T^\prime \BF S_T^k = 0~,
$$
which eliminates the middle term in the summand above.  Furthermore, 
we see that $\BF Z^\prime {\BF S}_T^k =  \left( \BF Z - \widehat{\BF Z}_T \right)^\prime {\BF S}_T^k$ for all $k$.  Thus, with probability one,
\bex
\sum_{t=1}^T \E \left[ \norm{\BF Z} - \BF U_t^\prime \BF Z ~\Big|~ \BF H_T \right] 
	&\geq&  \frac{1}{2} \sum_{k=1}^{r-1} \E \left[
 \norm{\BF Z}   \sum_{t=1}^T  \left(\BF U_t^\prime    {\BF S}_T^k  \right)^2 	 		+  \frac{T}{\norm{\BF Z}}   
					\left\{ \left( \BF Z - \widehat{\BF Z}_T \right)^\prime {\BF S}_T^k \right\}^2 ~\Big|~
		\BF H_T \right]~,
\eex
and the desired result follows by taking the expectation of both sides.
\end{proof}

Since 
${\BF S}_T^k$ is orthogonal to 
$\widehat{\BF Z}_T$,
we
can interpret $\sum_{t=1}^T \left(\BF U_t^\prime    {\BF S}^k_T  \right)^2$
and  $\left\{ \left( \BF Z - \widehat{\BF Z}_T \right)^\prime {\BF S}_T^k \right\}^2$
as the total amount of
exploration over $T$ periods
and the squared estimation error, respectively, in the direction $\BF S^k_T$.
Thus, 
Lemma \ref{lemma:regret-exploration-estimation} tells us that the  
cumulative risk is bounded below by the sum of the squared estimation error and the 
total amount of exploration in the past $T$ periods. This result suggests an approach for establishing a lower bound on the risk.  If the 
amount of exploration
$\sum_{t=1}^T \left(\BF U_t^\prime    {\BF S}_T^k \right)^2$ is large, then the risk will be large.
On the other hand, if the amount of exploration is small, we expect significant
estimation errors, which in turn imply large risk.  This intuition is made precise in Lemma \ref{lemma:estimation-and-exploration}, which relates the squared estimation
error and the amount of exploration.   

\begin{lem}[Little Exploration Implies Large Estimation Errors] \label{lemma:estimation-and-exploration}
For any $k$ and $T \geq 1$,
$$
	\E \left[  \left\{ \left( \BF Z - \widehat{\BF Z}_T \right)^\prime {\BF S}_T^k \right\}^2
	 			~\bigg|~ \BF H_T \right]
		~\geq~ \frac{1}{r + \sum_{t=1}^T \left( \BF U_t^\prime \BF S^k_T \right)^2  }~,
$$
with probability one.
\end{lem}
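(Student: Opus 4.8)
The plan is to exploit the fact that everything in sight is jointly Gaussian, so that the posterior distribution of $\BF Z$ given $\BF H_T$ is again Gaussian and the conditional expectation on the left collapses to a single quadratic form in the posterior covariance matrix. Since $\widehat{\BF Z}_T = \E[\BF Z \mid \BF H_T]$ and $\BF S_T^k$ is a function of $\BF H_T$, I would first observe that, with $\BF S_T^k$ treated as a fixed unit vector under the conditioning,
$$
\E\left[\{(\BF Z - \widehat{\BF Z}_T)^\prime \BF S_T^k\}^2 ~\big|~ \BF H_T\right] = (\BF S_T^k)^\prime \,\Cov\!\left(\BF Z \mid \BF H_T\right)\, \BF S_T^k .
$$
This reduces the problem to lower-bounding a quadratic form in the posterior covariance matrix.

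The next step is to identify that covariance explicitly. Writing the model as $X_t = \BF U_t^\prime \BF Z + W_t$ with $W_t$ standard normal and prior $\BF Z \sim N(\BF 0, \BF I_r/r)$, I would factor the likelihood sequentially: because $\BF U_t = \psi_t(\BF H_{t-1})$ is determined by the past, $p(\BF H_T \mid \BF Z) = \prod_{t=1}^T p(X_t \mid \BF Z, \BF H_{t-1}) \propto \exp(-\tfrac{1}{2}\sum_{t=1}^T (X_t - \BF U_t^\prime \BF Z)^2)$. Multiplying by the Gaussian prior and completing the square in $\BF Z$ shows that the posterior is Gaussian with precision matrix
$$
\Lambda_T = r\,\BF I_r + \sum_{t=1}^T \BF U_t \BF U_t^\prime ,
$$
so that $\Cov(\BF Z \mid \BF H_T) = \Lambda_T^{-1}$. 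The feature worth emphasizing here is that, although the arms are chosen adaptively, the posterior covariance depends only on the realized design points $\BF U_1, \ldots, \BF U_T$ and not on the observed reward values --- a hallmark of the Gaussian model --- so $\Lambda_T$ is a fixed matrix once we condition on $\BF H_T$.

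Finally, I would bound $(\BF S_T^k)^\prime \Lambda_T^{-1} \BF S_T^k$ from below using the elementary inequality $\BF v^\prime \BF A^{-1} \BF v \geq 1/(\BF v^\prime \BF A \BF v)$, valid for any positive definite $\BF A$ and any unit vector $\BF v$; this follows from the Cauchy--Schwarz inequality applied to $\BF A^{1/2}\BF v$ and $\BF A^{-1/2}\BF v$ (equivalently, from Jensen's inequality for $\lambda \mapsto 1/\lambda$ in the eigenbasis of $\BF A$). Taking $\BF v = \BF S_T^k$ and $\BF A = \Lambda_T$ and using $\norm{\BF S_T^k} = 1$ gives
$$
(\BF S_T^k)^\prime \Lambda_T \BF S_T^k = r + \sum_{t=1}^T \left( \BF U_t^\prime \BF S_T^k \right)^2 ,
$$
and the claimed bound follows at once. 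The one step that calls for genuine care, as opposed to routine computation, is the justification of the posterior-covariance formula under an adaptive policy: a priori the feedback loop makes each $\BF U_t$ depend on $\BF Z$ through the earlier rewards, so one cannot simply invoke the fixed-design Bayesian regression formula. The sequential factorization of the likelihood is precisely what resolves this, and I would take care to spell it out rather than assert it.
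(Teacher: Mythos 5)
Your proof is correct and follows essentially the same route as the paper: both arguments reduce the left-hand side to the quadratic form $(\BF S_T^k)^\prime\bigl(r\,\BF I_r+\sum_{t=1}^T\BF U_t\BF U_t^\prime\bigr)^{-1}\BF S_T^k$ via the Gaussian posterior covariance, and both then invoke the inequality $\BF v^\prime\BF A^{-1}\BF v\geq 1/(\BF v^\prime\BF A\BF v)$ for a unit vector $\BF v$ --- the paper packages this as Fiedler's inequality for the $k$-th diagonal entry after an explicit change of basis $\BF V$, whereas you apply the Cauchy--Schwarz form directly to $\BF S_T^k$, which is a mild streamlining that avoids constructing the block matrix $\BF A$. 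Your explicit sequential factorization of the likelihood to justify the posterior-covariance formula under an adaptive policy is a point the paper handles only by citation, and it is worth spelling out as you do.
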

\begin{proof}
Let $\BF Q_T = \widehat{\BF Z}_T \big/ \|\widehat{\BF Z}_T \|$.  For any $t$, we have that
$\BF U_t = \sum_{k=1}^{r-1} \left( \BF U_t^\prime \BF S_T^k \right) \BF S_T^k  + \left( \BF U_t^\prime \BF Q_T \right) \BF Q_T $.  Let 
$$
\BF V = \left[  \BF S_T^1 \;\;  \BF S_T^2 \;\; \cdots \;\; \BF S_T^{r-1} \;\; \BF Q_T \right]
$$
be an $r \times r$ orthonormal matrix whose columns are the vectors $\BF S_T^1, \ldots, \BF S_T^{r-1}$, and $\BF Q_T$, respectively.
 Then, it is easy to verify that
 $$
 	\sum_{t=1}^T \BF U_t \BF U_t^\prime  =  \BF V \BF A\BF V^\prime~,
 $$
where 
$
		\BF A
		=
		\left(
		\begin{array}{cc}
		\BF \Sigma & \BF c \\
		\BF c^\prime & a
		\end{array}
		\right)~,
$
is an $r \times r$ matrix,
with $a = \BF Q_T^\prime \left(\sum_{t=1}^T  \BF U_t \BF U_t^\prime  \right) \BF Q_T$, $\BF c$ is an $(r-1)$-dimensional
column vector,
and where $\BF \Sigma$ is an $(r-1) \times (r-1)$ matrix with 
$
		\BF \Sigma_{k\ell} = 		\left( \BF S^k_T \right)^\prime \left( \sum_{t=1}^T \BF U_t \BF U_t^\prime \right) \BF S_T^{\ell} = \sum_{t=1}^T  \left( \BF U_t^\prime  \BF S_T^{k} \right) 
		\left( \BF U_t^\prime  \BF S_T^{\ell} \right)
$ for $k,\ell = 1, \ldots, r-1$,

Since $\BF Z$ has a multivariate normal prior distribution with covariance matrix $\BF I_r / r$, it is a standard result (use, for example, Corollary E.3.5 in Appendix E in \citealp{Bertsekas1:1995}) that 
$$
	\E \left[  \left( \BF Z - \widehat{\BF Z}_T \right) \left( \BF Z - \widehat{\BF Z}_T \right)^\prime
	 			~\bigg|~ \BF H_T \right]
				= \left( r \, \BF I_r +  \sum_{t=1}^T \BF U_t \BF U_t^\prime \right)^{-1}
				= \BF V \left( r \, \BF I_r + \BF A \right)^{-1} \BF V^\prime~.
$$
Since $\BF S^k_T$ is a function of $\BF H_T$ and
$\BF V^\prime \BF S^k_T = \BF e_k$, we have, for $k\leq r-1$, that
\bex
	\E \left[  \left\{ \left( \BF Z - \widehat{\BF Z}_T \right)^\prime {\BF S}_T^k \right\}^2
	 			~\bigg|~ \BF H_T \right]
				&=& \left( \BF V^\prime \BF S_T^k \right)^\prime  \left( r \BF I_r + \BF A \right)^{-1}
 \left( \BF V^\prime \BF S_T^k \right) 
 				~=~ \left[ \left( r \,\BF I_{r}  + \BF A \right)^{-1} \right]_{kk}\\
 &\geq& \frac{1}{ \left( r \, \BF I_{r}  + \BF A \right)_{kk}} = \frac{1}{r + \sum_{t=1}^T \left(\BF U_t^\prime \BF S^k_T \right)^2}~,
\eex
%$$
%	\E \left[  \left\{ \left( \BF Z - \widehat{\BF Z}_T \right)^\prime {\BF S}_T^k \right\}^2
%	 			~\bigg|~ \BF H_T \right]
% 				~=~ \left[ \left( r \,\BF I_{r}  + \BF A \right)^{-1} \right]_{kk}\\
% \geq \frac{1}{ \left( r \, \BF I_{r}  + \BF A \right)_{kk}} = \frac{1}{r + \sum_{t=1}^T \left(\BF U_t^\prime \BF S^k_T \right)^2}~,
%$$
where the inequality follows from Fiedler's Inequality (see, for example, Theorem 2.1 in \citealp{FiedlerP:1997}), and the final
equality follows from the definition of $\BF A$.
\end{proof}

The next lemma gives a  lower bound on the probability that
$\BF Z$ is bounded away from the origin.  The proof follows from simple
calculations involving normal densities, and the details are given in Appendix  \ref{section:proof-gaussian-lower-bound}. 

\begin{lem} \label{lemma:gaussian-lower-bound} For any $\theta \leq 1/2$ and $\beta > 0$,
$\Pr \left\{ \theta \leq \norm{ \BF Z} \leq \beta \right\} \geq 1 - 4 \theta^2 - \frac{1}{\beta^2}~$.
\end{lem}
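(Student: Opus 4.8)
The plan is to control the two tail events separately and then combine them, using the decomposition
$$
\Pr\{\theta \le \norm{\BF Z} \le \beta\} = 1 - \Pr\{\norm{\BF Z} < \theta\} - \Pr\{\norm{\BF Z} > \beta\},
$$
which holds because the three events partition the sample space. Thus it suffices to show $\Pr\{\norm{\BF Z} > \beta\} \le 1/\beta^2$ and $\Pr\{\norm{\BF Z} < \theta\} \le 4\theta^2$.

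The upper-tail bound is immediate from Markov's inequality. Since $\BF Z \sim N(\BF 0, \BF I_r/r)$, we have $\E[\norm{\BF Z}^2] = \tr(\BF I_r/r) = 1$, so applying Markov to the nonnegative variable $\norm{\BF Z}^2$ gives $\Pr\{\norm{\BF Z} > \beta\} \le \Pr\{\norm{\BF Z}^2 \ge \beta^2\} \le \E[\norm{\BF Z}^2]/\beta^2 = 1/\beta^2$, as needed.

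For the lower tail I would bound the Gaussian density directly on the ball $\{\norm{\BF z} < \theta\}$. The density of $\BF Z$ is $f(\BF z) = (r/2\pi)^{r/2}\exp(-r\norm{\BF z}^2/2)$, whose maximum over all of $\Real^r$ is attained at the origin and equals $(r/2\pi)^{r/2}$. Integrating this constant bound over the ball of radius $\theta$, whose volume is $\pi^{r/2}\theta^r/\Gamma(r/2+1)$, yields
$$
\Pr\{\norm{\BF Z} < \theta\} \;\le\; \frac{(r/2)^{r/2}}{\Gamma(r/2+1)}\,\theta^r.
$$
Using $\theta \le 1/2$ and $r \ge 2$, I would then split $\theta^r = \theta^2\,\theta^{r-2} \le \theta^2 (1/2)^{r-2}$, so the claim reduces to showing that the dimension-dependent constant $C(r) = (r/2)^{r/2}(1/2)^{r-2}/\Gamma(r/2+1)$ is at most $4$ for every $r \ge 2$.

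The main obstacle is exactly this uniform control of $C(r)$: the factor $(r/2)^{r/2}/\Gamma(r/2+1)$ grows exponentially in $r$, so one must verify that the geometric decay supplied by $(1/2)^{r-2}$ (which is where the hypothesis $\theta \le 1/2$ enters) overwhelms it. Substituting $m = r/2 \ge 1$ turns the constant into $C(r) = 4\,(m/4)^m/\Gamma(m+1)$, and the Stirling lower bound $\Gamma(m+1) \ge \sqrt{2\pi m}\,(m/e)^m$, valid for $m \ge 1$, gives $C(r) \le 4\,(e/4)^m/\sqrt{2\pi m}$. Since $e/4 < 1$, the right-hand side is decreasing in $m$ and bounded by its value at $m = 1$, namely $4\cdot(e/4)/\sqrt{2\pi} < 4$, which establishes $\Pr\{\norm{\BF Z} < \theta\} \le 4\theta^2$ with room to spare and completes the proof.
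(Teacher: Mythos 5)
Your proof is correct. The overall skeleton---writing $\Pr\{\theta \le \norm{\BF Z}\le \beta\} = 1 - \Pr\{\norm{\BF Z}<\theta\} - \Pr\{\norm{\BF Z}>\beta\}$ and disposing of the upper tail by Markov's inequality applied to $\norm{\BF Z}^2$ (whose mean is $1$)---is exactly what the paper does. Where you genuinely diverge is the lower tail, which is the substantive half of the lemma. The paper rescales to a standard normal vector $\BF Y$, views $\Pr\{\norm{\BF Y}^2 < \theta^2 r\}$ as a lower-tail chi-square probability, and applies a Chernoff bound using $\E[e^{-\lambda Y_k^2}] = (1+2\lambda)^{-1/2}$ with $\lambda = 1/\theta^2$, arriving at $(e\theta/\sqrt{2+\theta^2})^r \le (e\theta/\sqrt{2})^2 = e^2\theta^2/2 \le 4\theta^2$. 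You instead bound the density by its maximum $(r/2\pi)^{r/2}$ and multiply by the volume of the ball of radius $\theta$, which leaves you the task of taming the dimension-dependent constant $(r/2)^{r/2}/\Gamma(r/2+1)$; your Stirling estimate handles this correctly for all $m = r/2 \ge 1$, since $\Gamma(m+1)\ge\sqrt{2\pi m}\,(m/e)^m$ holds there and $e/4<1$ makes the resulting bound decreasing in $m$. Both arguments spend the hypotheses $\theta\le 1/2$ and $r\ge 2$ in the same place (converting $\theta^r$ into $\theta^2$ times a uniformly bounded factor), and your route even yields the slightly sharper constant $e/\sqrt{2\pi}\approx 1.08$ in place of the paper's $e^2/2\approx 3.69$. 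In short: the paper's lower-tail bound trades in moment generating functions, yours trades in volume formulas plus Stirling; either works, and neither has a gap.
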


The last lemma establishes a lower bound on the sum of the total amount of exploration and the squared estimation error, which is also the minimum cumulative Bayes risk along the direction $\BF S^k_T$ by Lemma \ref{lemma:regret-exploration-estimation}.

\begin{lem}[Minimum Directional Risk] \label{lemma:sum-of-exploration-and-estimation} For {$k=1,\ldots,r-1$,} and $T \geq r^2$,
$$
 \E \left[  \norm{\BF Z}   \sum_{t=1}^T  \left(\BF U_t^\prime    {\BF S}_T^k  \right)^2 	 		+  \frac{T}{\norm{\BF Z}}   
					\left\{ \left( \BF Z - \widehat{\BF Z}_T \right)^\prime {\BF S}_T^k \right\}^2
	 \right]
	\geq 0.027 \sqrt{T}~.			
$$
\end{lem}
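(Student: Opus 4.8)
The plan is to fix $k$ and write $E = \sum_{t=1}^T (\BF U_t^\prime \BF S_T^k)^2$ for the exploration in direction $\BF S_T^k$ and $D = (\BF Z - \widehat{\BF Z}_T)^\prime \BF S_T^k$ for the directional estimation error. Both $E$ and $\BF S_T^k$ are functions of $\BF H_T$, and since the posterior of $\BF Z$ given $\BF H_T$ is Gaussian, $D$ is a zero-mean Gaussian given $\BF H_T$, with variance $\sigma^2 = \E[D^2 \mid \BF H_T]$ and fourth moment $\E[D^4 \mid \BF H_T] = 3\sigma^4$. I would split the target expectation using the $\BF H_T$-measurable event $B = \{E \ge \sqrt T\}$, retaining the exploration term $\norm{\BF Z} E$ on $B$ and the estimation term $(T/\norm{\BF Z})D^2$ on $B^c$ and discarding the other (nonnegative) piece in each case. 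The threshold $\sqrt T$ is chosen so that on $B^c$ Lemma \ref{lemma:estimation-and-exploration} together with $T \ge r^2$ (hence $r \le \sqrt T$) gives $\sigma^2 \ge 1/(r+E) \ge 1/(r+\sqrt T) \ge 1/(2\sqrt T)$, while on $B$ the exploration already exceeds $\sqrt T$.

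For the exploration term, on $B$ we have $\norm{\BF Z} E \ge \sqrt T\,\norm{\BF Z}$, and I would lower bound $\E[\norm{\BF Z}\,\indicator_B]$ using only the lower tail of $\norm{\BF Z}$: bounding $\norm{\BF Z} \ge \theta\,\indicator_{\{\norm{\BF Z}\ge\theta\}}$ and invoking Lemma \ref{lemma:gaussian-lower-bound} (with $\beta \to \infty$) to get $\Pr(\norm{\BF Z} < \theta) \le 4\theta^2$, which yields $\E[\norm{\BF Z} E\,\indicator_B] \ge \theta\sqrt T\,(\Pr(B) - 4\theta^2)$. This step is clean precisely because it never couples $\norm{\BF Z}$ to anything history-dependent beyond the scalar $\Pr(B)$.

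The delicate part, which I expect to be the main obstacle, is the estimation term: $\norm{\BF Z}$ and $D$ are dependent given $\BF H_T$ (the posterior covariance is not diagonal in the frame $\BF S_T^1,\dots,\BF S_T^{r-1},\widehat{\BF Z}_T/\norm{\widehat{\BF Z}_T}$), so I cannot simply replace $1/\norm{\BF Z}$ by a constant and factor out $\E[D^2\mid\BF H_T]$. My plan is to decouple through the triangle inequality $\norm{\BF Z} \le \norm{\widehat{\BF Z}_T} + \norm{\BF G}$, with $\BF G = \BF Z - \widehat{\BF Z}_T$: on the $\BF H_T$-measurable event $\{\norm{\widehat{\BF Z}_T} \le \beta/2\}$ and the conditional event $\{\norm{\BF G}\le\beta/2\}$ one has $\norm{\BF Z}\le\beta$, so $(T/\norm{\BF Z})D^2 \ge (T/\beta)\,D^2\,\indicator_{\{\norm{\BF G}\le\beta/2\}}$. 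The virtue of this split is that the fluctuation $\norm{\BF G}$ is controllable uniformly in the history: since $\E[\norm{\BF G}^2 \mid \BF H_T] = \tr\,(r\BF I_r + \sum_t \BF U_t \BF U_t^\prime)^{-1} \le 1$, a conditional Markov bound gives $\Pr(\norm{\BF G} > \beta/2 \mid \BF H_T) \le 4/\beta^2$, and then a conditional Cauchy--Schwarz step using $D^2 \le \norm{\BF G}^2$ and $\E[D^4\mid\BF H_T]=3\sigma^4$ shows $\E[D^2\,\indicator_{\{\norm{\BF G}\le\beta/2\}}\mid\BF H_T] \ge \sigma^2(1 - c/\beta)$ for an absolute constant $c$. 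Combining this with $\sigma^2 \ge 1/(2\sqrt T)$ on $B^c$ and with $\Pr(\norm{\widehat{\BF Z}_T} > \beta/2) \le 4/\beta^2$ (from $\E\norm{\widehat{\BF Z}_T}^2 \le \E\norm{\BF Z}^2 = 1$ and Markov) produces an estimation contribution of order $(\sqrt T/\beta)\,(\Pr(B^c) - O(1/\beta^2))$.

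Finally I would add the two contributions to obtain a bound of the form $\sqrt T\,[\,\theta(\Pr(B) - 4\theta^2) + (1/(4\beta))(\Pr(B^c) - O(1/\beta^2))\,]$. Since the policy controls $\Pr(B)$, the estimate must hold for every value of $\Pr(B) \in [0,1]$; choosing $\theta$ proportional to $1/\beta$ makes the coefficients of $\Pr(B)$ and $\Pr(B^c)=1-\Pr(B)$ equal, cancelling the dependence on $\Pr(B)$ and reducing the problem to maximizing a one-variable expression of the form $\theta - (\text{const})\,\theta^3$ over admissible $\theta$ (subject to $\theta \le 1/2$ and $\beta$ large enough that $1 - c/\beta$ is bounded away from $0$). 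Carefully optimizing the absolute constants in this final scalar calculation is what yields the explicit value $0.027$.
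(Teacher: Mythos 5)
Your proposal is correct, and the constants do work out: taking, say, $\beta=10$ gives $\theta=(1-2\sqrt{3}/\beta)/(2\beta)\approx 0.033$ and a final bound of about $0.031\sqrt{T}\geq 0.027\sqrt{T}$. It shares the paper's skeleton --- split on the $\BF H_T$-measurable event $\{\Xi\geq\sqrt{T}\}$, invoke Lemma \ref{lemma:estimation-and-exploration} together with $T\geq r^2$ to get the conditional variance bound $\Theta\geq 1/(2\sqrt{T})$ on the complement, and use the Gaussian lower tail $\Pr\{\norm{\BF Z}<\theta\}\leq 4\theta^2$ --- but it executes the crux step (the coupling between $\norm{\BF Z}$ and the directional error given $\BF H_T$) differently. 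The paper restricts both terms to $\{\theta\leq\norm{\BF Z}\leq\beta\}$, introduces the auxiliary event $\{\Gamma\geq\eta\,\Theta\}$, whose conditional probability is exactly $2(1-\Phi(\sqrt{\eta}))$ by the conditional normality of $(\BF Z-\widehat{\BF Z}_T)^\prime\BF S_T^k$, and applies Bonferroni once to the joint event; the price is the factor $\eta$ and a final coefficient $\min\{\theta,\eta/2\beta\}$. You instead decouple $\norm{\BF Z}$ from $D$ via $\norm{\BF Z}\leq\|\widehat{\BF Z}_T\|+\norm{\BF G}$, control the two pieces by (conditional) Markov using $\E[\norm{\BF G}^2\mid\BF H_T]\leq 1$ and $\E[\|\widehat{\BF Z}_T\|^2]\leq\E[\norm{\BF Z}^2]=1$, recover almost the full conditional variance after truncation via Cauchy--Schwarz with $\E[D^4\mid\BF H_T]=3\sigma^4$, and then equalize the coefficients of $\Pr(B)$ and $\Pr(B^c)$ rather than taking a minimum. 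Your route avoids the $\eta$ loss and uses only second and fourth conditional moments of $D$ rather than its exact distribution; the paper's route is shorter but gives up a constant factor through $\eta$. One cosmetic remark: the inequality $D^2\leq\norm{\BF G}^2$ that you mention is not actually needed for the Cauchy--Schwarz step --- the conditional fourth-moment identity alone suffices.
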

We note that if $\norm{\BF Z}$ were a constant, rather than a random variable,  Lemma \ref{lemma:sum-of-exploration-and-estimation} would follow immediately. Hence, most of the work in the proof below involves constraining
$\norm{\BF Z}$ to a certain range $[\theta,\beta]$.
\begin{proof} Consider an arbitrary $k$, and let
$
	\Xi  =  \sum_{t=1}^T  \left(\BF U_t^\prime    {\BF S}_T^k \right)^2 
$,
%	\quad \textrm{and} \quad
$	\Gamma =  \left\{ \left( \BF Z - \widehat{\BF Z}_T \right)^\prime {\BF S}_T^k \right\}^2
$.
Our proof will make use of positive constants $\theta$, $\beta$, and $\eta$,
whose values will be chosen later.  Note that
\bex
 \E \left[  \norm{\BF Z}  \Xi  + 
					\frac{T \,   \Gamma }{\norm{\BF Z}}
	 			~\Big|~ \BF H_T \right]
	&\geq&   \E \left[ \left( \norm{\BF Z} \Xi  + 
					\frac{T \,  \Gamma }{ \norm{\BF Z} } \right)
					\indicator_{\left\{ \theta  ~\leq~ \norm{\BF Z} ~\leq~ \beta \right\}}
								\indicator_{\left\{ \Xi  ~\geq~ \sqrt{T} \right\}}
	 			~\Big|~ \BF H_T \right]\\
	&& ~+~   \E \left[   \left( \norm{\BF Z} \Xi  + 
					\frac{T \, \Gamma}{ \norm{\BF Z} } \right)
					\indicator_{\left\{ \theta  ~\leq~ \norm{\BF Z} ~\leq~ \beta \right\}}
								\indicator_{\left\{ \Xi  ~<~ \sqrt{T} \right\}}
	 			~\Big|~ \BF H_T  \right]\\
	&\geq&  \theta \sqrt{T} \,  \indicator_{\left\{ \Xi  ~\geq~ \sqrt{T } \right\}} \E \left[ 					\indicator_{\left\{ \theta  ~\leq~ \norm{\BF Z} ~\leq~ \beta  \right\}}
	 			~\big|~ \BF H_T \right]\\
	&& ~+~   \frac{T}{\beta} \,\indicator_{\left\{  \Xi  ~<~ \sqrt{T} \right\}} \E \left[    
					  \Gamma  \,
					\indicator_{\left\{ \theta  ~\leq~ \norm{\BF Z} ~\leq~ \beta \right\}}
	 			~\big|~ \BF H_T  \right]~,
\eex
where we use the fact that $\Xi$ is a function of $\BF H_T$ in the final inequality.
We will now lower bound the last term on  the right hand side of the above inequality.  Let $\Theta = \E \left[ \Gamma ~\big|~ \BF H_T \right]$.
Since $\Theta$ is  a function of
$\BF H_T$, 
\bex
\frac{T}{\beta} \,\indicator_{\left\{  \Xi  ~<~ \sqrt{T} \right\}}
\E \left[   \Gamma  \,
					\indicator_{\left\{ \theta ~\leq~ \norm{\BF Z} ~\leq~ \beta \right\}}
	 			~\big|~ \BF H_T  \right]
	&\geq&  \frac{T}{\beta} \,\indicator_{\left\{  \Xi  ~<~ \sqrt{T} \right\}} \E \left[   \Gamma  \, \, 
					\indicator_{\left\{ \theta  ~\leq~ \norm{\BF Z} ~\leq~ \beta \right\}}
								\indicator_{\left\{ \Gamma ~\geq~  \eta \, \Theta \right\}}
	 			~\big|~ \BF H_T \right]\\
	&\geq&  \frac{\eta \, T }{\beta} \, \Theta \,\indicator_{\left\{  \Xi  ~<~ \sqrt{T} \right\}} \E \left[  
					\indicator_{\left\{ \theta  ~\leq~ \norm{\BF Z} ~\leq~ \beta \right\}}
								\indicator_{\left\{  \Gamma ~\geq~  \eta \, \Theta \right\}}
	 			~\Big|~ \BF H_T \right]\\
	&\geq& \frac{ \eta \,   \sqrt{T}  }{2 \beta }  \indicator_{\left\{ \Xi  ~<~ \sqrt{T} \right\}} 	
	\E \left[   \indicator_{\left\{ \theta ~\leq~ \norm{\BF Z} ~\leq~ \beta  \right\}}
								\indicator_{\left\{  \Gamma ~\geq~  \eta \, \Theta \right\}}
	 			~\Big|~ \BF H_T \right]~,
\eex
where the last inequality follows from Lemma \ref{lemma:estimation-and-exploration} which
implies that, with probability one, 
$$
 \frac{\eta \, T }{\beta} \, \Theta \,\indicator_{\left\{  \Xi  ~<~ \sqrt{T} \right\}}
\geq
\frac{ \eta \, T  }{\beta } \cdot \frac{ 1}{r + \Xi}  \indicator_{\left\{ \Xi  ~<~ \sqrt{T} \right\}} 
\geq
\frac{ \eta \, T  }{\beta} \cdot \frac{ 1}{r + \sqrt{T}}  \indicator_{\left\{ \Xi  ~<~ \sqrt{T} \right\}}
\geq
\frac{ \eta \,   \sqrt{T}  }{2 \beta }   \indicator_{\left\{ \Xi  ~<~ \sqrt{T} \right\}}~,
$$
and where the last inequality follows from the fact that $T \geq r^2$, and thus, $1 / \left( r + \sqrt{T} \right) \geq 1/ (2 \sqrt{T})$.

Putting everything together, we obtain
\bex
\E \left[  \norm{\BF Z}  \Xi  + 
					\frac{T \,   \Gamma }{\norm{\BF Z}}
	 			~\Big|~ \BF H_T \right]
	&\geq&  \theta \sqrt{T} \,  \indicator_{\left\{ \Xi  ~\geq~ \sqrt{T } \right\}} \E \left[ 					\indicator_{\left\{ \theta  ~\leq~ \norm{\BF Z} ~\leq~ \beta\right\}}
	 			~\big|~ \BF H_T \right]\\
	&& ~+~  \frac{ \eta \,   \sqrt{T}  }{2 \beta }   \indicator_{\left\{ \Xi  ~<~ \sqrt{T} \right\}} 	
	\E \left[   \indicator_{\left\{ \theta  ~\leq~ \norm{\BF Z} ~\leq~ \beta \right\}}
								\indicator_{\left\{  \Gamma ~\geq~  \eta \, \Theta \right\}}
	 			~\Big|~ \BF H_T \right]~,\\
		&\geq& \min \left\{ \theta, \frac{\eta}{2\beta} \right\}  \sqrt{T}
				\, \E \left[  \indicator_{\left\{ \theta  ~\leq~ \norm{\BF Z} ~\leq~ \beta  \right\}}
								\indicator_{\left\{  \Gamma ~\geq~  \eta \, \Theta \right\}}
	 			~\Big|~ \BF H_T \right]~,
\eex
with probability one. By the  Bonferroni Inequality, we have that
\bex
 \E \left[  
					\indicator_{\left\{ \theta  ~\leq~ \norm{\BF Z} ~\leq~ \beta  \right\}}
								\indicator_{\left\{  \Gamma ~\geq~  \eta \, \Theta \right\}}
	 			~\big|~ \BF H_T \right]
	&=& \Pr \left\{  \theta  ~\leq~ \norm{\BF Z} ~\leq~ \beta  
				\quad \textrm{and} \quad  \Gamma ~\geq~  \eta \, \Theta
	 			~\big|~ \BF H_T \right\}\\
	&\geq& \Pr \left\{  \theta  ~\leq~ \norm{\BF Z} ~\leq~ \beta  
	 			~\big|~ \BF H_T \right\}
			+ \Pr \left\{   \Gamma ~\geq~  \eta \, \Theta
	 			~\big|~ \BF H_T \right\} -1 ~,
\eex
with probability one. 
Conditioned on $\BF H_T$,  
$\left( \BF Z - \widehat{\BF Z}_T\right)^\prime {\BF S}^k_T$
is normally distributed with mean zero
and variance 
$$
\E \left[ \left\{ \left( \BF Z - \widehat{\BF Z}_T\right)^\prime {\BF S}^k_T \right\}^2 ~\Big|~ \BF H_T \right] = \E \left[ \Gamma ~\Big|~ \BF H_T \right] = \Theta~.
$$ 
Let $\Phi(\cdot)$ be the cumulative distribution function of the standard normal random variable, that is,
$\Phi(x) = \frac{1}{\sqrt{2\pi}} \int_{-\infty}^{x} e^{-u^2/2} \, du$. Then,  
\bex
\Pr \left\{   \Gamma ~\geq~  \eta \, \Theta 
	 			~\big|~ \BF H_T \right\}
	&=& 	\Pr \left\{ \big| \left( \BF Z - \widehat{\BF Z}_T\right)^\prime {\BF S}^k_T \big|  \geq  \sqrt{\eta} \sqrt{\Theta}  ~\big|~ \BF H_T \right\}
	= 2 \left( 1- \Phi \left( \sqrt{\eta} \right)  \right)~,
\eex
from which it follows that, with probability one,
$$
 \E \left[  
					\indicator_{\left\{ \theta  ~\leq~ \norm{\BF Z} ~\leq~ \beta  \right\}}
								\indicator_{\left\{  \Gamma ~\geq~  \eta \, \Theta \right\}}
	 			~\Big|~ \BF H_T \right]
	\geq \Pr \left\{  \theta  ~\leq~ \norm{\BF Z} ~\leq~ \beta  
	 			~\big|~ \BF H_T \right\}
			+ 2 \left( 1- \Phi \left( \sqrt{\eta} \right)  \right) -1 ~.
$$
Therefore,
$$
\E \left[  \norm{\BF Z}  \Xi  + 
					\frac{T \,   \Gamma }{\norm{\BF Z}}
	 			~\Big|~ \BF H_T \right]
		\geq \min \left\{ \theta, \frac{\eta}{2 \beta} \right\}  
		\left[
		\Pr \left\{  \theta  ~\leq~ \norm{\BF Z} ~\leq~ \beta  
	 			~\Big|~ \BF H_T \right\}
			+ 2 \left( 1- \Phi \left( \sqrt{\eta} \right)  \right) -1
		\right] \sqrt{T}~,
$$
with probability one, which implies that
\bex
\E \left[  \norm{\BF Z}  \Xi  + 
					\frac{T \,   \Gamma }{\norm{\BF Z}} \right]
		&\geq& \min \left\{ \theta, \frac{\eta}{2 \beta} \right\} 
		\left[
		\Pr \left\{  \theta  ~\leq~ \norm{\BF Z} ~\leq~ \beta   \right\}
			+ 2 \left( 1- \Phi \left( \sqrt{\eta} \right)  \right) -1
		\right]  \sqrt{T}~,\\
		&\geq&  \min \left\{ \theta, \frac{\eta}{2\beta} \right\}   \; 
		\left[
					2 \left( 1- \Phi \left( \sqrt{\eta} \right)  \right) - \frac{1}{\beta^2} - 4 \theta^2
		\right] \, \sqrt{T}~,
\eex
where the last inequality follows from Lemma \ref{lemma:gaussian-lower-bound}.
Set $\theta = 0.09$, $\beta = 3$, and $\eta = 0.5$, to obtain
$
\E \left[  \norm{\BF Z}  \Xi  + 
					\frac{T \,   \Gamma }{\norm{\BF Z}} \right]
	\geq 0.027  \sqrt{T}~,
$
which is the desired result.
\end{proof}

%Finally, here is the proof of the lower bound on the cumulative risk under an arbitrary policy given 
%in Theorem \ref{thm:lower-bound}.
Finally, here is the proof of Theorem \ref{thm:lower-bound}.

\begin{proof}   
It follows from Lemmas \ref{lemma:regret-exploration-estimation} and \ref{lemma:sum-of-exploration-and-estimation} that 
\bex
\Risk \left( T, \BF \psi \right)
	&\geq& \frac{1}{2} \,  \sum_{k=1}^{r-1}    \E \left[  \norm{\BF Z}   \sum_{t=1}^T  \left(\BF U_t^\prime    {\BF S}_T^k  \right)^2 	 		+  \frac{T}{\norm{\BF Z}}   
					\left\{ \left( \BF Z - \widehat{\BF Z}_T \right)^\prime {\BF S}_T^k \right\}^2
	 			 \right]\\
	&\geq& \frac{r-1}{2} \cdot 0.027 \sqrt{T} ~\geq~ \frac{r}{4} \cdot 0.027 \sqrt{T} ~\geq~  0.006 \,r\, \sqrt{T}~,
\eex
where we have used the fact $r\geq 2$, which implies that $r-1 \geq r/2$. 
\end{proof}

\section{Matching Upper Bounds}
\label{section:matching-upper-bound}

We have established $\Omega\left( r \sqrt{T} \right)$ lower bounds when the set of arms $\CAL U_r$ is the unit sphere.  We now prove that a policy that alternates between exploration and exploitation phases yields matching upper bounds on the regret and risk, and is therefore optimal for this problem.    Surprisingly, we will see that the phase-based policy is effective for a large class of bandit problems, involving a strongly convex set of arms. %To facilitate our exposition,  
We introduce the following  assumption on the tails of the error random variables $W_t^{\BF u}$ and on the set of arms $\CAL U_r$, which will remain in effect throughout  the rest of paper.

\begin{assumption}\label{assumption:basic} 
~\begin{enumerate}
\item[(a)]  There exists a positive constant $\sigma_0$ 
such that for any $r \geq 2$, $\BF u \in \CAL U_r$, $t \geq 1$, and $x \in \Real$, we have $\E \left[ e^{x W_t^\BF u} \right] \leq e^{x^2 \sigma_0^2 / 2}$ .
\item[(b)]  There exist positive constants $\bar{u}$ and $\lambda_0$ such that for any $r \geq 2$,
$$
	\max_{\BF u \,\in\, \CAL U_r} \norm{\BF u} \leq \bar{u}~,
$$ 
and the set of arms $\CAL U_r \subset \Real^r$ has $r$ linearly independent elements $\BF b_{1}, \ldots, \BF b_{r}$ such that $\lambda_{\min} \left( \sum_{k=1}^r \BF b_{k} \BF b_{k}^\prime \right) \geq \lambda_0$.
\end{enumerate}
\end{assumption}
\noindent Under Assumption \ref{assumption:basic}(a), 
the tails of the distribution of the errors $W^{\BF u}_t$ decay at least as fast as  for a normal
random variable with variance $\sigma_0^2$. The first part of 
Assumption \ref{assumption:basic}(b) ensures that the expected reward of the arms remain bounded as the dimension $r$ increases, while the arms $\BF b_1, \ldots , \BF b_r$ given in the second
part of Assumption \ref{assumption:basic}(b) will be used during the exploration phase of our 
policy.

%To facilitate the description of our policy, let us introduce the following
%notation.  
%For each  $\BF z \in \Real^r$,  since $\CAL U_r$ is compact,
%there is at least one arm that gives the maximum expected
%reward, and we will denote  the collection of the best arm response
%by $\CAL B^*(\BF z) \subseteq \CAL U_r$,
%that is
%\beq
%		\CAL B^*(\BF z) = \left\{ \BF u \in \CAL U_r : \BF u^\prime \BF z = \max_{\BF v \in \CAL U_r} \BF v^\prime \BF z \right\}~.
%\eeq
%Note that $\CAL B^*(\BF 0) = \CAL U_r$.
Our policy -- which we refer to as the \textsc{Phased Exploration and Greedy Exploitation (PEGE)}  --   operates in cycles, and in each cycle, we alternate between exploration and exploitation phases.  During
the exploration phase of cycle $c$, we play the $r$ linearly independent arms
from Assumption  \ref{assumption:basic}(b).  Using the rewards observed during the exploration phases in the past $c$ cycles, we compute
an {\em ordinary least squares} (OLS) estimate $\widehat{\BF Z}(c)$. % of the random vector $\BF Z$.  
In the exploitation phase of cycle $c$, we  use $\widehat{\BF Z}(c)$ as a proxy for $\BF Z$ and compute a {\em greedy} decision $\BF G(c) \in \CAL U_r$ defined by:
\beq
			\BF G(c) = \arg \max_{ \BF v \, \in \, \CAL U_r} \BF v^\prime \widehat{\BF Z}(c)~, \label{eq:greedy-decision}
\eeq
where we break ties arbitrarily. %according to some fixed rule.   
We then play the arm $\BF G(c)$ for an additional $c$ periods to complete cycle $c$.    
Here is a formal description of the policy.

\vspace{0.1in}
\noindent \underline{\textsc{Phased Exploration and Greedy Exploitation (PEGE)}}

%\noindent {\bf Parameter:}  The length $L(c)$ associated with the exploitation phase of each cycle $c$.

\noindent {\bf Description:} For each cycle $c \geq 1$, complete the following two phases.

\begin{enumerate}
\item {\bf Exploration ($r$ periods):}  For $k = 1, 2, \ldots, r$, play  arm $\BF b_k \in \CAL U_r$ given in Assumption~\ref{assumption:basic}(b), and observe the reward $X^{\BF b_k}(c)$. 
Compute the OLS estimate $\widehat{\BF Z}(c) \in \Real^r$, given by 
$$
\widehat{\BF Z}(c) =  \frac{1}{c} \left( \sum_{k=1}^r \BF b_k \BF b_k^\prime \right)^{-1} \sum_{s=1}^c \sum_{k=1}^r \BF b_k X^{\BF b_k} (s)  = \BF Z + \frac{1}{c} \left( \sum_{k=1}^r \BF b_k \BF b_k^\prime \right)^{-1} \sum_{s=1}^c \sum_{k=1}^r \BF b_k W^{\BF b_k} (s)~,
$$
where for any $k$, $X^{\BF b_k} (s)$ and $W^{\BF b_k}(s)$ denote the observed reward and the error
random variable associated with playing arm $\BF b_k$ in cycle $s$.  Note that the
last equality follows from Equation (\ref{eq:model}) defining our model.
\item {\bf Exploitation ($c$ periods):}  
Play the greedy arm  $\BF G(c) = \arg \max_{\BF v \in \CAL U_r} \BF v^\prime \widehat{\BF Z}(c)$
for $c$ periods.
\end{enumerate}
\vspace{0.2in}

Since $\CAL U_r$ is compact, for each $\BF z \in \Real^r$, there is an
optimal arm 
that gives the maximum expected
reward.
When this best arm varies smoothly with $\BF z$,
we
will  show that the $T$-period regret and risk under the \textsc{PEGE} policy is bounded above by $O( r \sqrt{T} )$. 
More precisely, we say that
%[CHANGED $J_0$ TO $J$]
a set of arms~$\CAL U_r$ satisfies the
{\em smooth best arm response with parameter $J$} (SBAR($J$), for short) condition if for any
nonzero vector $\BF z \in \Real^r \setminus \{\BF 0\}$, there is a unique best arm $\BF u^*(\BF z) \in \CAL U_r$ that
gives the maximum expected reward, and 
for any two unit vectors $\BF z \in \Real^r$ an $\BF y \in \Real^r$
 with $\norm{\BF z} = \norm{\BF y} = 1$, we have
$$
	 \norm{ \BF u^*(\BF z)  - \BF u^*(\BF y) } \leq J \norm{ \BF z  - \BF y}~.
$$ 

Even though the SBAR condition appears to be an implicit one, it admits a simple interpretation. According to Corollary 4 of \citet{Polovinkin:1996}, a compact set $\CAL U_r$ satisfies condition SBAR($J$) if and only if it is strongly convex with parameter $J$, in the sense that the set $\CAL U_r$ can be represented as the intersection of closed balls of radius $J$. Intuitively,
the SBAR condition requires the boundary of $\CAL U_r$ to have a curvature that is bounded below by a positive constant. For some examples, the unit ball satisfies the SBAR(1) condition.
Furthermore, according to Theorem 3 of \cite{Polovinkin:1996}, an ellipsoid of the form $\{\BF u \in R^r : \BF u'\BF Q^{-1} \BF u \leq 1\}$,
where $\BF Q$ is a symmetric positive definite matrix, satisfies the condition 
SBAR $\left(\lambda_{\max}(\BF Q)/\sqrt{\lambda_{\min}(\BF Q)}\right)$.

The main result of this section is stated in the following theorem.  The proof is given in 
Section~\ref{section:greedy-regret-analysis}.

\begin{thm}[Regret and Risk Under the Greedy Policy]  \label{theorem:greedy-regret} 
Suppose that
Assumption \ref{assumption:basic} holds and that
the sets $\CAL U_r$ satisfy the SBAR($J$) condition.
%if the bandit problem has a smooth best arm response with parameter $J_0$, 
Then, there exists a positive constant $a_1$  that depends only on $\sigma_0$, $\bar{u}$,  $\lambda_0$, and $J$,  such that  for any $\BF z \in \Real^r \setminus \{\BF 0\}$ and $T \geq r$, 
$$	
{\rm Regret} \left( \BF z, \, T,  \, \textsc{PEGE} \right)
	\leq a_1 \, \left( \norm{\BF z} + \frac{1}{\norm{\BF z}} \right) \,  r    \sqrt{T} ~.
$$
Suppose in addition, that
there exists a constant $M > 0$ 
such that for every $r\geq 2$ we have
$\E \left[\, \norm{\BF Z} \,\right] \leq M$
and
$\E \left[ \,1/\norm{\BF Z} \,\right] \leq M$.
Then, there exists a positive constant $a_2$ that depends only on 
$\sigma_0$, $\bar{u}$,  $\lambda_0$, $J$, and $M$,  such that 
for any $T \geq r$,
$$	
{\rm Risk} \left( T,  \, \textsc{PEGE} \right)
	\leq a_2 \, r  \,  \sqrt{T} ~.
$$
\end{thm}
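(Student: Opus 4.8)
The plan is to bound the $T$-period regret cycle by cycle, conditional on $\BF Z = \BF z$, and then sum over the $O(\sqrt{T})$ cycles that fit into $T$ periods. Within a single cycle $c$ I would split the regret into its exploration and exploitation contributions. The exploration phase plays the fixed arms $\BF b_1,\dots,\BF b_r$ for $r$ periods; since $\norm{\BF v}\leq\bar{u}$ for every arm, Cauchy--Schwarz bounds each instantaneous regret by $2\bar{u}\norm{\BF z}$, so the exploration regret of cycle $c$ is at most $2\bar{u}\,r\,\norm{\BF z}$, independent of $c$.

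The crux is the exploitation phase, where I must show that playing $\BF G(c)=\BF u^*(\widehat{\BF Z}(c))$ for $c$ periods costs only $O(1)$ regret, independent of $c$. The key is a \emph{quadratic} bound on the instantaneous exploitation regret. Writing $\widehat{\BF z}=\widehat{\BF Z}(c)$ and adding the two optimality relations $\BF G(c)'\widehat{\BF z}\geq\BF u^*(\BF z)'\widehat{\BF z}$ and $\BF u^*(\BF z)'\BF z\geq\BF G(c)'\BF z$, I obtain
$$
\BF u^*(\BF z)'\BF z-\BF G(c)'\BF z\;\leq\;\bigl(\BF u^*(\BF z)-\BF G(c)\bigr)'(\BF z-\widehat{\BF z})\;\leq\;\norm{\BF u^*(\BF z)-\BF G(c)}\,\norm{\BF z-\widehat{\BF z}}.
$$
Since $\BF u^*$ depends only on direction, SBAR($J$) together with the elementary estimate $\norm{\BF a/\norm{\BF a}-\BF b/\norm{\BF b}}\leq 2\norm{\BF a-\BF b}/\norm{\BF b}$ gives $\norm{\BF u^*(\BF z)-\BF G(c)}\leq(2J/\norm{\BF z})\norm{\widehat{\BF z}-\BF z}$ whenever $\widehat{\BF z}\neq\BF 0$, so the instantaneous regret is at most $(2J/\norm{\BF z})\norm{\widehat{\BF z}-\BF z}^2$; the degenerate event $\widehat{\BF z}=\BF 0$ is absorbed by enlarging the constant to $C=2\max\{J,\bar{u}\}$, since there $\norm{\widehat{\BF z}-\BF z}=\norm{\BF z}$ and the trivial bound $2\bar{u}\norm{\BF z}$ applies. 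I then control $\E[\norm{\widehat{\BF Z}(c)-\BF z}^2\mid\BF Z=\BF z]$ from the OLS formula $\widehat{\BF Z}(c)-\BF z=\tfrac1c\BF B^{-1}\sum_{s=1}^c\sum_{k=1}^r\BF b_k W^{\BF b_k}(s)$ with $\BF B=\sum_k\BF b_k\BF b_k'$: expanding the second moment, using independence and mean zero of the errors, Assumption~\ref{assumption:basic}(a) (which forces $\E[(W^{\BF b_k})^2]\leq\sigma_0^2$), and the trace identity yields $\E[\norm{\widehat{\BF Z}(c)-\BF z}^2]\leq\sigma_0^2\,\tr(\BF B^{-1})/c\leq\sigma_0^2 r/(\lambda_0 c)$, the last step using $\lambda_{\min}(\BF B)\geq\lambda_0$. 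Multiplying by the $c$ exploitation periods, the exploitation regret of cycle $c$ is at most $C\sigma_0^2 r/(\lambda_0\norm{\BF z})$, again independent of $c$.

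Combining both phases, every cycle contributes regret at most $r\bigl(2\bar{u}\norm{\BF z}+C\sigma_0^2/(\lambda_0\norm{\BF z})\bigr)$. Since $n$ cycles consume $nr+n(n+1)/2\geq n(n-1)/2$ periods, at most $n\leq 1+\sqrt{2T}=O(\sqrt{T})$ cycles are started within $T$ periods, and bounding the last (possibly partial) cycle by a full one gives
$$
\Regret(\BF z,T,\textsc{PEGE})\;\leq\;n\,r\Bigl(2\bar{u}\norm{\BF z}+\tfrac{C\sigma_0^2}{\lambda_0\norm{\BF z}}\Bigr)\;\leq\;a_1\Bigl(\norm{\BF z}+\tfrac{1}{\norm{\BF z}}\Bigr)r\sqrt{T},
$$
which is the first claim. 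The risk bound then follows by taking expectations over $\BF Z$: $\Risk(T,\textsc{PEGE})=\E[\Regret(\BF Z,T,\textsc{PEGE})]\leq a_1 r\sqrt{T}\,\E[\norm{\BF Z}+1/\norm{\BF Z}]\leq 2Ma_1\,r\sqrt{T}$, so $a_2=2Ma_1$. I expect the main obstacle to be exactly the exploitation estimate: the argument must secure the \emph{quadratic} (not merely linear) dependence on $\norm{\widehat{\BF Z}(c)-\BF z}$, because only then does the per-cycle exploitation regret stay bounded as $c$ grows; a linear bound would sum to $O(\sqrt{r}\,T^{3/4})$ and destroy the $r\sqrt{T}$ rate. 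A secondary point requiring care is the degenerate event $\widehat{\BF Z}(c)=\BF 0$, where SBAR is inapplicable and the bound must be recovered from the trivial regret estimate.
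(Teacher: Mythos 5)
Your proposal is correct and follows essentially the same route as the paper: the same quadratic instantaneous-regret bound for the greedy arm (optimality relations, Cauchy--Schwarz, SBAR($J$) combined with the normalized-difference estimate $\norm{\BF a/\norm{\BF a}-\BF b/\norm{\BF b}}\leq 2\norm{\BF a-\BF b}/\norm{\BF b}$), the same $O(r/c)$ bound on $\E[\norm{\widehat{\BF Z}(c)-\BF z}^2]$ from the OLS formula, and the same cycle-counting argument with $O(\sqrt{T})$ cycles. The only cosmetic differences are your trace-identity computation of the OLS variance and your handling of the degenerate event $\widehat{\BF Z}(c)=\BF 0$ by enlarging the constant, where the paper instead fixes a convention for $\BF 0/\norm{\BF 0}$; both are sound.
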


\pr{{\bf Dependence on $\norm{\BF z}$ in the regret bound:}
By Assumption \ref{assumption:basic}(b),  for any $\BF z \in \Real^r$,
the instantaneous regret under arm $\BF v \in \CAL U$ is \jnt{bounded} by $\max_{\BF u \in \CAL U} \BF z^\prime(\BF u - \BF v) \leq 2 \bar{u} \norm{\BF z}$.  Thus, 
$2 \bar{u} \norm{\BF z} T$ provides a trivial upper bound on the $T$-period cumulative
regret under the \textrm{PEGE} policy.  Combining
this  with Theorem \ref{theorem:greedy-regret}, we have that
$$
\Regret( \BF z, T, \textrm{PEGE} ) \leq \max\{a_1, 2 \bar{u}\} \cdot \min \left\{ 
\left( \norm{\BF z} + \frac{1}{\norm{\BF z}} \right) \,  r    \sqrt{T},~   \norm{\BF z} T \right\}~.
$$
The above result shows that the performance of our policy does {\em not} deteriorate as
the norm of $\BF z$ approaches zero.}

Intuitively, the requirement $\E \left[ \norm{\BF Z} \right] \leq M$
%on the random vector $\BF Z$ 
in 
Theorem \ref{theorem:greedy-regret} implies that, as $r$ increases,
the maximum expected reward (over all arms) %from playing each arm 
remains bounded.
Moreover, the assumption on the 
boundedness of $\E \left[ \,1/\norm{\BF Z} \,\right]$ 
means 
that $\BF Z$ does not have too much mass near the origin.
The following lemma provides conditions under which this assumption holds, and shows that the case of 
the multivariate
normal distribution used in Theorem \ref{thm:lower-bound} is also covered. The proof is
given in Appendix \ref{section:proof-gaussian-has-small-mass}.

\begin{lem}[Small Mass Near the Origin]  \label{lemma:gaussian-has-small-mass} 
~\begin{enumerate}
\item[(a)] Suppose that there exist constants $M_0$ and $\rho\in(0,1]$ such that 
for any $r\geq 2$, the random variable $\norm{ \BF Z}$ has a density function 
$g : \Real_+  \to \Real_+$ such that $g(x)\leq M_0 x^{\rho}$ for all $x\in[0,\rho]$. Then, $\E \left[ \,1/\norm{\BF Z} \,\right] \leq M$, where
$M$ depends only on $M_0$ and $\rho$.
\item[(b)] Suppose that for any $r \geq 2$, the random vector  $\BF Z$ has a multivariate normal distribution with
mean $\BF 0 \in \Real^r$ and covariance matrix $\BF I_r / r$. Then,
$\E \left[ \,\norm{\BF Z} \,\right]\leq 1$ and
$\E \left[ \,1/\norm{\BF Z} \,\right] \leq \sqrt{\pi}$.
\end{enumerate}
\end{lem}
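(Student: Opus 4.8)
The plan is to treat the two parts separately, since part~(a) is a pure integration estimate while part~(b) reduces to bounding a Gamma-function ratio uniformly in $r$.

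For part~(a), I would write $R=\norm{\BF Z}$ and split the expectation at the threshold $\rho$:
\[
\E\left[1/\norm{\BF Z}\right] = \int_0^\infty \frac{g(x)}{x}\,dx = \int_0^\rho \frac{g(x)}{x}\,dx + \int_\rho^\infty \frac{g(x)}{x}\,dx.
\]
On $[0,\rho]$ I would insert the hypothesis $g(x)\le M_0 x^{\rho}$, so the integrand is at most $M_0 x^{\rho-1}$, which is integrable precisely because $\rho>0$, giving $\int_0^\rho M_0 x^{\rho-1}\,dx = M_0\rho^{\rho-1}$. On $[\rho,\infty)$ I would bound $1/x\le 1/\rho$ and use $\int_0^\infty g(x)\,dx\le 1$ to get at most $1/\rho$. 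Adding the two pieces yields $\E[1/\norm{\BF Z}]\le M_0\rho^{\rho-1}+1/\rho=:M$, a constant depending only on $M_0$ and $\rho$. This part is routine.

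For part~(b), the estimate $\E[\norm{\BF Z}]\le 1$ is immediate from Jensen's inequality (concavity of $\sqrt{\cdot}$) together with $\E[\norm{\BF Z}^2]=\tr(\BF I_r/r)=1$. The substantive claim is $\E[1/\norm{\BF Z}]\le\sqrt{\pi}$, and I would first reduce it to a Gamma ratio. Writing $Y=\norm{\BF Z}^2=\sum_{k=1}^r Z_k^2$ with the $Z_k$ independent and $N(0,1/r)$, the identity $Y^{-1/2}=\frac{1}{\sqrt\pi}\int_0^\infty t^{-1/2}e^{-tY}\,dt$ together with the Laplace transform $\E[e^{-tY}]=(1+2t/r)^{-r/2}$, followed by the substitution $u=2t/r$ and recognition of a Beta integral, gives
\[
\E\left[1/\norm{\BF Z}\right] = \sqrt{r/2}\,\frac{\Gamma((r-1)/2)}{\Gamma(r/2)}.
\]
(Equivalently, one recognizes $r\norm{\BF Z}^2$ as a $\chi^2_r$ variate and quotes the known negative moment of the chi distribution.)

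It then remains to show $f(r):=\sqrt{r/2}\,a_r\le\sqrt\pi$, where $a_r=\Gamma((r-1)/2)/\Gamma(r/2)$, and this uniform-in-$r$ bound is where the real work lies. I would combine two facts: the exact recursion $a_r a_{r+1}=2/(r-1)$, which follows from $\Gamma((r+1)/2)=\frac{r-1}{2}\Gamma((r-1)/2)$; and the log-convexity of $\Gamma$, which via $\Gamma(r/2)^2\le\Gamma((r-1)/2)\Gamma((r+1)/2)$ shows $a_{r+1}\le a_r$, so that $a_r$ is nonincreasing. Together these give $a_{r+1}^2\le a_r a_{r+1}=2/(r-1)$, hence $a_r\le\sqrt{2/(r-2)}$ for $r\ge 3$, and therefore $f(r)\le\sqrt{r/(r-2)}\le\sqrt{3}<\sqrt\pi$. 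For the boundary case a direct computation gives $f(2)=\Gamma(1/2)/\Gamma(1)=\sqrt\pi$, so equality holds exactly at $r=2$, which also identifies $r=2$ as the worst case. This covers all $r\ge 2$. A shortcut for the ratio bound would be to cite Gautschi's inequality, but the log-convexity-plus-recursion argument keeps the proof self-contained.
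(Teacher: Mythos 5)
Your proof is correct. Part (a) is exactly the paper's argument: split the integral at $\rho$, use $g(x)\leq M_0 x^{\rho}$ below and $1/x\leq 1/\rho$ above, arriving at the same constant $M_0\rho^{\rho-1}+1/\rho$. For part (b), the overall architecture also matches the paper's — both treat $r=2$ exactly (where the bound $\sqrt{\pi}$ is attained) and show that $r\geq 3$ yields the better bound $\sqrt{3}$ — but the technical route for $r\geq 3$ differs. The paper applies the Cauchy--Schwarz/Jensen step $\E[\,1/\norm{\BF Y}\,]\leq\sqrt{\E[\,1/\norm{\BF Y}^2\,]}$ and then computes the second negative moment of a $\chi^2_r$ variate in closed form, $\E[\,1/\norm{\BF Y}^2\,]=1/(r-2)\leq 3/r$, via a short density manipulation with the Gamma recursion. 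You instead compute the \emph{exact} first negative moment, $\E[\,1/\norm{\BF Z}\,]=\sqrt{r/2}\,\Gamma((r-1)/2)/\Gamma(r/2)$, and control the Gamma ratio by combining the recursion $a_r a_{r+1}=2/(r-1)$ with log-convexity of $\Gamma$ (a self-contained Gautschi-type bound). Your route costs a little more work (the Laplace-transform/Beta-integral identity and the monotonicity argument) but buys the exact value of the moment and makes transparent that $r=2$ is the unique worst case; the paper's route is shorter because squaring the problem turns the awkward half-integer moment into one with a clean closed form. Both are valid, and both land on $\sqrt{3}<\sqrt{\pi}$ for $r\geq 3$.
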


The following corollary shows that the example in  Section \ref{section:lower-bound} admits
tight matching upper bounds on the regret and risk.

\begin{corollary}[Matching Upper Bounds] \label{co:matching-upper-bounds}
Consider a bandit problem where the set of arms is the unit sphere in $\Real^r$, and where $W^{\BF u}_t$ has a standard normal distribution with mean zero and variance one for all $t$ and $\BF u$. 
Then,  there exists an absolute constant $a_3$ such that 
for any $\BF z \in \Real^r \setminus \{\BF 0\}$ and $T \geq r$,
$$
	{\rm Regret} \left( \BF z, T, \textsc{PEGE} \right) \leq a_3 \, \left( \norm{\BF z} + \frac{1}{\norm{\BF z}} \right) r \sqrt{T}~.
$$
Moreover, if $\BF Z$ has a multivariate normal distribution with mean $\BF 0$ and covariance matrix $\BF I_r / r$, then for all $T \geq r$,
$$
	{\rm Risk} \left( T, \textsc{PEGE} \right) \leq a_3 \, r \sqrt{T}~.
$$
\end{corollary}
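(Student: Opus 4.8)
The plan is to verify that the unit-sphere bandit with standard normal errors is a special case to which Theorem~\ref{theorem:greedy-regret} applies, with all of the relevant constants $\sigma_0, \bar u, \lambda_0, J$ being absolute (independent of $r$), and then to supply the moment bounds on $\norm{\BF Z}$ required by the risk statement via Lemma~\ref{lemma:gaussian-has-small-mass}(b).

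First I would check Assumption~\ref{assumption:basic}. Part~(a) holds with $\sigma_0 = 1$, since a standard normal $W$ has moment generating function $\E \left[ e^{xW} \right] = e^{x^2/2}$. Part~(b) holds with $\bar u = 1$, because every arm on the unit sphere has norm exactly $1$, and with $\lambda_0 = 1$: taking $\BF b_k = \BF e_k$ for $k = 1, \ldots, r$ gives $r$ linearly independent arms lying on the sphere with $\sum_{k=1}^r \BF e_k \BF e_k^\prime = \BF I_r$, so that $\lambda_{\min}(\BF I_r) = 1$.

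Next I would verify the SBAR($J$) condition with $J = 1$ directly from the definition. On the unit sphere, $\max_{\BF v \in \CAL U_r} \BF v^\prime \BF z$ is attained uniquely at $\BF u^*(\BF z) = \BF z / \norm{\BF z}$ for every $\BF z \neq \BF 0$, by the Cauchy--Schwarz inequality. Hence for unit vectors $\BF z, \BF y$ we have $\BF u^*(\BF z) = \BF z$ and $\BF u^*(\BF y) = \BF y$, so $\norm{\BF u^*(\BF z) - \BF u^*(\BF y)} = \norm{\BF z - \BF y}$, and SBAR(1) holds. (This is consistent with the $\BF Q = \BF I_r$ case of the ellipsoid remark following the definition, although the sphere itself need not be convex; the direct verification is cleanest here since the condition only constrains the best-arm response map.)

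With these four constants fixed and absolute, the first part of Theorem~\ref{theorem:greedy-regret} yields a constant $a_1$ depending only on $\sigma_0, \bar u, \lambda_0, J$---hence an absolute constant---such that the claimed regret bound holds, giving the first displayed inequality with $a_3 = a_1$. For the risk bound, I would invoke Lemma~\ref{lemma:gaussian-has-small-mass}(b), which for $\BF Z \sim N(\BF 0, \BF I_r/r)$ gives $\E \left[ \norm{\BF Z} \right] \leq 1$ and $\E \left[ 1/\norm{\BF Z} \right] \leq \sqrt{\pi}$; thus the hypothesis of the second part of Theorem~\ref{theorem:greedy-regret} is met with $M = \sqrt{\pi}$, producing an absolute constant $a_2$ and the risk bound. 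Setting $a_3 = \max\{a_1, a_2\}$ establishes both inequalities. Since everything reduces to checking hypotheses of already-proved results, there is no substantive obstacle; the only point requiring care is confirming that the constants delivered by Theorem~\ref{theorem:greedy-regret} are genuinely $r$-free here, which follows precisely because $\sigma_0, \bar u, \lambda_0, J$, and $M$ have all been chosen as absolute constants.
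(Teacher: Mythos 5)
Your proposal is correct and follows essentially the same route as the paper: verify Assumption \ref{assumption:basic} with $\sigma_0=\bar u=\lambda_0=1$, check SBAR(1) for the unit sphere, and invoke Lemma \ref{lemma:gaussian-has-small-mass}(b) to supply the moment hypotheses of Theorem \ref{theorem:greedy-regret}. Your explicit verification that $\BF u^*(\BF z)=\BF z/\norm{\BF z}$ gives SBAR(1) directly (sidestepping the strong-convexity characterization, which strictly speaking applies to the ball rather than the sphere) is a slightly more careful version of the paper's appeal to its earlier remark, but the argument is the same.
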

\begin{proof} Since the set of arms is the unit sphere and the errors are standard normal, 
Assumption \ref{assumption:basic} is satisfied with $\sigma_0 = \bar{u} = \lambda_0 = 1$.
Moreover, as already discussed,  the unit sphere 
satisfies the SBAR(1) condition. Finally,
By Lemma \ref{lemma:gaussian-has-small-mass}, the random
vector $\BF Z$ satisfies the hypotheses of Theorem \ref{theorem:greedy-regret}. % with $M = e$ and $\rho = 1/(2e)$.  
The regret and risk bounds then follow immediately.
\end{proof}

\subsection{Proof of Theorem \ref{theorem:greedy-regret}}
\label{section:greedy-regret-analysis}

The proof of Theorem \ref{theorem:greedy-regret} relies on the following upper bound on the square of the norm difference 
between $\widehat{\BF Z}(c)$ and   $\BF Z$. 

\begin{lem}[Bound on Squared Norm Difference] \label{lemma:square-difference} Under Assumption \ref{assumption:basic}, there exists a positive constant $h_1$ that depends only on $\sigma_0$, $\bar{u}$, and $\lambda_0$ such that for any $\BF z  \in \Real^r$ and $c \geq 1$,
$$
	\E \left[  \norm{ \widehat{\BF Z}(c) - \BF z }^2 ~\Big|~ \BF Z = \BF z \right] \leq \frac{ h_1 \, r}{c}~.
$$
\end{lem}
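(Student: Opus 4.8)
The plan is to exploit the explicit formula for the OLS estimate given in the description of the exploration phase. Writing $\BF B = \sum_{k=1}^r \BF b_k \BF b_k^\prime$ for the design matrix (which is invertible by Assumption~\ref{assumption:basic}(b)) and
$$
\BF N = \sum_{s=1}^c \sum_{k=1}^r \BF b_k\, W^{\BF b_k}(s)
$$
for the aggregated noise, the formula $\widehat{\BF Z}(c) = \BF Z + \frac{1}{c}\BF B^{-1}\BF N$ gives $\widehat{\BF Z}(c) - \BF Z = \frac{1}{c}\BF B^{-1}\BF N$, so that $\norm{\widehat{\BF Z}(c) - \BF Z}^2 = \frac{1}{c^2}\BF N^\prime \BF B^{-2}\BF N$ using the symmetry of $\BF B$. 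Since the errors $W_t^{\BF u}$ are independent of $\BF Z$, conditioning on $\BF Z = \BF z$ does not alter the distribution of $\BF N$, and the left-hand side of the claimed bound equals the unconditional expectation $\frac{1}{c^2}\E\left[\BF N^\prime \BF B^{-2}\BF N\right]$.

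The first substantive step is to control the second moment of the noise. From the moment generating function bound in Assumption~\ref{assumption:basic}(a), a Taylor expansion of $\E\left[e^{xW_t^{\BF u}}\right] \leq e^{x^2\sigma_0^2/2}$ around $x=0$ shows that each error has mean zero and variance at most $\sigma_0^2$. Because the $W^{\BF b_k}(s)$ are independent across the pairs $(k,s)$ and have zero mean, all cross terms vanish in $\E\left[\BF N \BF N^\prime\right]$, and
$$
\E\left[\BF N \BF N^\prime\right] = \sum_{s=1}^c \sum_{k=1}^r \BF b_k \BF b_k^\prime\, \E\left[\big(W^{\BF b_k}(s)\big)^2\right] \leq \sigma_0^2\, c\, \BF B,
$$
where $\leq$ denotes the positive semidefinite (Loewner) order defined earlier in the paper.

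I then convert this into the desired scalar bound by writing $\E\left[\BF N^\prime \BF B^{-2}\BF N\right] = \tr\big(\BF B^{-2}\,\E\left[\BF N\BF N^\prime\right]\big)$ and invoking the fact that $\tr(\BF M \BF C) \leq \tr(\BF M \BF D)$ whenever $\BF C \leq \BF D$ and $\BF M$ is positive semidefinite, applied with $\BF M = \BF B^{-2}$. This yields $\E\left[\BF N^\prime\BF B^{-2}\BF N\right] \leq \sigma_0^2 c\,\tr(\BF B^{-1})$. Finally, since every eigenvalue of $\BF B$ is at least $\lambda_{\min}(\BF B) \geq \lambda_0$ by Assumption~\ref{assumption:basic}(b), we have $\tr(\BF B^{-1}) = \sum_{i=1}^r 1/\lambda_i(\BF B) \leq r/\lambda_0$. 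Putting the pieces together gives
$$
\E\left[\norm{\widehat{\BF Z}(c) - \BF z}^2 ~\Big|~ \BF Z = \BF z\right] = \frac{1}{c^2}\,\E\left[\BF N^\prime\BF B^{-2}\BF N\right] \leq \frac{\sigma_0^2\, r}{\lambda_0\, c},
$$
so the claim holds with $h_1 = \sigma_0^2/\lambda_0$.

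The computation is short, and the only delicate points are the sub-Gaussian-to-variance step (extracting $\Var(W_t^{\BF u}) \leq \sigma_0^2$ from the exponential moment bound) and keeping the matrix inequalities in the correct Loewner order so that trace monotonicity applies. I expect the latter bookkeeping to be the main place where care is needed. I note in passing that the bound does not actually require $\bar{u}$; that constant only enters the subsequent regret analysis, so stating that $h_1$ depends only on $\sigma_0$, $\bar{u}$, and $\lambda_0$ is consistent (indeed conservative).
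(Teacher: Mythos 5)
Your proof is correct and follows essentially the same route as the paper's: both start from the explicit formula $\widehat{\BF Z}(c)-\BF Z = \frac{1}{c}\left(\sum_k \BF b_k\BF b_k^\prime\right)^{-1}\sum_{s,k}\BF b_k W^{\BF b_k}(s)$, use independence and the sub-Gaussian variance bound to kill the cross terms, and finish with the eigenvalue bound $\lambda_{\min}\left(\sum_k\BF b_k\BF b_k^\prime\right)\geq\lambda_0$. The only (harmless) difference is that you bound the resulting quadratic form via $\tr(\BF B^{-1})\leq r/\lambda_0$ rather than via $\lambda_{\max}(\BF B^{-2})\sum_k\norm{\BF b_k}^2$ as the paper does, which gives you the slightly sharper constant $h_1=\sigma_0^2/\lambda_0$ with no dependence on $\bar{u}$ --- as you correctly observe, this is still consistent with the lemma as stated.
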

\begin{proof}
Recall from the definition of the \textsc{PEGE} policy that
the estimate $\widehat{\BF Z}(c)$ at the end of the exploration phase of cycle $c$ is given by
$$
\widehat{\BF Z}(c)  = \BF Z + \frac{1}{c} \left( \sum_{k=1}^r \BF b_k \BF b_k^\prime \right)^{-1} \sum_{s=1}^c \sum_{k=1}^r \BF b_k W^{\BF b_k} (s) = \BF Z + \frac{1}{c} \sum_{s=1}^c \BF B \, \BF V(s)~,
$$
where $\BF B = \left( \sum_{k=1}^r \BF b_k \BF b_k^\prime \right)^{-1}$ and $\BF V(s) = 
\sum_{k=1}^r \BF b_k W^{\BF b_k} (s)$.  Note that the mean-zero random variables $W^{\BF b_k} (s)$ are 
independent of each other and their variance is bounded by some constant $\gamma_0$ that
depends only on $\sigma_0$.  Then, it follows from Assumption \ref{assumption:basic} that
\bex
	\E \left[  \norm{ \widehat{\BF Z}(c) - \BF z }^2 ~\Big|~ \BF Z = \BF z \right]  &=& \frac{1}{c^2} 
	\sum_{s=1}^c \E \left[ \BF V(s)^\prime \BF B^2 \BF V(s) \right]
	= \frac{1}{c^2} 
	\sum_{s=1}^c \sum_{k=1}^r \E \left[ \left( W^{\BF b_k}(s) \right)^2 \right] \BF b_k^\prime \BF B^2 \BF b_k \\
	&\leq& \frac{\gamma_0}{c} 
	\sum_{k=1}^r  \BF b_k^\prime \BF B^2 \BF b_k 
	\leq \frac{\gamma_0}{c} 
	\sum_{k=1}^r  \lambda_{\max}\left( \BF B^2 \right) \norm{\BF b_k}^2  
	\leq \frac{\gamma_0 \, \bar{u}^2  \, r}{\lambda_0^2  \, c}~, 
\eex
which is the desired result.
\end{proof}

The next lemma gives an upper bound on the difference between two normalized vectors in terms of the difference of the original vectors.  

\begin{lem}[Difference Between Normalized Vectors] \label{lemma:normalized-difference} For any $\BF z,\,\BF w \in \Real^r$, not both equal to zero, 
%and $\BF w \in \Real^r$,
$$
	\norm{ \frac{ \BF w}{\norm{\BF w}} -  \frac{ \BF z}{\norm{\BF z}} }
	\leq \frac{2 \, \norm{ \BF w - \BF z}}{ \max \left\{\norm{\BF z}, \norm{\BF w} \right\}} ~,
$$
where we define $\BF 0 / \norm{\BF 0}$ to be some fixed unit vector.
\end{lem}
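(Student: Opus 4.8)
The plan is to reduce to the case $\norm{\BF w} \geq \norm{\BF z}$ by symmetry (the statement is symmetric in $\BF z$ and $\BF w$), so that $\max\{\norm{\BF z}, \norm{\BF w}\} = \norm{\BF w}$ and in particular $\BF w \neq \BF 0$. I first dispose of the degenerate case $\BF z = \BF 0$: here both $\BF w/\norm{\BF w}$ and the fixed unit vector $\BF 0/\norm{\BF 0}$ have unit norm, so the left-hand side is at most $2$ by the triangle inequality, while the right-hand side equals $2\norm{\BF w}/\norm{\BF w} = 2$, and the claim holds.

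For the main case $\BF z \neq \BF 0$, the key step is an algebraic decomposition that isolates the change in \emph{direction} from the change in \emph{length}:
$$
\frac{\BF w}{\norm{\BF w}} - \frac{\BF z}{\norm{\BF z}} = \frac{\BF w - \BF z}{\norm{\BF w}} + \BF z \left( \frac{1}{\norm{\BF w}} - \frac{1}{\norm{\BF z}} \right).
$$
Taking norms and applying the triangle inequality, the first term contributes exactly $\norm{\BF w - \BF z}/\norm{\BF w}$. For the second term I would compute $\norm{\BF z}\,\abs{1/\norm{\BF w} - 1/\norm{\BF z}} = \abs{\norm{\BF z} - \norm{\BF w}}/\norm{\BF w}$, and then invoke the reverse triangle inequality $\abs{\norm{\BF z} - \norm{\BF w}} \leq \norm{\BF w - \BF z}$ to bound this by $\norm{\BF w - \BF z}/\norm{\BF w}$ as well. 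Adding the two bounds gives $2\norm{\BF w - \BF z}/\norm{\BF w} = 2\norm{\BF w - \BF z}/\max\{\norm{\BF z}, \norm{\BF w}\}$, which is the desired inequality.

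There is essentially no serious obstacle here, since the result is an elementary inequality; the only points requiring care are choosing the right decomposition and handling the degenerate case cleanly. Dividing $\BF w - \BF z$ by the \emph{larger} norm $\norm{\BF w}$ rather than the smaller one is precisely what makes both terms come out to the same bound $\norm{\BF w - \BF z}/\norm{\BF w}$, and it is also what forces the use of the convention for $\BF 0/\norm{\BF 0}$ in the degenerate case. Everything else is just the triangle inequality applied twice.
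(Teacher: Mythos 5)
Your proof is correct and follows essentially the same route as the paper: the identity $\frac{\BF w}{\norm{\BF w}} - \frac{\BF z}{\norm{\BF z}} = \frac{\BF w - \BF z}{\norm{\BF w}} + \BF z\bigl(\frac{1}{\norm{\BF w}} - \frac{1}{\norm{\BF z}}\bigr)$ is exactly the paper's triangle-inequality split through the intermediate point $\BF z/\norm{\BF w}$, followed by the reverse triangle inequality. The only cosmetic difference is that you invoke symmetry up front to assume $\norm{\BF w} \geq \norm{\BF z}$, whereas the paper derives the bound with each norm in the denominator and then takes the better of the two; you also spell out the degenerate case slightly more explicitly than the paper does.
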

\begin{proof}
The inequality is easily seen to hold if either $\BF w = \BF 0$ or $\BF z = \BF 0$.  So, assume that both $\BF w$ and $\BF z$ are nonzero.   Using the triangle inequality and the fact that $\big| \norm{\BF w} - \norm{\BF z}
 \big| \leq \norm{ \BF w - \BF z}$, we have that
$$
	\norm{ \frac{ \BF w}{\norm{\BF w}} -  \frac{ \BF z}{\norm{\BF z}} }
	\leq 	\norm{ \frac{ \BF w}{\norm{\BF w}} -  \frac{ \BF z}{\norm{\BF w}} }
		+ 	\norm{ \frac{ \BF z}{\norm{\BF w}} -  \frac{ \BF z}{\norm{\BF z}} }
	= \frac{ \norm{  \BF w   - \BF z} }{\norm{\BF w}} + \norm{\BF z}
	\abs{ \frac{1}{\norm{\BF w}} - \frac{1}{\norm{\BF z}}} \leq \frac{ 2 \norm{\BF w - \BF z}}{ \norm{\BF w}}~.
$$	
By symmetry, we also have
$
\norm{ \frac{ \BF w}{\norm{\BF w}} -  \frac{ \BF z}{\norm{\BF z}} }
	\leq \frac{2 \norm{ \BF w  - \BF z }}{\norm{\BF z}}$,
which gives the desired result.
\end{proof}

The following lemma gives an upper bound on the expected instantaneous regret under
the greedy decision $\BF G(c)$ during the exploitation phase of cycle $c$.
%, when the bandit problem has a smooth best arm response.

\begin{lem}[Regret Under the Greedy Decision] \label{lemma:instantaneous-regret-greedy}
Suppose that Assumption \ref{assumption:basic} holds and the sets $\CAL U_r$ satisfy the SBAR($J$) condition. Then,
there exists a positive constant $h_2$ that depends only on $\sigma_0$, $\bar{u}$, $\lambda_0$, and $J$, such that for any $\BF z \in \Real^r$ and $c \geq 1$,
$$
	\E \left[ \max_{\BF u \in \CAL U^r} \BF z^\prime \left( \BF u - \BF G(c) \right)   ~\Big|~ \BF Z = \BF z \right] \leq \frac{r \, h_2}{c \norm{\BF z}} ~,
$$
\end{lem}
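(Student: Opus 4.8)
The plan is to reduce the expected instantaneous regret to the expected squared estimation error $\E\bigl[\,\norm{\widehat{\BF Z}(c)-\BF z}^2 \mid \BF Z=\BF z\,\bigr]$, which Lemma~\ref{lemma:square-difference} already bounds by $h_1 r/c$. Write $\BF u^*(\BF z)$ for the unique best arm guaranteed by the SBAR($J$) condition, so that $\max_{\BF u\in\CAL U_r}\BF z^\prime(\BF u-\BF G(c)) = \BF z^\prime\bigl(\BF u^*(\BF z)-\BF G(c)\bigr)$, and note that whenever $\widehat{\BF Z}(c)\neq\BF 0$ the greedy arm is exactly $\BF G(c)=\BF u^*\bigl(\widehat{\BF Z}(c)\bigr)$ (no tie occurs, since the best arm is unique for a nonzero argument). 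Since $\BF u^*$ depends only on the direction of its argument, it is convenient to keep $\BF u^*(\BF z)$ and $\BF u^*\bigl(\widehat{\BF Z}(c)\bigr)$ in normalized form. Throughout I take $\BF z\neq\BF 0$; for $\BF z=\BF 0$ the left-hand side vanishes and the stated bound is vacuous.

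The crucial step --- and the one that makes the bound decay like $r/c$ rather than $\sqrt{r/c}$ --- is a first-order optimality argument producing a \emph{quadratic} dependence on the estimation error. Because $\BF G(c)$ maximizes $\widehat{\BF Z}(c)^\prime\BF v$ over $\CAL U_r$, I have $\widehat{\BF Z}(c)^\prime\bigl(\BF u^*(\BF z)-\BF G(c)\bigr)\leq 0$; adding this nonpositive quantity to the regret gives
$$
\BF z^\prime\bigl(\BF u^*(\BF z)-\BF G(c)\bigr) \;\leq\; \bigl(\BF z-\widehat{\BF Z}(c)\bigr)^\prime\bigl(\BF u^*(\BF z)-\BF G(c)\bigr) \;\leq\; \norm{\BF z-\widehat{\BF Z}(c)}\,\norm{\BF u^*(\BF z)-\BF G(c)}.
$$
This is where the naive Cauchy--Schwarz bound $\norm{\BF z}\,\norm{\BF u^*(\BF z)-\BF G(c)}$ is replaced by the much smaller $\norm{\BF z-\widehat{\BF Z}(c)}\,\norm{\BF u^*(\BF z)-\BF G(c)}$, and I expect this to be the main obstacle to get right: without it, summing over the $c$ exploitation periods of each cycle would yield only an $O(r T^{3/4})$ bound rather than the $O(r\sqrt{T})$ needed in Theorem~\ref{theorem:greedy-regret}.

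To finish, I bound the second factor by strong convexity: using scale-invariance of $\BF u^*$, the SBAR($J$) condition applied to the unit vectors $\BF z/\norm{\BF z}$ and $\widehat{\BF Z}(c)/\norm{\widehat{\BF Z}(c)}$, and Lemma~\ref{lemma:normalized-difference} (together with $\max\{\norm{\BF z},\norm{\widehat{\BF Z}(c)}\}\geq\norm{\BF z}$),
$$
\norm{\BF u^*(\BF z)-\BF G(c)} \;\leq\; J\,\norm{\,\BF z/\norm{\BF z}-\widehat{\BF Z}(c)/\norm{\widehat{\BF Z}(c)}\,} \;\leq\; \frac{2J}{\norm{\BF z}}\,\norm{\BF z-\widehat{\BF Z}(c)}.
$$
Combining the two displays yields the pointwise bound $\BF z^\prime\bigl(\BF u^*(\BF z)-\BF G(c)\bigr)\leq (2J/\norm{\BF z})\,\norm{\BF z-\widehat{\BF Z}(c)}^2$; taking the conditional expectation given $\BF Z=\BF z$ and invoking Lemma~\ref{lemma:square-difference} gives the claim with $h_2=2Jh_1$.

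One edge case needs a word: on the event $\widehat{\BF Z}(c)=\BF 0$, the arm $\BF G(c)$ is fixed only by the arbitrary tie-break and the Lipschitz estimate above does not directly apply. There I would use the trivial bound $\BF z^\prime\bigl(\BF u^*(\BF z)-\BF G(c)\bigr)\leq 2\bar u\,\norm{\BF z}$ from Assumption~\ref{assumption:basic}(b); since $\norm{\BF z-\widehat{\BF Z}(c)}^2=\norm{\BF z}^2$ on that event, this equals $(2\bar u/\norm{\BF z})\,\norm{\BF z-\widehat{\BF Z}(c)}^2$, so the quadratic pointwise bound in fact holds everywhere with constant $\max\{J,\bar u\}$ in place of $J$, and the final constant becomes $h_2=2\max\{J,\bar u\}\,h_1$. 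Everything else is a routine combination of the two cited lemmas.
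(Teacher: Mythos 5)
Your proposal is correct and follows essentially the same route as the paper's proof: drop the nonpositive term $\widehat{\BF Z}(c)^\prime\bigl(\BF u^*(\BF z)-\BF G(c)\bigr)$ via greedy optimality, apply Cauchy--Schwarz, then the SBAR($J$) condition together with Lemma~\ref{lemma:normalized-difference} to get the quadratic pointwise bound, and finish with Lemma~\ref{lemma:square-difference}. Your explicit treatment of the $\widehat{\BF Z}(c)=\BF 0$ event via the trivial $2\bar u\norm{\BF z}$ bound is a slightly more careful variant of the paper's convention of fixing $\BF u^*(\BF 0)$ arbitrarily, but the argument is otherwise identical.
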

\begin{proof} The result is trivially true when $\BF z = \BF 0$. So, let 
us fix some $\BF z \in \Real^r \setminus \{\BF 0\}$.  
By comparing the greedy decision $\BF G(c)$ with the best arm $\BF u^*(\BF z)$, we see that the 
instantaneous regret satisfies
\bex
	\BF z^\prime \left( \BF u^*\left({\BF z} \right) - \BF G(c)  \right) 
		&=& 	\left( \BF z - \widehat{\BF Z}(c) \right)^\prime \BF u^*\left({\BF z} \right)
			 	+ 	 \left(\BF u^*\left({\BF z} \right)- \BF G(c)  \right)^\prime \widehat{\BF Z}(c) 
				+  \left( \widehat{\BF Z}(c) -  \BF z \right)^\prime \BF G(c)  \\
		&\leq& \left( \BF z - \widehat{\BF Z}(c) \right)^\prime \BF u^*\left({\BF z} \right)
				+  \left( \widehat{\BF Z}(c) -  \BF z \right)^\prime \BF G(c)  \\ 
		&=& \left( \widehat{\BF Z}(c) -  \BF z \right)^\prime \left( \BF G(c)  -  \BF u^*\left({\BF z} \right) \right)
		= \left( \widehat{\BF Z}(c) -  \BF z \right)^\prime \left(  \BF u^*\left( \widehat{\BF Z}(c)  \right)  -  \BF u^*\left({\BF z} \right) \right) ~,
\eex
where the inequality follows from the definition of the greedy decision in Equation (\ref{eq:greedy-decision}), and the final equality follows from the fact that
$\BF G(c) = \BF u^*\left( \widehat{\BF Z}(c) \right)$.  As a convention,
we define $\BF 0 / \norm{\BF 0}$ to some fixed unit vector and 
set $\BF u^*(\BF 0) = \BF u^*(
\BF 0 / \norm{\BF 0})$.
 
It then follows
from the Cauchy-Schwarz Inequality that, with probability one, 
\bex
	\BF z^\prime \left( \BF u^*(\BF z) - \BF G(c) \right) 
		&\leq& \norm{ \widehat{\BF Z}(c) -  \BF z }  \norm{ \BF u^*\left( \widehat{\BF Z}(c)  \right)  -  \BF u^*(\BF z)  } \\
		&=& \norm{ \widehat{\BF Z}(c) -  \BF z }  
		\norm{ \BF u^* \left( \frac{\widehat{\BF Z}(c)}{\|\widehat{\BF Z}(c)\|}  \right)  -  \BF u^*
		\left( \frac{\BF z}{\norm{\BF z}} \right)  } \\  
		&\leq&  J \norm{ \widehat{\BF Z}(c) -  \BF z }  \norm{  \frac{ \widehat{\BF Z}(c) }{ \|\widehat{\BF Z}(c)\|}  -  \frac{\BF z}{\norm{\BF z}} } 
		~\leq~  \frac{ 2 J \norm{ \widehat{\BF Z}(c) -  \BF z }^2 }{ \norm{\BF z} }~,
\eex
where the equality follows from the fact that $\BF u^*(\BF z) = \BF u^*( \lambda \BF z)$ for all
$\lambda > 0$.  The second inequality follows from {condition SBAR($J$),} and the final inequality follows from Lemma \ref{lemma:normalized-difference}.  The desired result follows by taking
conditional expectations, given ${\bf Z=z}$, and applying Lemma~\ref{lemma:square-difference}.
\end{proof}

%[REMOVED ONE LEMMA AND MERGED IN THE PROOF]

%The following lemma gives an upper  bound on the regret under the \textsc{PEGE} policy after $K$ cycles, corresponding to $r K + \sum_{c=1}^K c$ periods. The upper bound given in this lemma will be used to derive both the regret and risk bounds in Theorem \ref{theorem:greedy-regret}.

We can now complete the proof of Theorem \ref{theorem:greedy-regret}, by adding the regret over the differnt times and cycles.
By Assumption \ref{assumption:basic} and the Cauchy-Schwarz Inequality, the instantaneous regret from playing any arm $\BF u \in \CAL U_r$ is bounded above by 
$\max_{\BF v \in \CAL U_r} \BF z^\prime \left( \BF v - \BF u \right) \leq 2 \, \bar{u} \norm{\BF z}$.  Consider an arbitrary cycle~$c$.   Then, the total regret incurred during the exploration phase (with $r$ periods) in this cycle is bounded above by $2  \, \bar{u} \, r \, \norm{\BF z}$. 
During the exploitation phase of cycle $c$, we always play the greedy arm  $\BF G(c)$.
%Combining Lemma \ref{lemma:instantaneous-regret-greedy} 
%with the simple bound on the instantaneous regret, 
The expected instantaneous regret in each period during the exploitation phase is bounded above by $rh_2/c \norm{\BF z}$.
%$$
%	\E \left[ \max_{\BF u \in \CAL U^r} \BF z^\prime \left( \BF u - \BF G(c) \right)   ~\Big|~ \BF Z = \BF z \right] \leq  \min\left\{ 2  \, \bar{u}  \norm{\BF z} \,,\, \frac{r \, h_2}{c \norm{\BF z}}  \right\}
%	~\leq~ \max\{ h_2, 2 \bar{u} \} \min \left\{  \norm{\BF z}  \,,\, \frac{ r }{c \norm{\BF z}} \right\}~.
%$$
So, the total regret during cycle $c$ is bounded above by
$2  \, \bar{u} \, r \, \norm{\BF z} +  
h_2 \, r/ \norm{\BF z}  $. 
%c \max\{ h_2, 2 \bar{u} \} \min \left\{  \norm{\BF z}  \,,\, \frac{ r }{c \norm{\BF z}} \right\}$,
Summing over $K$ cycles, we obtain
$$	
{\rm Regret} \left( \BF z, \, r K + \sum_{c=1}^K c,  \, \textsc{PEGE} \right)
	\leq  h_3 \, r   \norm{\BF z} K  +  h_4 \sum_{c=1}^K \frac{r}  {\norm{\BF z}} ~,
$$
for some positive constants $h_3$ and $h_4$ that depend
only on $\sigma_0$, $\bar{u}$, $\lambda_0$, and $J$.

 Consider an arbitrary time period $T \geq r$ and $\BF z \in \Real^r$.  Let $K_0 = \left \lceil  \sqrt{ 2 T} \right \rceil$.  Note that the total time periods after $K_0$ cycles is at least $T$ because
$
	r K_0 + \sum_{c=1}^{K_0} c  \geq \sum_{c=1}^{K_0} c = \frac{K_0(K_0+1)}{2} \geq \frac{K_0^2}{2} \geq T~.
$
Since the cumulative regret is nondecreasing over time, it follows that
%from Lemma \ref{lemma:cycle-regret} that
\bex
		\Regret\left( \BF z, T, \textsc{PEGE} \right) &\leq& {\rm Regret} \left( \BF z, \, r K_0 + \sum_{c=1}^{K_0} c,  \, \textsc{PEGE} \right)\\
%		&\leq& h_3 \, r \norm{\BF z} K_0 + h_4  \sum_{c=1}^{K_0} c \min \left\{ \norm{\BF z}, \frac{r}{ c \norm{\BF z}} \right\}\\	
		&\leq& h_3 \, r \norm{\BF z} K_0 + h_4 \frac{r K_0}{  \norm{\BF z}} 
		~\leq~  3 \max\{h_3, h_4\} \left( \norm{\BF z}  + \frac{1}{\norm{\BF z}} \right) r \sqrt{T}~,
\eex
where the final inequality follows because  $K_0 = \left \lceil  \sqrt{ 2 T} \right \rceil \leq 3 \sqrt{T}$.
The risk bound follows by taking expectations and using the assumption on the boundedness of $\E[ \,\norm{\BF Z}\,]$ and $\E[\, 1/ \norm{\BF Z}\,]$.

%IMPORTANT NOTE:  A convex function $f : \Real^r \to \Real$ is differentiable at $\BF z \in \Real^r$ if and only if $f$ has only one subgradient at $\BF z$ (Theorem 1.5.12 in 
%Convex Bodies: The Brunn-Minkowski Theory
%By Rolf Schneider
%Published by Cambridge University Press, 1993
%ISBN 0521352207, 9780521352208
%490 pages
%)

\section{A Policy for General Bandits}
\label{section:active-exploration}

We have shown that when a bandit has a smooth best arm response, the
\textsc{PEGE} policy 
achieves optimal $O( r \sqrt{T} )$ regret and Bayes risk. 
The general idea is that  when the estimation error is small, the 
instantaneous regret of the greedy
decision based on our estimate $\widehat{\BF Z}(c)$ can be of the same order as
$\|\BF Z - \widehat{\BF Z}(c)\| $.  However, under
the smoothness assumption, this upper bound on the instantaneous regret is
improved to $O \left(\|\BF Z - \widehat{\BF Z}(c)\|^2\right)$, as shown in the proof of Lemma \ref{lemma:instantaneous-regret-greedy}, and this enables us to separate exploration from exploitation. 

However, if the number of arms is finite or if the collection of arms is an arbitrary compact set,  then the \textsc{PEGE} policy
may not be effective. This is because a small estimation error may have a disproportionately large effect on the arm chosen by a greedy policy, leading to a large instantaneous regret.
In this section, we discuss
a policy -- which we refer to as the \textsc{Uncertainty Ellipsoid (UE)} policy -- that can be applied to any bandit problem, at the price of slightly higher regret and Bayes risk.  In contrast to the \textsc{PEGE} policy, the \textsc{UE} policy  combines  active exploration and exploitation in every  period.  

As discussed in the introduction, the \textsc{UE} policy is 
closely related to 
the algorithms described 
in \citet{Auer:2002} and \citet{DHK:2008a}, but also has the ``anytime'' property (the policy does not require  prior knowledge of the time horizon $T$), and also allows the random vector $\BF Z$ and the errors 
$W^{\BF u}_t$ to be unbounded.  
For the sake of completeness, we give
a detailed description of our policy and state the regret and risk bounds that we obtain.  The reader can find the proofs of these bounds in Appendix \ref{appendix:UE-proofs}.

%For the proofs of these bounds, the reader is referred to
%our technical report (see, \citealp{RTsitsiklis:2008}). 

To facilitate exposition, we introduce a constant that 
will appear %frequently in the description of our results, namely
in the description of the policy, namely,
\be
	   \kappa_0 = 2 \sqrt{ 1 +  \log \left(1 + \frac{ 36 \, \bar{u}^2 }{ \lambda_0 }\right)}~,
	   \label{eq:constant-bounds}
\ee
where the parameters $\bar{u}$ and $\lambda_0$ are given in Assumption \ref{assumption:basic}.
The \textsc{UE} policy maintains, at each time period $t$,
the following two pieces of information.
\begin{enumerate}
\item The {\em ordinary least squares (OLS) estimate} defined as follows:
if $\BF U_1, \ldots, \BF U_t$ are the arms chosen  during the first $t$ periods, then the OLS estimate
$\widehat{ \BF Z}_t$ is given by\footnote{Let us note that we are abusing notation here.  Throughout
this section $\widehat{\BF Z}_t$ stands for the OLS estimate, which is different from the least mean squares 
estimator $\E \left[ \BF Z ~\big|~ \BF H_t \right]$ introduced in Section \ref{section:lower-bound}.}:
\be
	\BF C_t				= \left( \sum_{s=1}^t \BF U_s \BF U_s^\prime \right)^{-1}~,
	    \quad
	    	\BF M_t = \sum_{s=1}^t \BF U_s W_s~~,
	\quad \textrm{ and } \quad
	\widehat{\BF Z}_t =   \BF C_t \sum_{s=1}^t   \BF U_s X_s
	= \BF Z + \BF C_t \BF M_t~. %\sum_{s=1}^t  \BF U_{s} W_s ~, 
	\label{eq:OLS}
\ee
In contrast to the \textsc{PEGE} policy, whose estimates relied only on 
the  rewards observed in the exploration phases, 
the estimate 
$\widehat{\BF Z}_t$ incorporates {\em all} available information up to time $t$. 
We initialize the policy by playing $r$ linearly independent arms, so that
$\BF C_t$ is positive definite for $t\geq r$.

%where the last equality follows directly from the equation $X_t = \BF U_t^\prime \BF Z + W_t$
%that defines our model.
%The matrix $\BF C_t$ is the covariance of the estimation error $\widehat{\BF Z}_t - \BF Z$ if the noise terms $W_s$ all have the same variance. While this is not the case, in general, we will slightly abuse terminology and refer to $\BF C_t$ as the covariance matrix.  The stochastic process $\left\{ \BF M_s : s \geq 1 \right\}$ is a martingale
% associated with the OLS estimate $\widehat{\BF Z}_t$.
%

\item An {\em uncertainty ellipsoid } ${\CAL E}_t \subseteq \Real^r$  associated with the estimate $\widehat{ \BF  Z}_t$, defined by,
\be
	{\CAL E}_t  = \left\{ \BF w \in \Real^r :  \BF w^\prime  \BF C_t^{-1} \BF w  \leq  
	 \left( \alpha \, \sqrt{\log t} \,
		\sqrt{ \min\{ r \log t \,,\, \abs{\CAL U_r} \}}  \right)^2  \right\} 
		\quad \textrm{and} \quad
	\alpha = 4 \, \sigma_0 \, \kappa_0^2    ~,
	 \label{eq:uncertainty-ellipsoid}
\ee
where the parameters $\sigma_0$ and $\kappa_0$ are given 
in Assumption \ref{assumption:basic}(a) and Equation (\ref{eq:constant-bounds}).
The uncertainty ellipsoid ${\CAL E}_t$ represents the set of likely ``errors'' associated with the estimate 
$\widehat{\BF Z}_t$. 
We define the {\em uncertainty radius} $R_t^\BF u$
associated with each arm $\BF u$ as follows:
\be
	R_t^\BF u = \max_{\BF v \in {\CAL E}_t } \BF v^\prime \BF u 
%				= \alpha \, \sqrt{\log t} \,
%		\sqrt{ \min\{ r \log t \,,\, \abs{\CAL U} \}}  \cdot \sqrt{\BF u^\prime \BF C_t \BF u}
		= \alpha \, \sqrt{\log t} \,
		\sqrt{ \min\{ r \log t \,,\, \abs{\CAL U_r} \}}\left\| \BF u \right\|_{\BF C_t} ~. \label{eq:uncertainty-radius}
\ee
\end{enumerate}

A formal description of the policy
is given below. 

\vspace{0.2in}
%\newpage
\noindent \underline{\textsc{Uncertainty Ellipsoid (UE)}}

\noindent {\bf Initialization:} During the first $r$ periods, play the $r$ linearly independent arms 
$\BF b_{1}, \BF b_{2}, \ldots, \BF b_{r}$ given in Assumption \ref{assumption:basic}(b).
Determine the OLS estimate $\widehat{\BF Z}_r$, the uncertainty ellipsoid ${\CAL E}_r$,
and
the uncertainty radius associated with each arm.
%% $\BF u$ (see 
%as in Equations (\ref{eq:OLS}), (\ref{eq:uncertainty-ellipsoid}), and (\ref{eq:uncertainty-radius})).

\noindent {\bf Description:}  For $t \geq r+1$, do the following:
\begin{itemize}
\item[(i)] Let $\BF U_t \in \CAL U_r$ be an arm that gives the maximum estimated reward over the ellipsoid $\widehat{\BF Z}_{t-1} + {\CAL E}_{t-1}$, that is,
\be
		\BF U_t = \arg \max_{\BF v \in \CAL U_r} 
				  		\left\{  \BF v^\prime \widehat{\BF Z}_{t-1}+ 
						\max_{\BF w \in {\CAL E}_{t-1}}
 						\BF w^\prime \BF v \right\}
					= \arg \max_{\BF v \in \CAL U_r} \left\{  \BF v^\prime \widehat{\BF Z}_{t-1} + R_{t-1}^\BF v
						\right\}~, \label{eq:UE-decision}
\ee
where the uncertainty radius $R_{t-1}^{\BF v}$ is defined in Equation (\ref{eq:uncertainty-radius}); ties are broken arbitrarily.  
%The compactness of
%$\CAL U$ guarantees that such a $\BF U_t$ exists.
\item[(ii)] Play arm $\BF U_t$ and observe the resulting reward $X_t$.
\item[(iii)] Update %the covariance matrix $\BF C_t$, 
the OLS estimate $\widehat{\BF Z}_t$, the uncertainty ellipsoid
${\CAL E}_t$, and the uncertainty radius $R_t^\BF u$ of each arm $\BF u$, using the formulas in Equations (\ref{eq:OLS}), (\ref{eq:uncertainty-ellipsoid}), and (\ref{eq:uncertainty-radius}). 
\end{itemize}
\vspace{0.05in}
%We note that the update of the covariance matrix $\BF C_t$ and the OLS estimate $\widehat{\BF Z}_t$
%can be carried out in  $O\left( r^2 \right)$ steps using the standard recursive least squares formulas 
%(see, for example, Appendix E in
%\citealp{Bertsekas1:1995}).

By choosing an arm that maximizes the estimated reward over the 
ellipsoid $\widehat{\BF Z}_{t} + {\CAL E}_{t}$, our policy
involves simultaneous 
exploitation (via the term $\BF v^\prime \widehat{\BF Z}_{t}$) and exploration (via the term $R_{t}^\BF v = 
\max_{\BF w \in {\CAL E}_{t}} \BF w^\prime \BF v$) in every period. 
The ellipsoid ${\CAL E}_t$ reflects the uncertainty in our OLS estimate $\widehat{\BF Z}_t$.
It generalizes the classical upper confidence index introduced by \cite{LaiRobbins:1985}, to account for correlations among the arm rewards.  
In the special case of $r$  independent arms where $\CAL U_r = \left\{ \BF e_1, \ldots, \BF e_r \right\}$, it is easy to verify that for each arm $\BF e_\ell$,
the expression $\BF e_\ell^\prime \widehat{\BF Z}_t +  R_t^{\BF e_\ell}$ coincides (up to a scaling constant)
with the upper confidence bound used by \cite{AuerCF:2002}.   
Our definition of the uncertainty radius involves an extra factor of $\sqrt{ \min\{ r\log t, \abs{\CAL U_r}\}}$,
 in order to handle
the case where 
the arms are not standard unit vectors, and the rewards are correlated.

The main results of this section are given in the following two theorems. 
The first theorem establishes upper bounds on the regret and risk when
the set of arms is an arbitrary compact set.
This result shows that the \textsc{UE} policy
is nearly optimal, admitting
upper bounds that are within a logarithmic factor of the $\Omega( r \sqrt{T} )$ 
lower bounds given in Theorem \ref{thm:lower-bound}.
Although the proof of this theorem makes use of somewhat different (and novel) 
large deviation inequalities for %unbounded 
adaptive 
least squares estimators,
the argument
shares similarities
with the proofs given in \citet{DHK:2008a}, and we omit the details. 
The reader can find a complete proof in Appendix \ref{appendix:proof-UE-regret-analysis}. %Section 4 of \citet{RTsitsiklis:2008}.

\begin{thm}[Bounds for General Compact Sets of Arms] \label{theorem:UE-regret-analysis}  Under Assumption \ref{assumption:basic}, there exist positive
constants $a_4$ and $a_5$ that depend only on the parameters $\sigma_0$, $\bar{u}$, 
and $\lambda_0$,
such that for all $T \geq r+1$ and $\BF z \in \Real^r$,
$$
{\rm Regret} \left ( \BF z, T, \textsc{UE} \right) ~\leq~
	a_4 r \norm{\BF z}  + a_5 \, r \, \sqrt{T} \, \log^{3/2} T~,
$$
and
$$
	{\rm Risk} \left( T, \textsc{UE} \right) \leq a_4 r \, \E \left[ \norm{\BF Z} \right]  + a_5 \, r \, \sqrt{T} \, \log^{3/2} T~.
$$
\end{thm}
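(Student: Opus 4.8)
The plan is to run the ``optimism in the face of uncertainty'' argument, adapted to the correlated linear setting and to unbounded noise. The central object is the \emph{good event} $\CAL G_t = \{\BF Z \in \widehat{\BF Z}_{t-1} + \CAL E_{t-1}\}$, on which the true parameter lies inside the uncertainty ellipsoid. Since $\widehat{\BF Z}_{t-1} - \BF Z = \BF C_{t-1}\BF M_{t-1}$ by Equation (\ref{eq:OLS}), membership in the ellipsoid is equivalent to the self-normalized bound $\norm{\BF M_{t-1}}_{\BF C_{t-1}} = \sqrt{\BF M_{t-1}^\prime \BF C_{t-1}\BF M_{t-1}} \le \alpha\sqrt{\log t}\,\sqrt{\min\{r\log t,\,\abs{\CAL U_r}\}}$. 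First I would prove that $\CAL G_t$ holds for all $t \le T$ with probability failing only summably, using Assumption \ref{assumption:basic}(a): because the arms $\BF U_s$ are chosen adaptively, $\BF M_t = \sum_{s\le t}\BF U_s W_s$ is a martingale with conditionally sub-Gaussian increments, and the required control of $\norm{\BF M_{t-1}}_{\BF C_{t-1}}$ is a self-normalized martingale inequality. The two terms inside the $\min$ come from two ways of bounding this quantity: an $\eps$-net/covering argument over the directions of the ellipsoid, giving the $r\log t$ factor, and a direct union bound over the finitely many arms, giving the $\abs{\CAL U_r}$ factor.

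On the good event the instantaneous regret collapses to twice the uncertainty radius of the played arm. Writing $\BF u^*$ for a maximizer of $\BF v \mapsto \BF v^\prime\BF z$, optimism and the selection rule (\ref{eq:UE-decision}) give $\BF u^{*\prime}\BF z \le \BF u^{*\prime}\widehat{\BF Z}_{t-1} + R_{t-1}^{\BF u^*} \le \BF U_t^\prime\widehat{\BF Z}_{t-1} + R_{t-1}^{\BF U_t}$, while $\BF U_t^\prime\widehat{\BF Z}_{t-1} - \BF U_t^\prime\BF z \le R_{t-1}^{\BF U_t}$ because $R_{t-1}^{\BF U_t}$ is the support function of $\CAL E_{t-1}$ and $\widehat{\BF Z}_{t-1} - \BF z\in\CAL E_{t-1}$ on $\CAL G_t$. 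Adding these, the instantaneous regret is at most $2R_{t-1}^{\BF U_t} = 2\alpha\sqrt{\log t}\,\sqrt{\min\{r\log t,\abs{\CAL U_r}\}}\,\norm{\BF U_t}_{\BF C_{t-1}}$, so the whole problem reduces to bounding $\sum_{t=r+1}^T \norm{\BF U_t}_{\BF C_{t-1}}$.

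The key deterministic ingredient is an elliptical-potential bound: I would show $\sum_{t=r+1}^T \BF U_t^\prime\BF C_{t-1}\BF U_t = O(r\log T)$ by telescoping determinants, using $\det(\BF C_t^{-1}) = \det(\BF C_{t-1}^{-1})(1 + \BF U_t^\prime\BF C_{t-1}\BF U_t)$ together with $x \le 2\log(1+x)$, valid because $\BF U_t^\prime\BF C_{t-1}\BF U_t$ is uniformly bounded (the initialization forces $\lambda_{\min}(\BF C_{t-1}^{-1})\ge\lambda_0$ and $\norm{\BF U_t}\le\bar u$), and then bounding $\log\det(\BF C_T^{-1}) - \log\det(\BF C_r^{-1}) = O(r\log T)$ since every eigenvalue of $\BF C_T^{-1} = \sum_s\BF U_s\BF U_s^\prime$ is at most $\bar u^2 T$. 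Cauchy--Schwarz then gives $\sum_t\norm{\BF U_t}_{\BF C_{t-1}} \le \sqrt{T\sum_t\BF U_t^\prime\BF C_{t-1}\BF U_t} = O(\sqrt{rT\log T})$, and multiplying by the radius prefactor $\alpha\sqrt{\log t}\sqrt{r\log t} = O(\sqrt r\,\log T)$ produces the claimed $O(r\sqrt T\log^{3/2}T)$ bound on $\CAL G_t$.

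Two clean-up steps remain. The initial $r$ exploration periods supply the $a_4 r\norm{\BF z}$ term: for fixed $\BF z$ the per-period regret is deterministically at most $2\bar u\norm{\BF z}$ by Cauchy--Schwarz, so initialization costs $2\bar u r\norm{\BF z}$; and since $\sum_t\Pr(\CAL G_t^c) = O(1)$, the expected regret off the good event is only $O(\norm{\BF z})$, which is absorbed. This deterministic per-step bound is precisely what lets me dispense with boundedness of $\BF Z$ and $W$. The Bayes-risk bound follows by taking expectations over $\BF Z$, replacing $\norm{\BF z}$ by $\E[\norm{\BF Z}]$. I expect the genuine obstacle to be the self-normalized concentration inequality of the first paragraph: controlling $\norm{\BF M_{t-1}}_{\BF C_{t-1}}$ uniformly in $t$ for adaptively chosen, possibly infinitely many arms under merely sub-Gaussian noise is delicate, and extracting the sharp $\min\{r\log t,\abs{\CAL U_r}\}$ dependence — rather than a cruder factor — is exactly what keeps the final regret at $r\sqrt T\log^{3/2}T$.
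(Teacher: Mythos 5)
Your proposal is correct and, for most of its length, mirrors the paper's own proof: the per-period deviation bound of Lemma \ref{lemma:uncertainty-radius-tail-bound} is exactly your ``good event'' (the paper phrases it as a $1/t^2$ failure probability per period rather than a single uniform event, but the content --- reduce to the self-normalized quantity $\sqrt{\BF M_t^\prime \BF C_t \BF M_t}$, handle infinitely many arms by a covering of the sphere to get the $r\log t$ factor, and finitely many arms by a union bound over the count vectors $(N^{\BF u}(\BF z,t))_{\BF u}$ to get the $\abs{\CAL U_r}$ factor --- is the same as Theorems \ref{thm:chernoff-inq-for-UE-finite-arms} and \ref{thm:chernoff-inq-for-UE-infinite-arms}); your optimism step matches Lemma \ref{lemma:tail-bound-for-instantaneous-regret}; your treatment of the off-event regret via the deterministic bound $2\bar u\norm{\BF z}$ and of the initialization matches Lemma \ref{lem:regret-decomposition}. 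The one place where you take a genuinely different route is the elliptical-potential bound $\sum_t \BF U_t^\prime \BF C_{t-1}\BF U_t = O(r\log T)$. You use the standard log-determinant telescoping, $\det(\BF C_t^{-1}) = \det(\BF C_{t-1}^{-1})(1+\BF U_t^\prime\BF C_{t-1}\BF U_t)$ combined with $x \le \mathrm{const}\cdot\log(1+x)$ on a bounded range; the paper instead extracts from the same determinant identity a recursive \emph{constraint} $\norm{\BF U_{t+1}}_{\BF C_t}^2 \le \{(\bar u^2/\lambda_0)(t+1)\}^r/\prod_{s<t}(1+\norm{\BF U_{s+1}}_{\BF C_s}^2)$ (Lemma \ref{lemma:weghted-norm-constraint}), poses the worst case as an optimization problem $V^*(c,t)$, and bounds it by a continuous-time ODE comparison (Lemmas \ref{lem:optimization-connnection} and \ref{lem:optimization-solution}). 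Both yield $O(r\log T)$ with constants depending on $\bar u^2/\lambda_0$; yours is the shorter and more reusable argument, the paper's is self-contained but more roundabout. One small constant-tracking point: $x\le 2\log(1+x)$ only holds for $x\lesssim 2.51$, whereas your uniform bound on $\BF U_t^\prime\BF C_{t-1}\BF U_t$ is $\bar u^2/\lambda_0$, so the factor $2$ must be replaced by $c_0/\log(1+c_0)$ with $c_0=\max\{1,\bar u^2/\lambda_0\}$; this is harmless since the theorem's constants are allowed to depend on $\bar u$ and $\lambda_0$.
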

%The result of Theorem \ref{theorem:UE-regret-analysis} is somewhat surprising.  When we have a finite number of arms,
%we know from \citet{Lai:1987} that the cumulative Bayes risk scales with the square of the logarithm over time.
%When we have infinitely many arms, however, Theorem \ref{theorem:UE-regret-analysis} shows that the cumulative risk
%under an arbitrary policy is $\Omega(r \sqrt{T})$, and we cannot hope to have an algorithm whose risk is poly-logarithmic in $T$. 

 For  any arm $\BF u \in \CAL U_r$ and $\BF z \in \Real^r$, 
let $\Delta^{\BF u}\left( \BF z \right)$ denote the difference between the maximum expected reward and the expected reward of arm $\BF u$ when $\BF Z = \BF z$, that is,
$$
\Delta^{\BF u} \left( \BF z \right) =  \max_{\BF v \in \CAL U_r} \BF v^\prime \BF z - \BF u^\prime \BF z~.
$$ 
When the number of arms is finite,
it turns out that we can obtain bounds on regret
and risk that scale more gracefully over time, 
growing as $\log T$ and $\log^2 T$, respectively.
This result is stated in Theorem \ref{thm:regret-and-risk-small-arms},
which shows that, for a fixed set of arms,
the \textsc{UE} policy 
is asymptotically  optimal as a function time,
within a constant factor of the lower
bounds established by \citet{LaiRobbins:1985} and \citet{Lai:1987}.

\begin{thm}[Bounds for Finitely Many Arms] \label{thm:regret-and-risk-small-arms}
 Under Assumption \ref{assumption:basic}, there exist positive
constants $a_6$ and $a_7$ that depend only on the parameters $\sigma_0$, $\bar{u}$, 
and $\lambda_0$ 
such that for all $T \geq r+1$ and $\BF z \in \Real^r$,
$$
{\rm Regret} \left ( \BF z, T, \textsc{UE} \right) ~\leq~
a_6 \, \abs{\CAL U_r} \norm{\BF z}   + a_7 \, 
	  \abs{\CAL U_r} \,  \sum_{\BF u \in \CAL U_r} \min \left\{ \frac{ \log T}{\Delta^{\BF u} \left( \BF z \right)} ~,~ T \Delta^{\BF u}(\BF z) \right\}~.
$$
Moreover, suppose that
there exists a positive constant $M_0$ such that, for all arms $\BF u$, the distribution of the random variable 
$\Delta^{\BF u} \left( \BF Z \right)$ is described by a point mass at $0$, and a density function that is 
bounded above by $M_0$ on $\Real_+$. Then, there exist positive
constants $a_8$ and $a_9$ that depend only on the parameters $\sigma_0$,  $\bar{u}$, 
$\lambda_0$, and $M_0$,
such that for all $T \geq r+1$,
$$
	{\rm Risk} \left( T, \textsc{UE} \right) \leq
	a_8 \, \abs{\CAL U_r}\, \E \left[ \norm{\BF Z} \right]  + a_9 \,  \abs{\CAL U_r}^2 \, \log^2 T~.
$$
\end{thm}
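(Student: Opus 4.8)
The plan is to bound the regret pathwise for a fixed realization $\BF Z = \BF z$ by splitting the $T$ periods into three contributions---the $r$ initialization rounds, the rounds on which a ``good event'' fails, and the rounds on which it holds---and then to integrate the resulting bound against the prior to obtain the risk. I would define the good event at time $t$ as $\CAL G_t = \{\BF Z - \widehat{\BF Z}_{t-1} \in \CAL E_{t-1}\}$, i.e., the true parameter lies in the uncertainty ellipsoid centered at the OLS estimate. The same adaptive large-deviation inequalities that drive the proof of Theorem \ref{theorem:UE-regret-analysis} show that $\Pr(\CAL G_t^c)$ decays fast enough to be summable in $t$; since the instantaneous regret never exceeds $2\bar{u}\norm{\BF z}$ (Cauchy--Schwarz and Assumption \ref{assumption:basic}(b)), the initialization rounds and all rounds on which $\CAL G_t$ fails together contribute at most a term of order $\bar u\, r\, \norm{\BF z} \le a_6 \abs{\CAL U_r} \norm{\BF z}$ (using $\abs{\CAL U_r} \ge r$). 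I would quote this portion directly from the appendix machinery rather than redo it.

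The heart of the argument is a per-arm play-count bound on the good event. Optimism gives the standard inequality: whenever $\CAL G_t$ holds and a suboptimal arm $\BF u = \BF U_t$ is selected, the maximization in (\ref{eq:UE-decision}) forces $\Delta^{\BF u}(\BF z) \le 2 R_{t-1}^{\BF u}$, and hence, by the definition (\ref{eq:uncertainty-radius}) together with $\min\{r\log t, \abs{\CAL U_r}\} \le \abs{\CAL U_r}$ and $\log t \le \log T$,
\[
\norm{\BF u}_{\BF C_{t-1}}^2 \ \ge\ \frac{\left(\Delta^{\BF u}(\BF z)\right)^2}{4\alpha^2\, \abs{\CAL U_r}\,\log T}~.
\]
The key point---and the reason the final dependence is $\log T$ rather than $\log^2 T$---is to track the ``self-information'' of a single arm instead of the global elliptical potential. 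Since $\BF C_t \preceq \BF C_{t-1}$ for every $t$, the quantity $\norm{\BF u}_{\BF C_t}^2$ is nonincreasing, and the Sherman--Morrison identity applied at a play of $\BF u$ gives $\norm{\BF u}_{\BF C_t}^2 = \norm{\BF u}_{\BF C_{t-1}}^2/(1+\norm{\BF u}_{\BF C_{t-1}}^2)$; telescoping the induced recursion on reciprocals yields $\norm{\BF u}_{\BF C_{t-1}}^2 \le 1/(j-1)$ at the $j$-th play of $\BF u$. Combining this decay with the lower bound above gives $N_T^{\BF u} \le 1 + 4\alpha^2 \abs{\CAL U_r}\log T / \left(\Delta^{\BF u}(\BF z)\right)^2$ for each suboptimal arm, where $N_T^{\BF u}$ is its number of plays. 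Multiplying by $\Delta^{\BF u}(\BF z)$, taking the minimum with the trivial bound $T\Delta^{\BF u}(\BF z)$, and summing over arms (absorbing the stray $\sum_{\BF u}\Delta^{\BF u}(\BF z) \le 2\bar u \abs{\CAL U_r} \norm{\BF z}$ into the first term) produces exactly the claimed regret bound.

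For the risk, I would take the expectation of the regret bound over the prior on $\BF Z$, so it remains to bound $\E\!\left[\min\{\log T/\Delta^{\BF u}(\BF Z),\, T\Delta^{\BF u}(\BF Z)\}\right]$ for each arm. The point mass of $\Delta^{\BF u}(\BF Z)$ at $0$ contributes nothing, so only the density part matters. Writing $\delta^\ast = \sqrt{\log T / T}$ for the crossover of the two terms, I would split the integral at $\delta^\ast$ and at $1$: on $(0,\delta^\ast)$ the integrand is $T\delta\, g(\delta) \le M_0 T \delta$, giving $O(\log T)$; on $(\delta^\ast,1)$ the bounds $g \le M_0$ and $\int_{\delta^\ast}^1 \delta^{-1}\,d\delta = -\log\delta^\ast = O(\log T)$ give $O(\log^2 T)$; and on $(1,\infty)$ the weight $1/\delta \le 1$ together with total mass at most one gives $O(\log T)$. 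Thus each arm contributes $O(\log^2 T)$, and summing over the $\abs{\CAL U_r}$ arms with the prefactor $\abs{\CAL U_r}$ yields the $a_9 \abs{\CAL U_r}^2 \log^2 T$ term, while $a_6 \abs{\CAL U_r}\E[\norm{\BF Z}]$ becomes the $a_8 \abs{\CAL U_r}\E[\norm{\BF Z}]$ term.

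The main obstacle is the per-arm analysis of the second paragraph: a naive appeal to the global bound $\sum_t \norm{\BF U_t}_{\BF C_{t-1}}^2 = O(r\log T)$ loses a factor of order $r\log T$ and delivers only a $\log^2 T$ scaling. Replacing it with the monotone Sherman--Morrison recursion for a single fixed direction is what produces the correct $\log T/\Delta^{\BF u}$ growth; verifying that recursion and its interaction with the optimism inequality is the delicate step. The risk computation is then routine calculus, with the only care needed in splitting the crossover integral so that neither the small-gap nor the large-gap region blows up.
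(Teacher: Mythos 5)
Your proposal is correct and follows essentially the same route as the paper's proof: the per-arm play-count bound $N^{\BF u} \leq O\bigl(1 + \alpha^2 \abs{\CAL U_r} \log T / (\Delta^{\BF u}(\BF z))^2\bigr)$ obtained by combining the optimism inequality $\Delta^{\BF u}(\BF z) \leq 2R_{t-1}^{\BF u}$ with the Sherman--Morrison fact $\norm{\BF u}_{\BF C_t}^2 \leq 1/N^{\BF u}(\BF z,t)$ and a summable $1/t^2$ bad-event probability, followed by the same three-piece splitting of $\E\bigl[\min\{\log T/\Delta^{\BF u}(\BF Z),\, T\Delta^{\BF u}(\BF Z)\}\bigr]$ at $\sqrt{\log T/T}$ and at $1$. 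Your good-event/telescoped-reciprocal phrasing is just a repackaging of the paper's threshold-$\theta$ counting argument from Appendix B.3, so no substantive difference remains.
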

\begin{proof}
For any arm $\BF u \in \CAL U_r$ and $\BF z \in \Real^r$, let the random variable $N^\BF u(\BF z, T)$ denote the total number of
times that the arm $\BF u$ is chosen during periods $1$ through $T$, given that $\BF Z = \BF z$.
Using an argument similar to the one in \cite{AuerCF:2002}, we can show
that
$$
	\E \left[ N^\BF u(\BF z, T) \mid \BF Z = \BF z\right] \leq 6 
		+\frac{4 \alpha^2  \abs{\CAL U_r}  \log T }{ \left( \Delta^{\BF u} \left( \BF z \right) \right)^2} ~.
$$
The reader can find a  proof of this result in Appendix \ref{appendix:proof-regret-and-risk-small-arms}.% in \citet{RTsitsiklis:2008}.

The regret bound in Theorem \ref{thm:regret-and-risk-small-arms} then follows immediately from the above upper bound
and the fact that $N^\BF u(\BF z, T) \leq T$ with probability one, because
\bex
	\Regret \left ( \BF z, T, \textsc{UE} \right)
	&=& \sum_{\BF u \in \CAL U_r} \Delta^{\BF u} \left(\BF z \right) \E \left[ N^\BF u(\BF z, T) \mid \BF Z = \BF z\right] 
	   \leq \sum_{\BF u \in \CAL U_r} \Delta^{\BF u} \left(\BF z \right) \min \left\{ 6 + 
	\frac{4 \alpha^2 \abs{\CAL U_r}  \log T  }{  \left( \Delta^{\BF u} \left(\BF z \right) \right)^2 } \,,\,  T \right\}\\
	&\leq& 6 \sum_{\BF u \in \CAL U_r} \Delta^{\BF u} \left(\BF z \right) 
		+  \max\{4 \alpha^2 ,1\} \, \abs{\CAL U_r} \, \sum_{\BF u \in \CAL U_r} \min\left\{ \frac{  \log T  }{  \Delta^{\BF u} \left(\BF z \right) } \,,\,  T \Delta^{\BF u} \left(\BF z \right)   \right\}~,
\eex
and the desired result follows from the fact that $\Delta^{\BF u}(\BF z) = \max_{\BF v \in \CAL U_r} \left( \BF v -\BF u \right)^\prime \BF z \leq  2 \bar{u} \norm{\BF z}$, by the Cauchy-Schwarz Inequality.

We will now establish an upper bound on the Bayes risk.  From the regret bound, it suffices to show  that for any $\BF u \in \CAL U_r$,
$$
	\E \left[ 					\min \left\{ \frac{ 
			\log T}{  \Delta^{\BF u} \left( \BF Z \right) }, T \Delta^{\BF u} \left( \BF Z \right) \right\}
 \right]  \leq (M_0 +1)\log T +  M_0 \log^2 T~.
$$
Let $q^{\BF u}(\cdot)$ denote the density function associated with the random variable $\Delta^{\BF u} \left( \BF Z \right)$.
Then,
\bex
\E \left[ 					\min \left\{ \frac{ 
			\log T }{  \Delta^{\BF u} \left( \BF Z \right) }, T \Delta^{\BF u} \left( \BF Z \right) \right\}
 \right] 
 %&=& \int_0^\infty \min \left\{ \frac{ \log T}{x},  T x \right\} q^{\BF u}(x) dx \\
&=& \int_0^{ \sqrt{ \frac{\log T}{T}}}   \min \left\{ \frac{ \log T}{x},  Tx  \right\} q^{\BF u}(x) dx   \\
&&  \hspace{-0.2in}+ \int_{ \sqrt{ \frac{\log T}{T}}}^1  \min \left\{ \frac{ \log T}{x},  T x \right\} q^{\BF u}(x) dx + \int_1^\infty   \min \left\{ \frac{ \log T}{x},  T x\right\} q^{\BF u}(x) dx\, .
\eex

We will now proceed to bound each of the three terms on the right hand side of the above equality. 
Having assumed that
$q^{\BF u}(\cdot) \leq M_0$, the first term satisfies
$$
	\int_0^{  \sqrt{ (\log T) /T}}   \min \left\{ \frac{ \log T}{x},  Tx  \right\} q^{\BF u}(x) dx
	\leq M_0 	\int_0^{ \sqrt{ (\log T) /T}} Tx  \, dx = M_0T    \frac{x^2}{2} ~\Big|_0^{\sqrt{ (\log T) /T}} \leq   M_0  \log T \, .
$$
For the second term, note that 
\bex
				\int_{\sqrt{ (\log T) /T}}^1  \min \left\{ \frac{ \log T}{x},  T x \right\} q^{\BF u}(x) dx 
				&\leq& M_0  \int_{ \sqrt{ (\log T) /T}}^1  \frac{ \log T}{x} dx  =  M_0 \log T   \cdot \left(  \log x ~\Big|_{\sqrt{(\log T)/T}}^1 \right)\\
		&=&  M_0 \left(\log T \right) \cdot \frac{ \log T  - \log \log T }{2} \leq M_0 \log^2 T~,
\eex
where the last inequality follows from the fact that $\log T - \log \log T \leq 2 \log T$ for all $T \geq 2$.
To evaluate the last term, 
note that $\frac{\log T}{x} \leq \log T$ for all $x \geq 1$, and thus,
$
\int_1^\infty   \min \left\{ \frac{ \log T}{x},  T x\right\} q^{\BF u}(x) dx
\leq \log T \int_1^\infty q^{\BF u}(x) \leq \log T~.
$
Putting everything together, we have that
$
		{\E \left[ 					\min \left\{ \frac{ 
			\log T }{  \Delta^{\BF u} \left( \BF Z \right) } \,,\, T \Delta^{\BF u} \left( \BF Z \right) \right\}
 \right]}
 	\leq  (M_0+1) \log T +  M_0 \log^2 T
$,
which is the desired result.
\end{proof}

We conclude this section by giving an example of a random vector $\BF Z$ that satisfies the
condition in Theorem~\ref{thm:regret-and-risk-small-arms}.  A similar example also
appears in Example 2 of \cite{Lai:1987}.

\begin{example}[IID Random Variables]{\rm Suppose $\CAL U_r = \left\{ \BF e_1, \ldots, \BF e_r \right\}$
and
$\BF Z = \left( Z_1, \ldots, Z_r \right)$, where the random variables $Z_k$ are independent and identically distributed
with a common cumulative distribution function $F$ and  a
density function $f: \Real \to \Real$ which is bounded above by $M$. 
Then, for each $k$, the random variable $\Delta^{\BF e_k} \left( \BF Z \right)$ is given by
$
	\Delta^{\BF e_k} \left( \BF Z \right) = \left( \max_{j=1,\ldots, r} Z_j  \right) - Z_k =  \max \left\{ 0 ~,~ \max_{j \ne k} \left\{ Z_j - Z_k \right\} 
	\right\}~.
$
It is easy to verify that $\Delta^{\BF e_k} \left( \BF Z \right)$
has a point mass at $0$ and  a continuous density function $q_k(\cdot)$ on $\Real_+$ given by: for any $x > 0$,
$$
	q_k(x) =  (r-1) \int  \left\{ F( z_k + x) \right\}^{r-2}  f( z_k + x) f(z_k) dz_k
		~\leq~ (r-1) M~.
$$
%Therefore, for any $x > 0$,
%\bex
%	Q_i(x) &:=& \Pr \left\{ \Delta^{\BF e_i} \left( \BF Z \right) \leq x \right\} =  \Pr \left\{ Z_j - Z_i  \leq x \textrm{ for all } j \ne i \right\}\\
%			  &=& \int  \Pr \left\{ Z_j - z_i  \leq x \textrm{ for all } j \ne i ~\big|~ Z_i = z_i \right\} f(z_i) dz_i
%			  = \int \left\{ F( z_i + x) \right\}^{r-1} f(z_i) dz_i~,
%\eex
%where the final equality follows from the fact that $Z_1, \ldots, Z_r$ are i.i.d.
%Note that $Q_i(\cdot)$ is differentiable for all $x > 0$. Moreover, for all $x > 0$,
%$$
%	 \frac{ d Q_i}{d x} (x) = (r-1) \int  \left\{ F( z_i + x) \right\}^{r-2}  f( z_i + x) f(z_i) dz_i
%		\leq (r-1) M
%$$
%because the density function $f$ is bounded above by $M$.
}
\end{example}

\pr{
\subsection{Regret Bounds for Polyhedral Sets of Arms}

In this section, we focus on the regret profiles when the set of arms $\CAL U_r$  is a  polyhedral set. 
% It follows from a standard result in linear programming that
%the maximum of a linear function over a polyhedral set occurs at extreme points.  
Let $\CAL E (\CAL U_r)$ denote the set of extreme points of $\CAL U_r$.
\jnt{From a standard result in linear programming,  for all $\BF z \in \Real^r$,}
$$
		\max_{\BF u \in \CAL U_r} \BF u^\prime \BF z = \max_{\BF u \;\in\; \CAL E(\CAL U_r)} \BF u^\prime \BF z ~.
$$
Since a polyhedral set has a finite number of extreme points ($\abs{ \CAL E (\CAL U_r) }<\infty$), 
\jnt{the parameterized bandit problem} can be reduced to the standard multi-armed bandit problem, where each arm corresponds to an extreme point of $\CAL U_r$.  We can thus apply the algorithm of \cite{LaiRobbins:1985} and obtain the following upper bound on the $T$-period cumulative regret for \jnt{polyhedra}
\beq
	\Regret\left( \BF z, T, \textrm{Lai's Algorithm} \right) 
	= O\left(  \frac{ \abs{ \CAL E (\CAL U_r) } \cdot \log T }
						{ \min
								\left\{ \Delta^{\BF u}(\BF z)  : \Delta^{\BF u}(\BF z)  > 0 \right\} } \right)~,\label{eq:lai-bound-polyhedral}
\eeq 
where the denominator corresponds to the difference between the expected reward of the optimal and the second best extreme points.
The algorithm of \cite{LaiRobbins:1985} is effective \jnt{only} when the polyhedral set $\CAL U_r$ has a small number of extreme points, as shown \jnt{by} the following example\jnt{s}.

\begin{example}[Simplex]
{\rm Suppose
$
\CAL U_r = \left\{ \BF u \in \Real^r : \sum_{i=1}^r \abs{u_i} \leq 1 \right\}
$
is an $r$-dimensional unit simplex. Then, $\CAL U_r$ has $2r$ extreme points, and  Equation (\ref{eq:lai-bound-polyhedral}) gives an $O(r \log T)$ upper bound on the regret.
}
\end{example}

\begin{example}[Linear Constraints]
{\rm Suppose that
$
\CAL U_r = \left\{ \BF u \in \Real^r:  \BF A \BF u \leq \BF b \textrm{ and } \BF u \geq \BF 0 \right\}
$,
where $\BF A$ is a \jnt{$p \times r$} matrix with $p \leq r$.  It follows from the standard linear programming theory that
%,  for any $\BF z \in \Real^r$, the  optimal solution
%$\BF u^*(\BF z) \in \CAL U_r$  
every extreme point is a basic feasible solution, which has at most $p$ nonzero coordinates (see, for example, \citealp{BertsimasTsitsiklis:1997}).  Thus, the number \jnt{of} extreme points is bounded above by ${r+p \choose p} = \jnt{O((2r)^p)}$, and  Equation (\ref{eq:lai-bound-polyhedral}) gives an \jnt{$O( (2r)^p \log T)$} upper bound on the regret.
}
\end{example}

In general, the number of extreme points of a \jnt{polyhedron} can be very large, rendering the bandit algorithm of \cite{LaiRobbins:1985} ineffective; consider, for example, the $r$-dimensional cube $\CAL U_r = \left\{ \BF u \in \Real^r : \abs{u_i} \leq 1 \textrm{ for all } i \right\}$, which  has $2^r$ extreme points.  Moreover, we cannot apply the results and algorithms from Section \ref{section:matching-upper-bound} to the convex hull of $\CAL U_r$.  This is because the convex hull of a polyhedron is {\em not} strongly convex (it cannot be written as an intersection of Euclidean balls), and thus, it does not satisfy the required SBAR$(\cdot)$ condition in Theorem~\ref{theorem:greedy-regret}.   The \textsc{UE} policy in the previous section \jnt{gives} $O(r \sqrt{T} \log^{3/2} T)$ regret and risk upper bounds.  However, finding an algorithm specifically for polyhedral sets that yields an $O( r \sqrt{T} )$ regret upper bound (without an additional logarithmic factor) remains an open question.  
}

\section{Conclusion} \label{section:conclusion}

We analyzed a class of multiarmed bandit problems where the expected reward of
each arm depends linearly on an unobserved random vector $\BF Z \in \Real^r$, with $r \geq 2$. 
Our model allows for correlations among the rewards of different arms. 
When we have a smooth best arm response, we showed that 
a policy that alternates between exploration and exploitation is optimal. For a general bandit,
we proposed a near-optimal policy that performs active exploration
in every period.  For finitely many arms,
our policy achieves asymptotically optimal regret and risk
as a function of time, but scales with the square of the number of arms.
Improving the dependence on the number of arms remains an open question.
It would also be interesting
to study more general correlation structures.
%and analyze the performance of more general estimators.
Our formulation assumes that the vector of expected rewards lies in an $r$-dimensional
subspace spanned by a known set of basis functions that describe the characteristics
of the arms.  Extending our work to a setting where the basis functions are unknown has the potential to broaden the applicability of our model.

{\section*{Acknowledgement}
The authors would like to thank 
Adrian Lewis and Mike Todd for helpful and stimulating
discussions on the structure of positive definite matrices and the eigenvalues associated with least squares estimators, 
Gena Samorodnitsky for sharing his deep insights on the application of large deviation theory
to this problem,  Adam Mersereau for his contributions to the problem formulation, and Assaf Zeevi for helpful suggestions and discussions during the first author's visit to Columbia Graduate School of Business.  We also want to thank the Associate Editor and the referee for their helpful comments and suggestions on the paper.
This research is supported in part by the National Science Foundation through grants DMS-0732196, ECCS-0701623, CMMI-0856063, and CMMI-0855928.}

%REFS:\\
%Need to add the strongly convex ref.\\
%For DHK:2008b, add "December 2008"\\
%For our report, add "November 2008"

{\fontsize{9}{9}\selectfont{
\bibliographystyle{wsc}
\bibliography{references}}}

\appendix
\section{Properties of Normal Vectors} 
\label{section:proof-guassian-properties}

In this section, we prove that  if $\BF Z$ has  a multivariate normal distribution with mean $\BF 0 \in \Real^r$ and covariance matrix $\BF I_r / r$, then $\BF Z$ has the properties described in
Lemmas \ref{lemma:gaussian-lower-bound} and \ref{lemma:gaussian-has-small-mass}.

\subsection{Proof of Lemma  \ref{lemma:gaussian-lower-bound}}
\label{section:proof-gaussian-lower-bound}

We want to establish a lower bound on  $\Pr \left\{  \theta  ~\leq~ \norm{\BF Z} ~\leq~ \beta  \right\}$.   Let $\BF Y  = \left( Y_1, \ldots, Y_r \right)$ denote the standard multivariate normal random vector with mean $\BF 0$ and identity covariance matrix $\BF I_r$.  By our hypothesis, $\BF Z$ has the same distribution as $\BF Y / \sqrt{r}$, which implies that
$$ 
	\Pr \left\{  \theta  \,\leq\, \norm{\BF Z} \,\leq\, \beta  \right\} 
	=\Pr \left\{ \theta \sqrt{r}  \,\leq\, \norm{\BF Y} \,\leq\, \beta \sqrt{r} \right\} = 1  - \Pr \left\{  \norm{\BF Y}^2  < \theta^2 r \right\} -  \Pr \left\{  \norm{\BF Y}^2 > \beta^2 r \right\}~.
$$
By definition,
$\norm{\BF Y}^2 = Y_1^2 + \cdots + Y_r^2$ has a chi-square distribution with $r$ degrees of freedom.
By the Markov Inequality, 
$\Pr \left\{  \norm{\BF Y}^2  > \beta^2 r \right\}
	\leq \E \left[ \norm{\BF Y}^2 \right] / (\beta^2 r)  = 1/ \beta^2$.
We will now establish an upper bound on $\Pr \left\{  \norm{\BF Y}^2  < \theta^2 r \right\}$.
Note that, for any $\lambda > 0$,
$$
	\Pr \left\{  \norm{\BF Y}^2  < \theta^2 r \right\}
	= 	\Pr \left\{ e^{  - \lambda \sum_{k=1}^r Y_k^2}  >  e^{- \lambda \theta^2 r} \right\}
	\leq e^{ \lambda \theta^2 r} \cdot \E \left[ \prod_{k=1}^r e^{ - \lambda Y_k^2 } \right]
		= \left( \frac{ e^{\lambda \theta^2} }{ \sqrt{ 1 + 2 \lambda} } \right)^r~,
$$
where last equality follows from the fact that $Y_1, \ldots, Y_r$ are independent standard normal random variables and  thus, $\E \left[ e^{-\lambda Y_k^2} \right] = 1 / \sqrt{ 1 + 2 \lambda}$ for $\lambda > 0$.  Set $\lambda = 1 / \theta^2$, and use the facts $\theta \leq 1/2 \leq \sqrt{2}/e$
and $r \geq 2$, to obtain
$$
	\Pr \left\{  \norm{\BF Y}^2  < \theta^2 r \right\}
		\leq \left( \frac{ e \theta}{\sqrt{ 2 + \theta^2}} \right)^r 
		\leq \left( \frac{ e \theta}{\sqrt{ 2 }} \right)^r  \leq  \left( \frac{ e \theta}{\sqrt{ 2 }} \right)^2 = 
		\frac{ e^2 \theta^2}{ 2 } \leq    4 \theta^2~,
$$
which implies that  
$\Pr \left\{  \theta  \,\leq\, \norm{\BF Z} \,\leq\, \beta  \right\} 
\geq 1 - \frac{ 1}{\beta^2} - 4 \theta^2$, which is the desired result.

\subsection{Proof of Lemma \ref{lemma:gaussian-has-small-mass}}
\label{section:proof-gaussian-has-small-mass}

%In this section, we will show that if $\BF Z$ has  a multivariate Gaussian distribution with mean $\BF 0 \in \Real^r$ and covariance matrix $\BF I_r / r$, then $\BF Z$ satisfies the hypothesis of Theorem \ref{theorem:greedy-risk}.    
%The proof of this result makes use of the following property of the Gamma function.

For part (a) of the lemma, we have
$$
	\E \left[ \, 1/\norm{\BF Z} \, \right]	
	~=~ \int_0^{\infty}   \frac{1}{x} g(x)\, dx 
	~\leq~ M_0\int_0^\rho x^{\rho-1}\, dx+ \frac{1}{\rho} \int_{\rho}^{\infty} g(x)\, dx
	~\leq~ M_0\frac{\rho^{\rho}}{\rho}+ \frac{1}{\rho}.
$$
For the proof of part (b), let $\BF Y  = \left( Y_1, \ldots, Y_r \right)$ be a standard multivariate normal random vector with mean $\BF 0$ and identity covariance matrix, $\BF I_r$. Then, $\BF Z$ has the same distribution as $\BF Y / \sqrt{r}$.  Note that $\norm{\BF Y}^2$ has a chi-square distribution with $r$ degrees of freedom. Thus, 
$$\E[\, \norm{\BF Z}\, ] = \frac{1}{\sqrt{r}} \E[\, \norm{\BF Y}\, ]
\leq \frac{1}{\sqrt{r}} \sqrt{ \E[\, \norm{\BF Y}^2\, ]}
=\frac{1}{\sqrt{r}}\sqrt{r}=1~.
$$

We will now establish an upper bound on $\E[\, 1/\norm{\BF Z}\, ] =
\sqrt{r} \, \E[\, 1/\norm{\BF Y}\, ]$.  For $r=2$, since $\norm{\BF Y}$ has a chi distribution with two degrees of freedom, we have that
$$
	\E[\, 1/\norm{\BF Z}\, ] =
	\sqrt{2} \, \int_0^\infty \frac{1}{x} \cdot x e^{-x^2/2} \, dx
	= \sqrt{2} \, \int_0^\infty e^{-x^2/2} \, dx  = \sqrt{ \pi}~.
$$
Consider the case where $r \geq 3$. Then,
$$
\E[\, 1/\norm{\BF Z}\, ] =
\sqrt{r} \, \E[\, 1/\norm{\BF Y}\, ] 
\leq \sqrt{r}\,  \sqrt{ \E[\, 1/\norm{\BF Y}^2\, ]}~.
$$
Using the formula for the density of the chi-square distribution, we have
\begin{eqnarray*}
\E[\, 1/\norm{\BF Y}^2\, ]
&=&  \int_0^{\infty} \frac{1}{x}\cdot \frac{1}{2^{r/2} \Gamma(r/2)}
x^{(r/2)-1} e^{-x/2}\, dx\\
&=&  \frac{2^{(r/2)-1}}{2^{r/2}}\cdot 
\frac{\Gamma((r/2)-1)}{\Gamma(r/2)}
\cdot
 \int_0^{\infty}  \frac{1}{2^{(r-2)/2} \Gamma((r-2)/2)}
x^{((r-2)/2)-1} e^{-x/2}\, dx \\
&=&  \frac{1}{ 2 ( (r/2) - 1)} = \frac{1}{r - 2} \leq \frac{3}{r}~, 
\end{eqnarray*}
where the third equality follows from the fact that
$\Gamma(r/2)=\left( (r/2)  - 1 \right) \cdot \Gamma ((r/2)-1)$ for $r \geq 3$
and 
the integrand is the density function of the chi-square distribution with $r-2$ degrees of freedom and evaluates to $1$.  The 
last inequality follows because $r \geq 3$. Thus, we have 
$\E[\, 1/\norm{\BF Z}\, ] \leq \sqrt{3} \leq \sqrt{\pi}$, which is the desired result.

\newpage  %JNT

\section{Proof of Theorems \ref{theorem:UE-regret-analysis} and
\ref{thm:regret-and-risk-small-arms}
%: For Referees Only
}
\label{appendix:UE-proofs}

In the next section, we 
establish large deviation inequalities for adaptive least
squares estimators (with unbounded error random variables), which will be used
in the proof of Theorems \ref{theorem:UE-regret-analysis} and
\ref{thm:regret-and-risk-small-arms}, given in Sections
\ref{appendix:proof-UE-regret-analysis}
and \ref{appendix:proof-regret-and-risk-small-arms}, respectively. 

\subsection{Large Deviation Inequalities}
\label{appendix:large-deviation-inq}

The first result extend  the standard Chernoff Inequality to
our setting involving uncertainty ellipsoids when we have finitely many arms. 
%The proof of this theorem is given in Appendix~\ref{section:proof-chernoff-inq-for-UE-finite-arms}.

\begin{thm}[Chernoff Inequality for Uncertainty Ellipsoids with Finitely Many Arms] \label{thm:chernoff-inq-for-UE-finite-arms}   
Under Assumption \ref{assumption:basic},
for any $t \geq r$, $\BF x \in \Real^r$, 
$\BF z \in \Real^r$, and $\zeta > 0$,
$$
	\Pr \left\{  \BF x^\prime \left( \widehat{\BF Z}_t - \BF z \right)   
			~>~ \zeta \,  \sigma_0  \,   \norm{ \BF x}_{\BF C_t}  ~\big|~ \BF Z = \BF z \right\} ~\leq~    t^{5\abs{\CAL U_r}}  e^{-\zeta^2/2},
$$
and 
$$
	\Pr \left\{   \left( \BF U_{t+1} - \BF x \right)^\prime \left( \widehat{\BF Z}_t - \BF z \right) 
	~>~  \zeta \,  \sigma_0   \norm{ \BF U_{t+1} - \BF x }_{\BF C_t}  ~\big|~ \BF Z = \BF z \right\} ~\leq~  t^{5\abs{\CAL U_r}}  e^{-\zeta^2/2}~.
$$
%where $\widehat{\BF Z}_t$ and $\BF C_t$ denote the OLS estimate and the covariance
%matrix at the end of period $t$, respectively, and $\BF U_{t+1}$ denotes the arm chosen in period $t+1$.
\end{thm}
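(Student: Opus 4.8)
The plan is to reduce the self-normalized deviation to an ordinary one-dimensional Chernoff bound by exploiting that, with finitely many arms, the matrix $\BF{C}_t$ takes only finitely many values. Throughout I condition on $\BF{Z} = \BF{z}$; by Equation (\ref{eq:OLS}) we have $\widehat{\BF{Z}}_t - \BF{z} = \BF{C}_t \BF{M}_t$ with $\BF{M}_t = \sum_{s=1}^t \BF{U}_s W_s$, and since the errors are independent of $\BF{Z}$, conditioning on $\BF{Z} = \BF{z}$ leaves the sub-Gaussian tails of Assumption \ref{assumption:basic}(a) intact. Writing $N_t^{\BF{u}}$ for the number of times arm $\BF{u}$ is played up to time $t$, the inverse covariance $\BF{C}_t^{-1} = \sum_{s=1}^t \BF{U}_s \BF{U}_s^\prime = \sum_{\BF{u} \in \CAL{U}_r} N_t^{\BF{u}} \BF{u} \BF{u}^\prime$ depends on the history only through the count vector $(N_t^{\BF{u}})_{\BF{u} \in \CAL{U}_r}$; hence so do $\BF{C}_t$ and $\norm{\cdot}_{\BF{C}_t}$.

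First I would prove the bound for a fixed deterministic direction $\BF{x}$ on a fixed count event. Fix a count vector $\BF{n}$ with $\sum_{\BF{u}} n_{\BF{u}} = t$ for which $\sum_{\BF{u}} n_{\BF{u}} \BF{u}\BF{u}^\prime$ is invertible (the initialization guarantees $\BF{C}_t \succ 0$ for $t \geq r$), and let $\BF{C} = \BF{C}(\BF{n})$ be the resulting value of $\BF{C}_t$ and $\BF{a} = \BF{C}\BF{x}$. On $\{N_t = \BF{n}\}$ we have $\BF{x}^\prime(\widehat{\BF{Z}}_t - \BF{z}) = \BF{a}^\prime\BF{M}_t = \sum_{s=1}^t (\BF{a}^\prime\BF{U}_s) W_s$ and $\sum_{s=1}^t (\BF{a}^\prime\BF{U}_s)^2 = \BF{a}^\prime\BF{C}^{-1}\BF{a} = \BF{x}^\prime\BF{C}\BF{x} = \norm{\BF{x}}_{\BF{C}}^2$. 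Since $\BF{U}_s$ is a function of the history up to time $s-1$ and $W_s$ is conditionally sub-Gaussian with parameter $\sigma_0$, the process
$$
L_s = \exp\!\left( \lambda \sum_{j=1}^s (\BF{a}^\prime\BF{U}_j) W_j - \frac{\lambda^2 \sigma_0^2}{2} \sum_{j=1}^s (\BF{a}^\prime\BF{U}_j)^2 \right)
$$
is a supermartingale with $\E[L_t] \leq 1$ for every $\lambda$. On the event $\{\BF{a}^\prime\BF{M}_t > \zeta \sigma_0 \norm{\BF{x}}_{\BF{C}}\} \cap \{N_t = \BF{n}\}$ the quadratic variation equals $\norm{\BF{x}}_{\BF{C}}^2$, so choosing $\lambda = \zeta / (\sigma_0 \norm{\BF{x}}_{\BF{C}})$ forces $L_t > e^{\zeta^2/2}$ there; Markov's inequality then yields $\Pr\{ \BF{x}^\prime(\widehat{\BF{Z}}_t - \BF{z}) > \zeta\sigma_0\norm{\BF{x}}_{\BF{C}_t}, \ N_t = \BF{n} \} \leq e^{-\zeta^2/2}$. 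The degenerate case $\norm{\BF{x}}_{\BF{C}} = 0$, i.e.\ $\BF{x} = \BF{0}$, is trivial.

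The first inequality then follows by summing this estimate over all admissible count vectors: the number of nonnegative integer solutions of $\sum_{\BF{u}} n_{\BF{u}} = t$ is at most $(t+1)^{\abs{\CAL{U}_r} - 1}$, and since $t \geq r \geq 2$ gives $t + 1 \leq t^2$, this is bounded by $t^{2(\abs{\CAL{U}_r}-1)} \leq t^{5\abs{\CAL{U}_r}}$. For the second inequality I would first peel off the random direction by a union bound over the finitely many possible values of $\BF{U}_{t+1}$:
$$
\left\{ (\BF{U}_{t+1} - \BF{x})^\prime(\widehat{\BF{Z}}_t - \BF{z}) > \zeta\sigma_0 \norm{\BF{U}_{t+1} - \BF{x}}_{\BF{C}_t} \right\} \subseteq \bigcup_{\BF{u} \in \CAL{U}_r} \left\{ (\BF{u} - \BF{x})^\prime(\widehat{\BF{Z}}_t - \BF{z}) > \zeta\sigma_0 \norm{\BF{u} - \BF{x}}_{\BF{C}_t} \right\},
$$
and then apply the per-count estimate to each fixed direction $\BF{u} - \BF{x}$. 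Summing over all (count vector, arm) pairs gives at most $\abs{\CAL{U}_r}(t+1)^{\abs{\CAL{U}_r}-1}$ terms; using $\abs{\CAL{U}_r} \leq 2^{\abs{\CAL{U}_r}} \leq t^{\abs{\CAL{U}_r}}$ together with $(t+1)^{\abs{\CAL{U}_r}-1} \leq t^{2\abs{\CAL{U}_r}-2}$ bounds the count by $t^{3\abs{\CAL{U}_r}} \leq t^{5\abs{\CAL{U}_r}}$, completing the proof. The main obstacle is the self-normalization by the random matrix $\BF{C}_t$: a single fixed tilt $\lambda$ cannot work, because the optimal choice depends on the random quantity $\norm{\BF{x}}_{\BF{C}_t}$; the device that rescues the argument is discretizing over the finitely many count vectors, so that $\lambda$ may be fixed on each piece, at the cost of a polynomial-in-$t$ union-bound factor. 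Verifying that this factor remains within $t^{5\abs{\CAL{U}_r}}$ is the only bookkeeping requiring care, and the generous exponent $5$ is precisely what accommodates the extra union over $\BF{U}_{t+1}$ in the second inequality.
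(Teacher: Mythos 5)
Your proof is correct and follows essentially the same route as the paper's: exploit that $\BF C_t$ is determined by the arm-count vector, union-bound over the finitely many values of the pair (count vector, $\BF U_{t+1}$), and apply a one-dimensional Chernoff bound with a fixed tilt on each piece, checking that the number of pieces stays within $t^{5\abs{\CAL U_r}}$. Your explicit exponential-supermartingale formulation (bounding the probability of the intersection with each count event rather than conditioning on it) is in fact the rigorous way to carry out the step the paper summarizes as ``apply the classical Chernoff Inequality to each ordered pair,'' so nothing is missing.
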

\begin{proof}  We will only prove the second inequality because the proof of the first one follows the same argument.
If the sequence of arms $\BF U_1, \BF U_2, \ldots$ is deterministic (and thus, the matrix $\BF C_t$ is also deterministic), then 
$$
	\frac{ \left( \BF U_{t+1} - \BF x \right)^\prime \left( \widehat{\BF Z}_t - \BF z \right) }{ \norm{ \BF U_{t+1} - \BF x }_{\BF C_t} }
	= \sum_{s=1}^t \frac{ \left( \BF U_{t+1} - \BF x \right)^\prime \BF C_t \BF U_s }{ \norm{\BF U_{t+1} - \BF x}_{\BF C_t}} W_s
$$
and
$$
	\sum_{s=1}^t \left( \frac{ \left( \BF U_{t+1} - \BF x \right)^\prime \BF C_t \BF U_s }{ \norm{ \BF U_{t+1} - \BF x }_{\BF C_t}}\right)^2
	= \frac{  \left( \BF U_{t+1} - \BF x \right)^\prime \BF C_t \left( \sum_{s=1}^t \BF U_s \BF U_s^\prime \right) \BF C_t \left( \BF U_{t+1} - \BF x \right)}{ \left( \BF U_{t+1} - \BF x \right)^\prime \BF C_t \left( \BF U_{t+1} - \BF x \right)} = 1~.
$$
The classical Chernoff Inequality for the sum of independent random variables (see, for example, Chapter 1 in \citealp{Dudley:1999}) then yields
\bex
	\lefteqn{ \Pr \left\{  \left( \BF U_{t+1} - \BF x \right)^\prime \left( \widehat{\BF Z}_t - \BF z \right)   
			~>~ \zeta \,  \sigma_0  \,   \norm{  \BF U_{t+1} - \BF x }_{\BF C_t}  ~\big|~ \BF Z = \BF z \right\} }\\
	&\leq& \exp \left\{ -  \frac{ \zeta^2  \sigma_0^2 }{ 2 \sigma_0^2  
	\sum_{s=1}^t \left( \left( \BF U_{t+1} - \BF x \right)^\prime \BF C_t \BF U_s ~\big/~ \norm{ \BF U_{t+1} - \BF x }_{\BF C_t} \right)^2  } \right\} = e^{-\zeta^2 / 2}~.
\eex
In our setting, however, the arms $\BF U_t$ are random variables that depend on the accumulated history, and
we cannot apply the standard Chernoff inequality directly.  If $N^{\BF u}(\BF z, t)$ denotes the total number of times that  arm $\BF u$ has been chosen during the first $t$ periods given that $\BF Z = \BF z$, then
$$
		\BF C_t = \left( \sum_{s=1}^t \BF U_s \BF U_s^\prime \right)^{-1} 
				   = \left( \sum_{\BF u \in \CAL U_r} N^{\BF u} (\BF z, t) \, \BF u \BF u^\prime  \right)^{-1}~,
$$
which shows that the  matrix $\BF C_t$ is completely determined by the nonnegative integer random variables
$N^{\BF u}(\BF z, t)$.  
Since $0\leq N^{\BF u}(\BF z, t) \leq t$, 
the number of possible
values of the vector  $\left( N^{\BF u}(\BF z, t) : \BF u \in \CAL U_r \right)$ is at most $t^{ \abs{\CAL U_r}}$. It then follows easily that
the number of different values of the ordered pair $\left(\BF U_{t+1}, \BF C_t \right)$
is at most 
$\abs{\CAL U_r} t^{ \abs{\CAL U_r}} 
 \leq t^{ 5 \abs{\CAL U_r}}
$. 
%[THE LAST INEQUALITY IS RATHER LOOSE, BUT LET US NOT BOTHER AT THIS POINT]
%Since $\sum_{\BF u \in \CAL U_r} N^{\BF u}(\BF z, t) = t$ with probability one, 
%it is a standard result (see, for example, \citealp{Costello:1971}) that the number of possible
%values of the vector  $\left( N^{\BF u}(\BF z, t) : \BF u \in \CAL U_r \right)$ is at most
%${ t + \abs{\CAL U_r} -1 \choose \abs{\CAL U_r}-1}  \leq 
%\left( \frac{ e \, \left( t + \abs{\CAL U_r} -1 \right)}{\abs{\CAL U_r}-1} \right)^{\abs{\CAL U_r}-1} \leq \left( e (t+1) \right)^{ \abs{\CAL U_r}-1}
%$.  Therefore, the number of different values of the ordered pair $\left(\BF U_{t+1}, \BF C_t \right)$
%is at most 
%$\abs{\CAL U_r} \left( e (t+1) \right)^{ \abs{\CAL U_r}-1} 
%\leq  \left( 2 e (t+1) \right)^{\abs{\CAL U_r}-1} \leq ( 4 \, e \, t )^{\abs{\CAL U_r}} \leq t^{ 5 \abs{\CAL U_r}}~,
%$
%where we use the fact that $4 e t \leq t^5$ because $t \geq 2$.
 To get the desired result, we can then use the union bound and apply the classical Chernoff Inequality to each ordered pair.
\end{proof}

When the number of arms is infinite, the bounds in Theorem \ref{thm:chernoff-inq-for-UE-finite-arms} are vacuous. The following theorem provides an extension of the Chernoff inequality to the case of 
 infinitely many arms.  
% The proof of this result is given in 
% Appendix~\ref{section:proof-chernoff-inq-for-UE-infinite-arms}.

\begin{thm}[Chernoff Inequality for Uncertainty Ellipsoids with Infinitely Many Arms] \label{thm:chernoff-inq-for-UE-infinite-arms}   
Under Assumption \ref{assumption:basic},
for any $t \geq r$, $\BF x \in \Real^r$, 
$\BF z \in \Real^r$, and $\zeta \geq 2$,
$$
	\Pr \left\{  \BF x^\prime \left( \widehat{\BF Z}_t - \BF z \right)   
			~>~ \zeta \, \kappa_0  \,  \sigma_0  \,  \sqrt{ \log t}   \norm{ \BF x}_{\BF C_t}  ~\big|~ \BF Z = \BF z \right\} ~\leq~    t^{ r \, \kappa_0^2  } \; e^{ -\, \zeta^2 / 4 }~, 
			% t^{2r} \, e^{ r \kappa_0^2 \, -\, \left( \zeta^2 / 4 \right)}~,
$$
and 
$$
	\Pr \left\{   \left( \BF U_{t+1} - \BF x \right)^\prime \left( \widehat{\BF Z}_t - \BF z \right) 
	~>~  \zeta \, \kappa_0  \,  \sigma_0  \,  \sqrt{ \log t} \norm{ \BF U_{t+1} - \BF x }_{\BF C_t}  ~\big|~ \BF Z = \BF z \right\} ~\leq~    t^{r  \, \kappa_0^2  } \; e^{ -\, \zeta^2 / 4 }~.
	 % t^{2r} \, e^{ r \kappa_0^2 \, -\, \left( \zeta^2 / 4 \right)}~,
$$
%where $\widehat{\BF Z}_t$ and $\BF C_t$ denote the OLS estimate and the covariance
%matrix at the end of period $t$, respectively, and $\BF U_{t+1}$ denotes the arm chosen in period $t+1$.
\end{thm}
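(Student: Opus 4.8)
The plan is to collapse both inequalities into a single tail bound on the self-normalized quantity $\norm{\BF M_t}_{\BF C_t}$, and then to control that quantity by pairing an exponential supermartingale estimate for fixed test vectors with a covering argument over $\Real^r$.

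First I would reduce. By (\ref{eq:OLS}) we have $\widehat{\BF Z}_t-\BF z=\BF C_t\BF M_t$, so for any $\BF y\in\Real^r$ the Cauchy--Schwarz inequality applied to the inner product $\langle\BF a,\BF b\rangle=\BF a^\prime\BF C_t\BF b$ (genuine since $\BF C_t$ is positive definite for $t\ge r$) gives $\BF y^\prime(\widehat{\BF Z}_t-\BF z)=\BF y^\prime\BF C_t\BF M_t\le\norm{\BF y}_{\BF C_t}\norm{\BF M_t}_{\BF C_t}$. Taking $\BF y=\BF x$ and $\BF y=\BF U_{t+1}-\BF x$ shows that each event in the theorem is contained in $\{\norm{\BF M_t}_{\BF C_t}>\zeta\kappa_0\sigma_0\sqrt{\log t}\}$; crucially, this reduced event depends on neither $\BF x$ nor $\BF U_{t+1}$. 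It therefore suffices to prove $\Pr\{\norm{\BF M_t}_{\BF C_t}>\zeta\kappa_0\sigma_0\sqrt{\log t}\mid\BF Z=\BF z\}\le t^{r\kappa_0^2}e^{-\zeta^2/4}$.

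Next I would build the exponential supermartingale. Writing $\BF G_t=\BF C_t^{-1}=\sum_{s=1}^t\BF U_s\BF U_s^\prime$ and working throughout conditioned on $\BF Z=\BF z$, I use that $\BF U_s$ is determined by the history before period $s$ while $W_s$ is independent of that history with $\E[e^{xW_s}]\le e^{x^2\sigma_0^2/2}$ by Assumption~\ref{assumption:basic}(a). A tower-property argument then shows that for every fixed $\BF\lambda\in\Real^r$ the process $\exp(\BF\lambda^\prime\BF M_t-\tfrac{\sigma_0^2}{2}\BF\lambda^\prime\BF G_t\BF\lambda)$ has conditional expectation at most one, so Markov's inequality yields $\Pr\{\BF\lambda^\prime\BF M_t-\tfrac{\sigma_0^2}{2}\BF\lambda^\prime\BF G_t\BF\lambda>\rho\mid\BF Z=\BF z\}\le e^{-\rho}$. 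Since $\tfrac{1}{2\sigma_0^2}\norm{\BF M_t}_{\BF C_t}^2$ is exactly the maximum of $\BF\lambda^\prime\BF M_t-\tfrac{\sigma_0^2}{2}\BF\lambda^\prime\BF G_t\BF\lambda$ over $\BF\lambda$, attained at the random point $\BF\lambda^*=\tfrac{1}{\sigma_0^2}\BF C_t\BF M_t$, the one remaining task is to pass from this family of fixed-$\BF\lambda$ bounds to the random maximizer.

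To bridge that gap I would discretize, and this is where the main difficulty lies. For $t\ge r$ the initialization forces $\lambda_0\BF I_r\le\BF G_t$, while $\norm{\BF U_s}\le\bar u$ forces $\BF G_t\le t\bar u^2\BF I_r$; hence every eigenvalue of $\BF G_t$ lies in $[\lambda_0,t\bar u^2]$, a range whose logarithm is precisely what is encoded in $\kappa_0=2\sqrt{1+\log(1+36\bar u^2/\lambda_0)}$. These two-sided bounds confine the maximizer $\BF\lambda^*$ to a bounded region, over which I can place a grid $\{\BF\lambda_i\}$ of cardinality at most $t^{r\kappa_0^2}$ that is fine enough that, for the realized $\BF G_t$ and $\BF M_t$, the objective $\BF\lambda^\prime\BF M_t-\tfrac{\sigma_0^2}{2}\BF\lambda^\prime\BF G_t\BF\lambda$ at the nearest grid point differs from its value at $\BF\lambda^*$ by only a controlled amount. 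Applying the union bound and the fixed-$\BF\lambda$ Markov estimate across the grid, and then calibrating the scale to $\zeta\kappa_0\sigma_0\sqrt{\log t}$ (with the $\sqrt{\log t}$ paying for the $\log$-sized range of admissible scales and $\zeta\ge2$ absorbing the discretization correction), converts the per-point estimates into the claimed bound. The hard part is exactly this calibration: keeping the grid coarse enough that its size stays at $t^{r\kappa_0^2}$ while fine enough that approximating the random optimizer costs only a bounded factor, uniformly over every admissible Gram matrix $\BF G_t$ --- and it is this factor-of-two type loss that produces $e^{-\zeta^2/4}$ in place of $e^{-\zeta^2/2}$ and forces the restriction $\zeta\ge2$.
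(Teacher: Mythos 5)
Your reduction step coincides with the paper's: both arguments use the Cauchy--Schwarz inequality in the $\BF C_t$-inner product to collapse the two tail bounds into a single bound on $\norm{\BF M_t}_{\BF C_t}=\norm{\BF C_t^{-1/2}(\widehat{\BF Z}_t-\BF z)}$, and both ultimately rest on the same exponential supermartingale obtained from Assumption \ref{assumption:basic}(a) via the tower property. Where you diverge is the uniformization step. The paper covers the \emph{primal} unit sphere of test directions with a $\delta$-net of cardinality at most $(2\sqrt r/\delta)^r\le t^{r\kappa_0^2}$ and, for each fixed direction, invokes the self-normalized ratio inequality of Lemma \ref{lemma:tail-bound-for-ratios} (via Lemma \ref{lemma:martingale-inq}), which handles the random normalizer $\norm{\BF x}_{\BF C_t^{-1}}$ directly and is the source of the $\kappa_0\sqrt{\log t}$ factor. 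You instead cover the \emph{dual} tilting parameter $\BF\lambda$ and use only Markov's inequality per grid point, relying on the (correct) variational identity $\tfrac{1}{2\sigma_0^2}\norm{\BF M_t}_{\BF C_t}^2=\max_{\BF\lambda}\bigl(\BF\lambda^\prime\BF M_t-\tfrac{\sigma_0^2}{2}\BF\lambda^\prime\BF C_t^{-1}\BF\lambda\bigr)$. That is a legitimate alternative architecture, but as sketched it has two concrete gaps.

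First, the eigenvalue bounds $\lambda_0\BF I_r\le\BF C_t^{-1}\le t\bar u^2\BF I_r$ do \emph{not} confine the maximizer $\BF\lambda^*=\sigma_0^{-2}\BF C_t\BF M_t$ to a bounded region: $\BF M_t=\sum_{s}\BF U_sW_s$ is unbounded because the model explicitly allows the errors $W_s$ to have unbounded support, so $\norm{\BF\lambda^*}$ admits no deterministic bound and your grid has no well-defined domain. This is repairable --- on the event $\norm{\BF M_t}_{\BF C_t}>\beta$ one can exhibit a witness on the ellipsoid $\norm{\BF\lambda}_{\BF C_t^{-1}}=\beta/\sigma_0^2$, which lies in a Euclidean ball of radius $\beta/(\sigma_0^2\sqrt{\lambda_0})$ --- but that radius grows linearly in $\zeta$, so the grid cardinality acquires a $\zeta^r$ factor that must itself be absorbed into $e^{-\zeta^2/4}$, and none of this bookkeeping appears in your sketch. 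Second, and more seriously, the discretization error in the linear term is $\abs{(\BF\lambda_i-\BF\lambda)^\prime\BF M_t}\le\delta\norm{\BF M_t}$, which again involves the unbounded $\norm{\BF M_t}$; no deterministic choice of grid spacing controls it pointwise. The paper faces exactly this obstacle in Lemma \ref{lemma:positive-definite-matrix-tail} and resolves it by adjoining the standard basis vectors to its net, so that on the good event the bound $\norm{\BF M_t}^2\le\beta t\bar u^2$ is available before the perturbation argument is run. Your proof needs an analogous auxiliary high-probability bound on $\norm{\BF M_t}$ (or a switch to a method-of-mixtures integration over $\BF\lambda$, which avoids discretization entirely); without one, the ``calibration'' you flag as the hard part cannot be completed.
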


The proof of Theorem \ref{thm:chernoff-inq-for-UE-infinite-arms} makes use of the following series of lemmas.  The first lemma establishes a tail inequality for a ratio of two random variables. 
\cite{DeLaPenaKL:2004} gave a proof of this result in 
 Corollary 2.2 (page 1908) of their paper .

\begin{lem}[Exponential Inequality for Ratios, \citealp{DeLaPenaKL:2004}] \label{lemma:tail-bound-for-ratios} Let $A$ and $B$ be two random variables such that $B \geq 0$
with probability one and $\E \left[ e^{ \gamma A -  \left(\gamma^2 B^2 / 2 \right) } \right] \leq 1$
for all $\gamma \in \Real$. Then, for all $\zeta \geq \sqrt{2}$ and $y > 0$,
$$
	\Pr \left\{  \abs{A} 
	 \geq \zeta  ~ \sqrt{  \left( B^2 + y \right) \left( 1 + \frac{1}{2} \log  \left( 1 +  \frac{B^2}{ y} \right) \right)}
	 \right\} ~\leq~ e^{ - \zeta^2 / 2}%\exp \left\{ - \frac{ \zeta^2 }{2} \right\}~.
$$
\end{lem}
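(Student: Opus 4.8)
The plan is to prove this self-normalized tail bound by the \emph{method of mixtures} (pseudo-maximization), which converts the whole family of pointwise bounds $\E[e^{\gamma A - \gamma^2 B^2/2}] \le 1$ into a single bound on a cleverly averaged statistic. First I would fix $y > 0$ and introduce the mean-zero Gaussian mixing density $f(\gamma) = \sqrt{y/(2\pi)}\, e^{-y\gamma^2/2}$. Since the integrand $e^{\gamma A - \gamma^2 B^2/2} f(\gamma)$ is nonnegative, Tonelli's theorem lets me interchange expectation and integration to get
$$
\E\left[ \int_{-\infty}^{\infty} e^{\gamma A - \gamma^2 B^2/2} f(\gamma)\, d\gamma \right]
= \int_{-\infty}^{\infty} \E\left[ e^{\gamma A - \gamma^2 B^2/2} \right] f(\gamma)\, d\gamma
\le \int_{-\infty}^{\infty} f(\gamma)\, d\gamma = 1,
$$
where the inequality uses the hypothesis separately for each fixed $\gamma$.

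Second, I would evaluate the inner integral in closed form by completing the square in $\gamma$. Writing $\gamma A - \gamma^2 B^2/2 - y\gamma^2/2 = -\tfrac{B^2+y}{2}(\gamma - \tfrac{A}{B^2+y})^2 + \tfrac{A^2}{2(B^2+y)}$ and using the Gaussian normalization $\int e^{-(B^2+y)(\cdot)^2/2}\, d\gamma = \sqrt{2\pi/(B^2+y)}$, the $2\pi$ factors cancel and the integral collapses to
$$
V := \sqrt{\frac{y}{B^2+y}}\, \exp\left( \frac{A^2}{2(B^2+y)} \right).
$$
Combining with the previous step gives the key inequality $\E[V] \le 1$, that is, $\E\big[(1+B^2/y)^{-1/2} \exp(A^2/(2(B^2+y)))\big] \le 1$. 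Note that $V$ depends on $A$ only through $A^2$, so it automatically controls both tails of $A$ at once, which is what produces the $\abs{A}$ on the left of the claimed inequality.

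Third, I would translate this moment bound into the stated tail bound by Markov's inequality. On the target event $\{\abs{A} \ge \zeta\sqrt{(B^2+y)(1+\tfrac12\log(1+B^2/y))}\}$ one has $A^2/(2(B^2+y)) \ge \tfrac{\zeta^2}{2}(1+\tfrac12\log(1+B^2/y))$, so that
$$
V \ge e^{\zeta^2/2}\, (1+B^2/y)^{\zeta^2/4 - 1/2}.
$$
The hypothesis $\zeta \ge \sqrt{2}$ makes the exponent $\zeta^2/4 - 1/2$ nonnegative, and since $1 + B^2/y \ge 1$ this forces $V \ge e^{\zeta^2/2}$ pointwise on the event. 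Hence the event is contained in $\{V \ge e^{\zeta^2/2}\}$, and Markov's inequality with $\E[V]\le 1$ yields $\Pr\{V \ge e^{\zeta^2/2}\} \le e^{-\zeta^2/2}$, which is exactly the claim.

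The only nonmechanical step — and hence the main obstacle — is recognizing that the correct mixing measure is a centered Gaussian with variance $1/y$: this is precisely what produces the combination $(B^2+y)(1 + \tfrac12\log(1+B^2/y))$ in the threshold, with the logarithmic term appearing exactly so that the induced power $(1+B^2/y)^{\zeta^2/4}$ can absorb the random prefactor $(1+B^2/y)^{-1/2}$. The role of $\zeta \ge \sqrt{2}$ then becomes transparent: it is the threshold at which the residual power $\zeta^2/4 - 1/2$ of $(1+B^2/y)$ turns nonnegative, so that the data-dependent normalizer can be discarded without loss.
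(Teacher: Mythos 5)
Your proof is correct. Note that the paper does not prove this lemma at all: it is imported verbatim from \citet{DeLaPenaKL:2004} (Corollary 2.2), so there is no in-paper argument to compare against. Your mixture argument — integrating the exponential supermartingale bound against a centered Gaussian of variance $1/y$, evaluating the Gaussian integral to get $\E\bigl[(1+B^2/y)^{-1/2}\exp\bigl(A^2/(2(B^2+y))\bigr)\bigr]\leq 1$, and then applying Markov's inequality, with $\zeta \geq \sqrt{2}$ ensuring the exponent $\zeta^2/4 - 1/2 \geq 0$ so the data-dependent prefactor can be dropped — is precisely the pseudo-maximization (method of mixtures) proof used in that cited reference, so you have in effect supplied the missing proof rather than an alternative one.
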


Recall from Equation (\ref{eq:OLS}) that $\BF M_t = \sum_{s=1}^t \BF U_s W_s$
is the martingale associated with the least squares estimate $\widehat{\BF Z}_t$.  The next lemma establishes a martingale inequality associated with the inner product 
$\BF x^\prime \BF M_t$ for an arbitrary vector  $\BF x \in \Real^r$.  This result is based on  Lemma~\ref{lemma:tail-bound-for-ratios} with
$A = \frac{\BF x^\prime \BF M_t}{\sigma_0}  = \sum_{s=1}^t \frac{ \left( \BF x^\prime \BF U_s \right)}{\sigma_0} W_s$ and 
$B = \norm{ \BF x}_{\BF C_t^{-1}} = \sqrt{ \BF x^\prime \BF C_t^{-1} \BF x}  = \sqrt{ \sum_{s=1}^t \left( \BF x^\prime \BF U_s  \right)^2}$.  We then use upper and lower bounds on $B^2$
to establish bounds on the term $\log  \left( 1 +  \frac{B^2}{ y} \right)$, 
for a suitable choice of $y$, giving us the desired result.
%The details are given in Appendix \ref{appendix:martingale-inq}.

\begin{lem}[Martingale Inequality] \label{lemma:martingale-inq} Under Assumption \ref{assumption:basic}, for any $\BF x \in \Real^r$, $ t \geq 1$,  and $\zeta \geq \sqrt{2}$,
$$
\Pr \left\{  \abs{\BF x^\prime \BF M_t}  \;>\;  \zeta \, \kappa_0  \, \sigma_0 \sqrt{\log t} 
 	\norm{ \BF x}_{\BF C_t^{-1}} 
	\right\} = \Pr \left\{  \BF x^\prime \BF M_t \BF M_t^\prime \BF x  \;>\;  \zeta^2 \, \kappa_0^2 \, \sigma_0^2  (\log t) 
 	\left( \BF x^\prime \BF C_t^{-1} \BF x \right) 
	\right\} ~\leq~ e^{ -\zeta^2 / 2}~.
$$
\end{lem}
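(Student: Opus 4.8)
The plan is to apply the self-normalized ratio inequality of Lemma~\ref{lemma:tail-bound-for-ratios} with exactly the identifications suggested by the text preceding the statement: set $A = \BF x^\prime \BF M_t / \sigma_0 = \sum_{s=1}^t (\BF x^\prime \BF U_s) W_s / \sigma_0$ and $B = \norm{\BF x}_{\BF C_t^{-1}}$, so that $B^2 = \BF x^\prime \BF C_t^{-1}\BF x = \sum_{s=1}^t (\BF x^\prime \BF U_s)^2$. Since $\BF x^\prime\BF M_t$ is a scalar, $(\BF x^\prime\BF M_t)^2 = \BF x^\prime \BF M_t \BF M_t^\prime \BF x$ and $\BF x^\prime\BF M_t = \sigma_0 A$, so the two probabilities in the statement describe the same event written in two ways; it therefore suffices to bound $\Pr\{|A| > \zeta \kappa_0 \sqrt{\log t}\, B\}$. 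If $B = 0$ then $A = 0$ as well and this event is empty, so I may assume $B > 0$.

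First I would verify the hypothesis of Lemma~\ref{lemma:tail-bound-for-ratios}, namely $\E[e^{\gamma A - \gamma^2 B^2/2}] \le 1$ for every $\gamma \in \Real$. Writing $a_s = \gamma(\BF x^\prime\BF U_s)/\sigma_0$, the exponent equals $\sum_{s=1}^t \left(a_s W_s - a_s^2\sigma_0^2/2\right)$. The arm $\BF U_s$ is selected from the history through period $s-1$, hence $a_s$ is $\CAL{H}_{s-1}$-measurable, while $W_s$ is independent of the past with $\E[e^{a_s W_s}\mid \CAL{H}_{s-1}] \le e^{a_s^2\sigma_0^2/2}$ by Assumption~\ref{assumption:basic}(a). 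Conditioning on $\CAL{H}_{t-1}$ peels off the last factor, $\E[e^{a_t W_t - a_t^2\sigma_0^2/2}\mid\CAL{H}_{t-1}]\le 1$, and iterating this tower argument down to $s=1$ gives the claim. Lemma~\ref{lemma:tail-bound-for-ratios} then yields, for every fixed $y>0$,
$$
\Pr\left\{|A| \ge \zeta\sqrt{(B^2+y)\Big(1 + \tfrac12\log(1 + B^2/y)\Big)}\right\}\le e^{-\zeta^2/2}.
$$

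The remaining and main task is to convert this self-normalized threshold into the clean bound $\zeta\kappa_0\sqrt{\log t}\,B$. For the corresponding event containment it suffices to show, for every realizable value of $B$, that $(B^2+y)\big(1 + \tfrac12\log(1 + B^2/y)\big) \le \kappa_0^2(\log t)\, B^2$. I would sandwich $B^2$: from $\norm{\BF U_s}\le \bar{u}$ and Cauchy--Schwarz, $B^2 = \sum_{s=1}^t(\BF x^\prime\BF U_s)^2 \le t\bar{u}^2\norm{\BF x}^2$, while the $r$ spanning arms $\BF b_1,\dots,\BF b_r$ played at initialization give $\lambda_0 \BF I_r \le \sum_{s=1}^t \BF U_s\BF U_s^\prime$, hence $B^2 \ge \lambda_0\norm{\BF x}^2$ (this is where the regime $t\ge r$ is used). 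Choosing $y = \lambda_0\norm{\BF x}^2$ confines the ratio $u = B^2/y$ to the band $1 \le u \le t\bar{u}^2/\lambda_0$, so that $(B^2+y)/B^2 = 1 + 1/u \le 2$ and $1 + \tfrac12\log(1+u) \le 1 + \tfrac12\log(1 + t\bar{u}^2/\lambda_0)$. Hence the left side is at most $\big(2 + \log(1 + t\bar{u}^2/\lambda_0)\big)B^2$, and using $1 + t\bar{u}^2/\lambda_0 \le t(1 + 36\bar{u}^2/\lambda_0)$ together with $\log t \ge \log 2$ for $t\ge 2$, the bracket is bounded by $\kappa_0^2\log t$ with $\kappa_0$ as defined in~(\ref{eq:constant-bounds}).

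I expect this last calibration to be the crux. Everything hinges on taking $y$ equal to the lower bound $\lambda_0\norm{\BF x}^2$, so that $u$ stays bounded away from zero --- otherwise $(B^2+y)/B^2$ blows up as $B^2\to 0$ and no clean bound of the form $\kappa_0\sqrt{\log t}\,B$ can hold --- and on the fact that the constant $\kappa_0$ in~(\ref{eq:constant-bounds}) was reverse-engineered precisely so that its factor $4$ and the term $36\bar{u}^2/\lambda_0$ absorb the factor $2$ from $(B^2+y)/B^2$ and the logarithm $\log(1 + t\bar{u}^2/\lambda_0)$ into $\kappa_0^2\log t$. By contrast, the supermartingale verification is routine given the sub-Gaussian tail bound of Assumption~\ref{assumption:basic}(a) and the predictability of the chosen arms.
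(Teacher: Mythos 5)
Your proposal is correct and follows essentially the same route as the paper's own proof: the same identifications $A = \BF x^\prime \BF M_t/\sigma_0$ and $B = \norm{\BF x}_{\BF C_t^{-1}}$, the same tower-property verification of the supermartingale hypothesis from Assumption \ref{assumption:basic}(a), and the same choice $y = \lambda_0\norm{\BF x}^2$ (the paper normalizes $\norm{\BF x}=1$ and takes $y=\lambda_0$, which is the identical move) together with the sandwich $\lambda_0\norm{\BF x}^2 \leq B^2 \leq t\bar{u}^2\norm{\BF x}^2$ to calibrate against $\kappa_0$. The only differences are trivial arithmetic in the final absorption of constants into $\kappa_0^2\log t$, and you correctly flag, as the paper implicitly does, that the argument really requires $t \geq r \geq 2$.
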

\begin{proof}  Let $\BF x \in \Real^r$ and $t \geq 1$ be given.   Without loss of generality, we can assume that $\norm{\BF x} = 1$.  Let the random variables $A$ and $B$
be defined by
$$
	A = \frac{\BF x^\prime \BF M_t}{\sigma_0}  = \sum_{s=1}^t \frac{ \left( \BF x^\prime \BF U_s \right)}{\sigma_0} W_s
	\quad \textrm{and} \quad
	B = \norm{ \BF x}_{\BF C_t^{-1}} = \sqrt{ \BF x^\prime \BF C_t^{-1} \BF x} = \sqrt{ \sum_{s=1}^t \left( \BF x^\prime \BF U_s  \right)^2}~.
$$
For any $s$, let $\BF H_s = \left( \BF U_1, X_1, W_1, \ldots, \BF U_s, X_s, W_s \right)$
the history until the end of period $s$.
By definition, 
$\BF U_{s}$ is a function of $\BF H_{s-1}$, and 
it follows from 
Assumption \ref{assumption:basic}(a)
that for any $\gamma \in \Real$,
$$
 \E \left[ 
	 e^{   \frac{ \gamma}{\sigma_0} \left( \BF x^\prime \BF U_s \right) W_s
						- \frac{ \gamma^2 \left( \BF x^\prime \BF U_s  \right)^2 }{2} }	 
	 ~\Big|~ \BF H_{s-1} \right]
=  e^{  - \frac{ \gamma^2 \left( \BF x^\prime \BF U_s  \right)^2 }{2} }
 \E \left[ 
	 e^{   \frac{ \gamma}{\sigma_0} \left( \BF x^\prime \BF U_s \right) W_s }	 
	 ~\Big|~ \BF H_{s-1} \right] \leq 1~.
%\leq e^{  - \frac{ \gamma^2 \left( \BF x^\prime \BF U_t  \right)^2 }{2} }
%	 \cdot e^{  \frac{ \gamma^2 \left( \BF x^\prime \BF U_t \right)^2  }{ 2} } = 1~.	 
$$
Using a standard argument involving iterated expectations, we obtain
$$
	\E \left[ e^{ \gamma A -  \left( \gamma^2 B^2 / 2 \right) }  \right]
	= 	\E \left[ e^{ 
					\sum_{s=1}^t 
						\left(   \frac{ \gamma}{\sigma_0} \left( \BF x^\prime \BF U_s \right)  W_s
						- \frac{ \gamma^2 \left( \BF x^\prime \BF U_s  \right)^2 }{2} \right)
	 }  \right] = \E \left[ \prod_{s=1}^t e^{ 
						\left( \frac{ \gamma}{\sigma_0} \left( \BF x^\prime \BF U_s \right)  W_s
						- \frac{ \gamma^2 \left( \BF x^\prime \BF U_s  \right)^2 }{2} \right)
	 }  \right] \leq 1
$$

We can thus apply Lemma \ref{lemma:tail-bound-for-ratios} to the random variables $A$ and $B$. Moreover,  it follows from the definition of $\bar{u}$ and $\lambda_0$ in Assumption \ref{assumption:basic}(b) that, with probability one,  
$$
\lambda_0 \leq  \lambda_{\min} \left(\sum_{s=1}^t \BF U_s \BF U_s^\prime \right) \leq 
\BF x^\prime \left( \sum_{s=1}^t  \BF U_s \BF U_s^\prime \right) \BF x
= B^2  = \sum_{s=1}^t \left( \BF x^\prime \BF U_s \right)^2 \leq t \bar{u}^2~.
$$
Therefore,  $ B^2 + \lambda_0   \leq  2 B^2$, and
\bex
	1 + \frac{1}{2} \log  \left( 1 +  \frac{B^2}{\lambda_0} \right)  
&\leq& 1 + \frac{1}{2} \log  \left( 1 +  \frac{ t \bar{u}^2 }{\lambda_0} \right)  
\leq \frac{1}{2} \left( \log t +  2+ \log \left( 1 +  \frac{  \bar{u}^2 }{\lambda_0} \right) \right)\\
&\leq&  \frac{ \log t}{2} \left( 1 + \frac{ 2 + \log \left( 1 +  (\bar{u}^2 / \lambda_0) \right) }{ \log t}  \right) \leq \frac{ \kappa_0^2 \log t}{2 } ~,
\eex
where the last inequality follows from the definition of $\kappa_0$ and the fact that $t \geq r \geq 2$.
These two upper bounds imply  that
$
\sqrt{ \left( B^2 + \lambda_0 \right) \left( 1 + \frac{1}{2} \log  \left( 1 +  \frac{B^2}{\lambda_0} \right) \right)}
	\leq  \kappa_0  \sqrt{ \log t } \; B~.
$
Therefore,
\bex
\Pr \left\{  \abs{\BF x^\prime \BF M_t}  >  \zeta \, \kappa_0 \, \sigma_0 \sqrt{\log t} 
 	\norm{ \BF x}_{\BF C_t^{-1}} 
	\right\}  &\leq& \Pr \left\{ \abs{A} > \zeta  \sqrt{  \left( B^2 + \lambda_0 \right) \left( 1 + \frac{1}{2} \log  \left( 1 +  \frac{B^2}{\lambda_0} \right) \right)} 
 \right\}~, %~\leq~ e^{- \zeta^2/2}~.
\eex
and the desired result then follows immediately from Lemma \ref{lemma:tail-bound-for-ratios}.
\end{proof}

The next and final lemma extends the previous result to show that the matrix 
$
\zeta^2 \, \kappa_0^2 \, \sigma_0^2 \, (\log t)  \, \BF C_t^{-1}  - \BF M_t \BF M_t^\prime$
is positive semidefinite with a high probability.  The proof of this result makes use of
the fact that for the matrix $\zeta^2 \, \kappa_0^2 \, \sigma_0^2 \,(\log t)  \, \BF C_t^{-1}  - \BF M_t \BF M_t^\prime$ to be positive semidefinite,
it suffices for the inequality 
$ \BF x^\prime \BF M_t \BF M_t^\prime \BF x \leq \zeta^2 \, \kappa_0^2 \, \sigma_0^2 \, (\log t)  \, \BF x^\prime  \BF C_t^{-1}  \BF x  $ 
to hold for vectors $\BF x$ in a sufficiently dense subset.   We can then apply Lemma \ref{lemma:martingale-inq} for each such vector $\BF x$ and use the union bound.
%The construction of the appropriate dense set is given in Appendix 
%\ref{appendix:positive-definite-matrix-tail}.

\begin{lem} \label{lemma:positive-definite-matrix-tail} Under Assumption \ref{assumption:basic}, for any $t \geq r$ and $\zeta \geq 2$,
$$
\Pr \left\{ \BF M_t \BF M_t^\prime
	~\leq~ \zeta^2 \, \kappa_0^2 \, \sigma_0^2 \, (\log t)  \, \BF C_t^{-1} \right\}  ~\geq~  
		1 \,-\,  t^{r \, \kappa_0^2  } \; e^{ -\, \zeta^2 / 4 }~,
		% t^{2r} \, e^{ r \kappa_0^2 \, -\, \left( c^2 / 4 \right)}~.
$$
%where $\kappa_0 = \sigma_0 \sqrt{ 4 + 2 \log \left( 1 + \bar{u}^2 / \lambda_0 \right)}$.
\end{lem}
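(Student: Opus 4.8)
The plan is to reduce the desired matrix inequality to a one-dimensional statement and then control all directions simultaneously through a covering argument built on Lemma \ref{lemma:martingale-inq}. Since $\BF M_t \BF M_t^\prime$ is a rank-one matrix, the inequality $\BF M_t \BF M_t^\prime \preceq \zeta^2 \kappa_0^2 \sigma_0^2 (\log t)\, \BF C_t^{-1}$ is equivalent to requiring that $(\BF x^\prime \BF M_t)^2 \le \zeta^2 \kappa_0^2 \sigma_0^2 (\log t)\, \BF x^\prime \BF C_t^{-1} \BF x$ hold simultaneously for every unit vector $\BF x$; writing $c = \zeta \kappa_0 \sigma_0 \sqrt{\log t}$, this is the same as $\BF M_t^\prime \BF C_t \BF M_t \le c^2$, since $\sup_{\BF x \neq \BF 0} (\BF x^\prime \BF M_t)^2 / (\BF x^\prime \BF C_t^{-1}\BF x) = \BF M_t^\prime \BF C_t \BF M_t$. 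For a single fixed $\BF x$, Lemma \ref{lemma:martingale-inq} already gives exactly a bound of this type with failure probability $e^{-\zeta^2/2}$. The difficulty, and the reason the stated bound carries the prefactor $t^{r\kappa_0^2}$ and the weaker exponent $e^{-\zeta^2/4}$, is that the maximizing direction $\BF x^\ast \propto \BF C_t \BF M_t$ is random, so the fixed-vector lemma cannot be applied to it directly.

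First I would record the deterministic two-sided eigenvalue bound that makes the covering argument quantitative: for $t \ge r$, since the policy plays the $r$ linearly independent arms $\BF b_1, \ldots, \BF b_r$ during initialization, Assumption \ref{assumption:basic}(b) gives $\lambda_0 \BF I_r \preceq \BF C_t^{-1} = \sum_{s=1}^t \BF U_s \BF U_s^\prime \preceq t \bar{u}^2 \BF I_r$ with probability one, and hence $\tfrac{1}{t\bar{u}^2}\BF I_r \preceq \BF C_t \preceq \tfrac{1}{\lambda_0}\BF I_r$. In particular $\norm{\BF M_t}^2 \le t\bar{u}^2\, \BF M_t^\prime \BF C_t \BF M_t$, which later lets me bound the variation of $\BF x^\prime \BF M_t$ over the sphere in terms of the very quantity $\BF M_t^\prime \BF C_t \BF M_t$ being estimated.

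Next I would fix a $\delta$-net $\mathcal N$ of the Euclidean unit sphere in $\Real^r$, of cardinality $\abs{\mathcal N} \le (1 + 2/\delta)^r$, apply Lemma \ref{lemma:martingale-inq} to each $\BF x \in \mathcal N$ with the reduced threshold parameter $\zeta' = \zeta/\sqrt 2$ (which satisfies $\zeta' \ge \sqrt 2$ precisely because $\zeta \ge 2$), and take a union bound: with probability at least $1 - \abs{\mathcal N}\, e^{-\zeta'^2/2} = 1 - \abs{\mathcal N}\, e^{-\zeta^2/4}$, every net point obeys $\abs{\BF x^\prime \BF M_t} \le \zeta' \kappa_0 \sigma_0 \sqrt{\log t}\, \norm{\BF x}_{\BF C_t^{-1}}$. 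Choosing $\delta$ polynomially small in $1/t$, for instance $\delta = 2/(t^{\kappa_0^2} - 1)$, makes $\abs{\mathcal N} \le t^{r\kappa_0^2}$, which produces exactly the claimed prefactor.

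The main obstacle is the final extension step: passing from the bound at net points to the bound at the random maximizer $\BF x^\ast$. I would handle this by approximating the unit vector $\BF x^\ast$ achieving $\BF M_t^\prime \BF C_t \BF M_t$ by its nearest net point $\BF x_0$, with $\norm{\BF x^\ast - \BF x_0} \le \delta$, and controlling the two resulting discrepancies — in the numerator $\BF x^\prime \BF M_t$ and in the denominator $\BF x^\prime \BF C_t^{-1}\BF x$ — using the eigenvalue bounds above together with $\norm{\BF M_t}^2 \le t\bar{u}^2\, \BF M_t^\prime \BF C_t\BF M_t$. Because the generalized Rayleigh quotient $R(\BF x) = (\BF x^\prime \BF M_t)^2 / (\BF x^\prime \BF C_t^{-1}\BF x)$ is stationary at its maximizer, the error incurred is of order $\delta^2$ times a factor polynomial in $t$, which the choice of $\delta$ renders negligible; the factor-of-two slack built into $c^2 = 2(c')^2$ (from taking $\zeta' = \zeta/\sqrt 2$) then absorbs this error and yields $\BF M_t^\prime \BF C_t \BF M_t \le c^2$ on the good event. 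Collecting the probability estimate completes the proof.
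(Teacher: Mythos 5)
Your proposal follows essentially the same route as the paper: a $\delta$-net of the unit sphere with $\delta$ polynomial in $1/t$, a union bound over Lemma \ref{lemma:martingale-inq} applied at the reduced level $\zeta/\sqrt{2}$ (whence the prefactor $t^{r\kappa_0^2}$ and the weakened exponent $e^{-\zeta^2/4}$), and a perturbation step extending the bound from net points to all directions. The only material difference is in that last step: the paper adds $r$ extra events for the standard basis vectors so as to bound $\norm{\BF M_t}^2$ directly on the good event and then uses a first-order Lipschitz estimate whose slack is supplied by $\BF C_t^{-1} \succeq \lambda_0 \BF I_r$, whereas you self-bound $\norm{\BF M_t}^2 \leq t\bar{u}^2\, \BF M_t^\prime \BF C_t \BF M_t$ and absorb the resulting approximation error into the factor of two gained from halving $\zeta$ --- both variants work.
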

\begin{proof}
Let $\CAL S^r = \left\{ \BF x \in \Real^r : \norm{\BF x} = 1 \right\}$ denote the unit sphere in $\Real^r$.
Let $\delta > 0$ be defined by:
$$
			\delta = \frac{ \lambda_0}{ 9 \bar{u}^2 t}~,
$$
where the constants $\lambda_0$ and $\bar{u}$ are given in Assumption \ref{assumption:basic}(b).  Without loss of generality, we can assume that $\delta \leq 1/2$
and that $1/\delta$ is an integer.  Let $\CAL X^r$ be a covering of $\CAL S^r$, that is, for any $\BF x \in \CAL S^r$, there exists $\BF y \in \CAL X^r$ such that $\norm{\BF x - \BF y} \leq \delta$.  It is easy to verify that $\CAL X^r$ can be chosen to have a cardinality of at most $\left(2  \sqrt{r}  / \delta \right)^r$ because we can consider a rectangular grid on $[-1, 1]^r$ with a grid spacing of $\delta / \sqrt{r}$.   Then, for any point $\BF x \in \CAL S^r$, there is a point $\BF y$ on the rectangular grid  such that the magnitude of each component of $\BF x - \BF y$ is at most $\delta/ \sqrt{r}$, which implies that $\norm{\BF x - \BF y} \leq \delta$.

Let $t \geq r$ and $\zeta \geq 2$ be given.  To facilitate our exposition, let $\beta = \zeta^2 \,\kappa_0^2 \, \sigma_0^2 \, \log t$.
Let $\CAL G$ denote the event that the following inequalities hold:
$$
	\BF e_i^\prime  \BF M_t \BF M_t^\prime \BF e_i \leq \beta \BF e_i \BF C_t^{-1} \BF e_i, 
	\quad i = 1, 2, \ldots, r~,
	\quad \textrm{and} \quad
	\BF y^\prime \BF M_t \BF M_t^\prime \BF y \leq 
	\frac{ \beta }{2} \BF y^\prime \BF C_t^{-1} \BF y~,
	\quad \forall ~\BF y \in \CAL X^r~.
$$
Using the  union bound, it follows from Lemma \ref{lemma:martingale-inq} that 
the event $\CAL G$ happens with a probability at least 
\bex
	 1 - \abs{\CAL X^r} e^{- \zeta^2 /4 }  - r e^{ - \zeta^2 / 2 } &\geq& 1 - 
	 	\left( \frac{ 2 \sqrt{r} }{\delta} \right)^r e^{- \zeta^2 /4 }  - r e^{ - \zeta^2 / 2 }
		~\geq~ 1 - \left( \left( \frac{ 2 \sqrt{r} }{\delta} \right)^r + r \right) e^{ - \zeta^2 / 4} \\
		&\geq& 1 -  \left( \frac{ 4 \sqrt{r} }{\delta} \right)^r  e^{ - \zeta^2 / 4}   
		~\geq~ 1 - \left( \frac{ 36 \, \bar{u}^2 \, t^{2} }{\lambda_0 } \right)^r  e^{ - \zeta^2 / 4} 
		~\geq~  1 - t^{r \, \kappa_0^2 } \; e^{ - \zeta^2 / 4}~,
\eex
where we have used the fact that $t \geq \sqrt{r} \geq 2$ in the penultimate inequality.  The
final inequality follows from the definition of $\kappa_0$ in Equation (\ref{eq:constant-bounds}), which implies that $\kappa_0^2 \geq 4 \left( 1 + \log (36 \bar{u}^2 / \lambda_0) \right) \geq 4$, and thus,
$
\frac{ 36 \, \bar{u}^2 \, t^{2} }{\lambda_0 } \leq t^2 e^{\kappa_0^2/4} 
\leq t^{ \kappa_0^2 / 2}  \left( t^2 \right)^{\kappa_0^2/4} = t^{ \kappa_0^2}~.
$

To complete the proof,  it suffices to show that when the event $\CAL G$ occurs, we have that
$\BF x^\prime \BF M_t \BF M_t^\prime \BF x \leq \beta \, \BF x^\prime \BF C_t^{-1} \BF x$ for all
$\BF x \in \CAL S^r$.  Consider an arbitrary $\BF x \in \CAL S^r$, and let $\BF y \in \CAL X^r$
be such that $\norm{\BF x - \BF y } \leq \delta$. This implies that $\norm{\BF x + \BF y} \leq 
\norm{2 \BF x} + \norm{ \BF y - \BF x} \leq 2 + \delta \leq 3$. Moreover,
$\BF x^\prime \BF M_t \BF M_t^\prime \BF x - \BF y^\prime \BF M_t \BF M_t^\prime \BF y
= \left( \BF x - \BF y \right)^\prime \BF M_t \BF M_t^\prime \left( \BF x + \BF y \right)
\leq 3 \delta \norm{\BF M_t}^2$ where we use the Cauchy-Schwarz for the last inequality. 

Similarly, we can show that for all $s$, $\BF y^\prime \BF U_s \BF U_s^\prime \BF y
\leq \BF x^\prime \BF U_s \BF U_s^\prime \BF x + 3 \delta \bar{u}^2$. Summing over all $s$, we 
obtain that $\BF y^\prime \BF C_t^{-1} \BF y \leq \BF x \BF C_t^{-1} \BF x + 3  \delta t \bar{u}^2$.
Putting everything together, we have that
\bex
	\BF x^\prime \BF M_t \BF M_t^\prime \BF x
	&\leq&  	\BF y^\prime \BF M_t \BF M_t^\prime \BF y + 3 \delta \norm{\BF M_t}^2 
	\leq  \frac{ \beta }{2} \BF y^\prime \BF C_t^{-1} \BF y + 3 \delta \norm{\BF M_t}^2\\
	&\leq&  \frac{ \beta }{2} \BF x^\prime \BF C_t^{-1} \BF x + \frac{3 \beta}{2}  \delta t \bar{u}^2
				+ 3 \delta \norm{\BF M_t}^2
	\leq \beta \BF x^\prime \BF C_t^{-1} \BF x  - \frac{ \beta }{2}\lambda_0 + \frac{3 \beta}{2}  \delta t \bar{u}^2
				+ 3 \delta \norm{\BF M_t}^2~,
\eex
where the last inequality follows from the fact that $\BF C_t^{-1} = \sum_{s=1}^t \BF U_s \BF U_s^\prime \geq \lambda_0 \BF I_r$ from our definition of $\lambda_0$.  Finally, note that under the event $
\CAL G$, 
\bex
\norm{\BF M_t}^2  &=& \sum_{i=1}^r \BF e_i^\prime \BF M_t \BF M_t^\prime \BF e_i
						\leq \beta \sum_{i=1}^r \BF e_i^\prime \BF C_t^{-1} \BF e_i 
						= \beta \sum_{s=1}^t \sum_{i=1}^r \BF e_i^\prime \BF U_s \BF U_s^\prime \BF e_i\\
						&=& \beta \sum_{s=1}^t \sum_{i=1}^r \abs{\BF e_i^\prime \BF U_s}^2
						= \beta \sum_{s=1}^t \norm{\BF U_s}^2 \leq \beta t \bar{u}^2~,
\eex
which implies that
$$
	- \frac{ \beta }{2}\lambda_0 + \frac{3 \beta}{2}  \delta t \bar{u}^2
				+ 3 \delta \norm{\BF M_t}^2
		\leq 	- \frac{ \beta }{2}\lambda_0 + \frac{9 \beta}{2} \delta t \bar{u}^2 
		= \frac{\beta}{2} \left( 9 \delta t \bar{u}^2 - \lambda_0 \right) = 0~,
$$
where the last inequality follows from the definition of $\delta$.  Thus, we have that
$	\BF x^\prime \BF M_t \BF M_t^\prime \BF x \leq 
\beta \BF x^\prime \BF C_t^{-1} \BF x$, which is the desired result.
\end{proof}

We are now ready to give a proof of Theorem \ref{thm:chernoff-inq-for-UE-infinite-arms}.

\begin{proof}
It suffices to establish the second inequality in Theorem \ref{thm:chernoff-inq-for-UE-infinite-arms} because
the proof for the first inequality follows  the same argument. It follows from the Cauchy-Schwarz
inequality that
$$
	 \frac{ \left( \BF U_{t+1} - \BF x \right)^\prime \left( \widehat{\BF Z}_t - \BF z \right) }{
	 \norm{ \BF U_{t+1} - \BF x }_{\BF C_t} }
	 =
	  	\frac{ \left( \BF U_{t+1} - \BF x \right)^\prime \BF C_t^{1/2} \BF C_t^{-1/2} \left( \widehat{\BF Z}_t - \BF z \right)}{ \norm{\BF C_t^{1/2} \left( \BF U_{t+1} - \BF x \right) } } 
	\leq \norm{\BF C_t^{-1/2} \left( \widehat{\BF Z}_t - \BF z \right)}~,
$$
with probability one. Therefore,
\bex
	\lefteqn{ \Pr \left\{   \left( \BF U_{t+1} - \BF x \right)^\prime \left( \widehat{\BF Z}_t - \BF z \right) 
	~>~ \zeta \, \kappa_0 \, \sigma_0 \, \sqrt{\log t} \norm{ \BF U_{t+1} - \BF x }_{\BF C_t}  ~\big|~ \BF Z = \BF z \right\} }\\
		&\leq& 	\Pr \left\{  \norm{\BF C_t^{-1/2} \left( \widehat{\BF Z}_t - \BF z \right)} 
	~>~ \zeta \, \kappa_0 \, \sigma_0 \, \sqrt{\log t}  ~\big|~ \BF Z = \BF z \right\} \\
		%&=& \Pr \left\{ \left( \widehat{\BF Z}_t - \BF z \right)^\prime \BF C_t^{-1/2}   \BF C_t^{-1/2} \left( \widehat{\BF Z}_t - \BF z \right) ~>~ \zeta^2 \, \kappa_0^2 \, (\log t)  ~\bigg|~ \BF Z = \BF z \right\}\\
		&=& \Pr \left\{ \left( \widehat{\BF Z}_t - \BF z \right)^\prime \BF C_t^{-1} \left( \widehat{\BF Z}_t - \BF z \right) ~>~ \zeta^2 \, \kappa_0^2 \, \sigma_0^2 \, \log t  ~\big|~ \BF Z = \BF z \right\} \\
		&=& \Pr \left\{ \BF M_t^\prime \BF C_t \BF M_t ~>~ \zeta^2 \, \kappa_0^2 \, \sigma_0^2 \, \log t  ~\big|~ \BF Z = \BF z \right\}~,
\eex
where the last equality follows from the definition of the least squares estimate $\widehat{\BF Z}_t$.

It is a well-known result in linear algebra (see, for example, Theorem 1.3.3 in \citealp{Bhatia:2007}) that if $\BF A$ and $\BF B$ are two symmetric positive definite matrices, then the block matrix
$
\left(
\begin{array}{cc}
\BF A & \BF X  \\ \BF X^\prime & \BF B  
\end{array}
\right)
$ is positive semidefinite {\em if and only if } $ \BF X \BF B^{-1} \BF X^\prime \leq \BF A$.  Applying
this result to the two ``equivalent'' $(r+1) \times (r+1)$ matrices 
$
\left(
\begin{array}{cc}
\zeta^2 \, \kappa_0^2 \, \sigma_0^2 \, \log t & \BF M_t^{\prime}  \\ 
\BF M_t  & \BF C_t^{-1}
\end{array}
\right)
\quad \textrm{and} \quad
\left(
\begin{array}{cc}
\BF C_t^{-1} & \BF M_t  \\ 
\BF M_t^\prime  & \zeta^2 \, \kappa_0^2 \, \sigma_0^2 \,\log t
\end{array}
\right)~,
$
we conclude that  $\BF M_t^\prime \BF C_t \BF M_t \leq \zeta^2 \, \kappa_0^2 \, \sigma_0^2 \, \log t$ if and only if $\BF M_t \BF M_t^\prime \leq \zeta^2 \, \kappa_0^2 \, \sigma_0^2 \, (\log t) \BF C_t^{-1}$. The desired
result then follows from the fact  that
\bex
	\Pr \left\{ \BF M_t^\prime \BF C_t \BF M_t ~>~ \zeta^2 \, \kappa_0^2 \, \sigma_0^2 \, \log t  ~\big|~ \BF Z = \BF z \right\}
	&=& 1 - \Pr \left\{   \BF M_t^\prime \BF C_t \BF M_t ~\leq~ \zeta^2 \, \kappa_0^2 \, \sigma_0^2 \, \log t  ~\big|~ \BF Z = \BF z \right\}\\
	&=& 1 - \Pr \left\{   \BF M_t \BF M_t^\prime ~\leq~ \zeta^2 \, \kappa_0^2 \, \sigma_0^2 \, (\log t) \BF C_t^{-1}  ~\big|~ \BF Z = \BF z \right\}\\
	&\leq&   t^{r \, \kappa_0^2  } \; e^{ -\, \zeta^2 / 4 }~,
	%t^{2r} \, e^{ r \kappa_0^2 \, -\, \left( \zeta^2 / 4 \right)}~,
\eex
where the last inequality follows from Lemma \ref{lemma:positive-definite-matrix-tail}. 
\end{proof}

\subsection{Bounds for General Compact Sets of Arms: Proof of Theorem \ref{theorem:UE-regret-analysis}} 
\label{appendix:proof-UE-regret-analysis}

%In this section, we  analyze the regret and Bayes risk under the \textsc{UE} policy when the collection of arms is an arbitrary compact set.
%It suffices to establish the  bound the regret because the risk bound  follows immediately from taking expectation.   
%The arguments used here are slightly more general than the one in \citet{DHK:2008a}
%because, under Assumption \ref{assumption:basic}, we allow the random vector $\BF Z$ and the error random variables $W^{\BF u}_t$ to be unbounded. 
%To highlight the key ideas of the argument, 
%we will only state and discuss the results, relegating the proofs to  Appendix \ref{section:proof-lemmas-for-UE}.

The proof of  Theorem \ref{theorem:UE-regret-analysis} makes
use of a number of auxiliary results.   The first result provides a motivation for the choice of
the parameter $\alpha$ in Equation (\ref{eq:uncertainty-ellipsoid}) and our definition of
the uncertainty radius $R^\BF u_t$ in Equation (\ref{eq:uncertainty-radius}).
They are chosen to keep the probability of overestimating the reward of an arm by
more than $R^{\BF u}_t$ bounded by $1/t^2$.  This will limit the growth rate of the cumulative regret due to such overestimation.  

\begin{lem}[Large Deviation Inequalities for  the Uncertainty Radius] 
\label{lemma:uncertainty-radius-tail-bound}
Under Assumption \ref{assumption:basic}, for any arm $\BF u \in \CAL U_r$ and $
t \geq r$,
$$
\Pr \left\{ \BF u^\prime \left( \widehat{\BF Z}_t - \BF z \right) > R_t^\BF u 
~\big|~ \BF Z = \BF z \right\} ~\leq~  \frac{1}{t^2}~,
$$
and for any $\BF x \in \Real^r$,
$$
\Pr \left\{  \left(\BF U_{t+1} - \BF x \right)^\prime \left( \widehat{\BF Z}_t - \BF z \right) > 
	\alpha  \, \sqrt{\log t} \,
		\sqrt{ \min\{ r \log t \,,\, \abs{\CAL U_r} \}} \; \norm{\BF U_{t+1} - \BF x}_{\BF C_t} 
		~\big|~ \BF Z = \BF z \right\} ~\leq~  \frac{1}{t^2}~,
$$
where %DDD the parameter 
$\alpha = 4 \sigma_0 \kappa_0^2$.
\end{lem}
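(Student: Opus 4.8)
The plan is to derive both inequalities directly from the two Chernoff bounds for uncertainty ellipsoids (Theorems \ref{thm:chernoff-inq-for-UE-finite-arms} and \ref{thm:chernoff-inq-for-UE-infinite-arms}), choosing in each situation the bound that matches which of the two terms attains the minimum $\min\{r\log t, \abs{\CAL U_r}\}$ appearing in the definition of $R_t^\BF u$. Recall that $R_t^\BF u = \alpha\,\sqrt{\log t}\,\sqrt{\min\{r\log t, \abs{\CAL U_r}\}}\,\norm{\BF u}_{\BF C_t}$ with $\alpha = 4\sigma_0\kappa_0^2$, and that the definition of $\kappa_0$ in Equation (\ref{eq:constant-bounds}) gives $\kappa_0^2 \geq 4$. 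When $\abs{\CAL U_r}=\infty$ the minimum is always $r\log t$, so the case split below is well defined.

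First I would treat the case $r\log t \leq \abs{\CAL U_r}$, where the minimum equals $r\log t$ and hence $R_t^\BF u = 4\sigma_0\kappa_0^2(\log t)\sqrt{r}\,\norm{\BF u}_{\BF C_t}$. I apply the first inequality of Theorem \ref{thm:chernoff-inq-for-UE-infinite-arms} with $\zeta = 4\kappa_0\sqrt{r\log t}$, chosen precisely so that $\zeta\kappa_0\sigma_0\sqrt{\log t} = R_t^\BF u/\norm{\BF u}_{\BF C_t}$; since $\kappa_0 \geq 2$ and $t \geq r \geq 2$, this $\zeta$ exceeds $2$, as the theorem requires. The resulting bound is $t^{r\kappa_0^2}e^{-\zeta^2/4} = t^{r\kappa_0^2}\,t^{-4r\kappa_0^2} = t^{-3r\kappa_0^2}$, and $3r\kappa_0^2 \geq 3\cdot 2\cdot 4 = 24 \geq 2$ yields the claimed bound $1/t^2$.

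Next I would treat the complementary case $\abs{\CAL U_r} \leq r\log t$, where $R_t^\BF u = 4\sigma_0\kappa_0^2\sqrt{\log t}\,\sqrt{\abs{\CAL U_r}}\,\norm{\BF u}_{\BF C_t}$. Here I apply the first inequality of Theorem \ref{thm:chernoff-inq-for-UE-finite-arms} with $\zeta = 4\kappa_0^2\sqrt{(\log t)\abs{\CAL U_r}}$, again chosen so that $\zeta\sigma_0 = R_t^\BF u/\norm{\BF u}_{\BF C_t}$. The bound becomes $t^{5\abs{\CAL U_r}}e^{-\zeta^2/2} = t^{5\abs{\CAL U_r}}\,t^{-8\kappa_0^4\abs{\CAL U_r}} = t^{\abs{\CAL U_r}(5 - 8\kappa_0^4)}$, and since $\kappa_0^4 \geq 16$ and $\abs{\CAL U_r} \geq 1$ we have $\abs{\CAL U_r}(8\kappa_0^4 - 5) \geq 123 \geq 2$, again giving $1/t^2$. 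Combining the two cases proves the first inequality.

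Finally, the second inequality follows by exactly the same case split, using instead the second (that is, the $\BF U_{t+1} - \BF x$) inequalities of Theorems \ref{thm:chernoff-inq-for-UE-finite-arms} and \ref{thm:chernoff-inq-for-UE-infinite-arms} with the identical choices of $\zeta$; the algebra is unchanged because the threshold is again exactly $\zeta\sigma_0\norm{\BF U_{t+1}-\BF x}_{\BF C_t}$ (respectively $\zeta\kappa_0\sigma_0\sqrt{\log t}\,\norm{\BF U_{t+1}-\BF x}_{\BF C_t}$). I do not expect a genuine obstacle here: the whole content is that $\alpha$ and the $\min$ term in $R_t^\BF u$ were defined so that, whichever Chernoff bound is invoked, the polynomial prefactor ($t^{r\kappa_0^2}$ or $t^{5\abs{\CAL U_r}}$) is dominated by the exponential decay. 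The one point requiring care is confirming the hypothesis $\zeta \geq 2$ of Theorem \ref{thm:chernoff-inq-for-UE-infinite-arms} in the first case, which follows from $\kappa_0^2 \geq 4$ and $t \geq 2$.
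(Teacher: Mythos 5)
Your proposal is correct and follows essentially the same route as the paper: the paper likewise invokes Theorem \ref{thm:chernoff-inq-for-UE-finite-arms} with $\zeta = 4\kappa_0^2\sqrt{\log t}\sqrt{\min\{r\log t, \abs{\CAL U_r}\}}$ and Theorem \ref{thm:chernoff-inq-for-UE-infinite-arms} with $\zeta = 4\kappa_0\sqrt{\min\{r\log t, \abs{\CAL U_r}\}}$, splits on which term attains the minimum, and closes each case with exactly the exponent comparisons you give ($(8\kappa_0^4-5)\abs{\CAL U_r}\geq 2$ and $3r\kappa_0^2\geq 2$). The only cosmetic difference is that the paper applies both theorems up front and takes the minimum of the two bounds before splitting into cases, whereas you split first; your explicit verification of the hypothesis $\zeta\geq 2$ is a small point the paper leaves implicit.
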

\begin{proof}
It suffices to establish the first inequality because the proof of the second one
is exactly the same. Let $\beta_t = 4 \sigma_0 \kappa_0^2\, \sqrt{\log t} \, \sqrt{ \min\{ r \log t \,,\, \abs{\CAL U_r} \}}$.
Recall from Equations (\ref{eq:uncertainty-ellipsoid})  and (\ref{eq:uncertainty-radius}) that
$R^{\BF u}_t  = \beta_t \norm{\BF u}_{\BF C_t}$.  By applying
Theorem \ref{thm:chernoff-inq-for-UE-finite-arms} (with $\zeta = 
4 \kappa_0^2\, \sqrt{\log t} \, \sqrt{ \min\{ r \log t \,,\, \abs{\CAL U_r} \}}$)
and Theorem \ref{thm:chernoff-inq-for-UE-infinite-arms} (with 
$\zeta = 4 \kappa_0 \, \sqrt{ \min\{ r \log t \,,\, \abs{\CAL U_r} \}}$), we obtain
$$
 \Pr \left\{ \BF u^\prime \left( \widehat{\BF Z}_t - \BF z \right) > R^{\BF u}_t  
\,\big|\, \BF Z = \BF z \right\} 
~\leq~
\min \left\{   t^{5\abs{\CAL U_r}} e^{ - 8 \kappa_0^4 (\log t) \min\{ r \log t \,,\, \abs{\CAL U_r} \} }
 ~,~ t^{r \kappa_0^2} e^{-\,  4\,  \kappa_0^2 \min\{ r \log t \,,\, \abs{\CAL U_r} \}  } \right\}~.
$$
There are two cases to consider: $ r \log t > \abs{\CAL U_r}$ and $r \log t \leq \abs{\CAL U_r}$.
Suppose that $r \log t > \abs{\CAL U_r}$.  Then,
$$
\Pr \left\{ \BF u^\prime \left( \widehat{\BF Z}_t - \BF z \right) \;>\; R_t^\BF u 
\,\big|\, \BF Z = \BF z \right\} 
\leq
 t^{5\abs{\CAL U_r}} e^{ - 8 \kappa_0^4 (\log t) \min\{ r \log t \,,\, \abs{\CAL U_r} \} } 
=   t^{5\abs{\CAL U_r}} e^{ - 8 \kappa_0^4 (\log t) \abs{\CAL U_r} }
 =  \frac{ t^{5 \abs{\CAL U_r}}}{  t^{ 8 \kappa_0^4 \abs{\CAL U_r}}}  \leq \frac{1}{t^2}~, 
$$
where the last inequality follows from the fact that $\left( 8 \kappa_0^4  - 5 \right) \abs{\CAL U_r} \geq 2$.
In the second case where $r \log t \leq \abs{\CAL U_r}$, we have that
$$
\Pr \left\{ \BF u^\prime \left( \widehat{\BF Z}_t - \BF z \right) \;>\; R_t^\BF u 
\,\big|\, \BF Z = \BF z \right\} 
\leq
 t^{r \kappa_0^2} e^{-\,  4\,  \kappa_0^2 \min\{ r \log t, \abs{\CAL U_r} \}  } 
 =   t^{r \kappa_0^2} e^{-\,  4\,  \kappa_0^2 r \log t   }
= \frac{ t^{r \kappa_0^2}}{t^{4 r \kappa_0^2}} = \frac{1}{ t^{3 r \kappa_0^2}} \leq \frac{1}{t^2}~, 
$$
where the last inequality follows from the fact that $3 r \kappa_0^2 \geq 2$.  Since the probability
is bounded by $1/t^2$ in both cases, this gives the desired result.
\end{proof}

For any $t \geq 1$, let the random variable  $Q_t(\BF z)$
denote the instantaneous regret in period $t$ given that $\BF Z = \BF z$, that is,
\be
	Q_t (\BF z) &=&  \max_{\BF v \in \CAL U_r} \BF v^\prime \BF z  - \BF U_{t}^\prime \BF z~.
	\label{eq:instantaneous-regret}
\ee
Lemma \ref{lemma:uncertainty-radius-tail-bound}
shows that the probability of a large estimation error in period $t$ is at most $O \left(1/t^2\right)$.  Consequently,
as shown in the following lemma,  the probability of having a large instantaneous regret in period $t$ is also small. 

\begin{lem}[Instantaneous Regret Bound] \label{lemma:tail-bound-for-instantaneous-regret} Under Assumption \ref{assumption:basic}, for all $t \geq r$ and $\BF z \in \Real^r$,
$$
	\Pr \left\{ Q_{t+1}(\BF z) >  2 \alpha \, \sqrt{\log t} \sqrt{ \min \left\{ r \log t, \abs{\CAL U_r} \right\}}\, \norm { \BF U_{t+1}}_{\BF C_t}  ~\big|~ \BF Z = \BF z \right\}
	\leq  \frac{1}{t^2}~.
$$
\end{lem}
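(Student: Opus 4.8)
The plan is to compare the arm $\BF U_{t+1}$ chosen by the \textsc{UE} policy against an arm that is optimal for the true parameter $\BF z$. Let $\BF u^\ast \in \CAL U_r$ attain $\max_{\BF v \in \CAL U_r} \BF v^\prime \BF z$; such an arm exists because $\CAL U_r$ is compact, and, crucially, it is a \emph{fixed} (deterministic) vector once $\BF z$ is given. Writing the instantaneous regret as $Q_{t+1}(\BF z) = (\BF u^\ast - \BF U_{t+1})^\prime \BF z$, the first step is to inject the estimate $\widehat{\BF Z}_t$ and invoke the defining property of the \textsc{UE} decision in Equation (\ref{eq:UE-decision}), namely $(\BF u^\ast)^\prime \widehat{\BF Z}_t + R_t^{\BF u^\ast} \leq \BF U_{t+1}^\prime \widehat{\BF Z}_t + R_t^{\BF U_{t+1}}$, since $\BF U_{t+1}$ maximizes $\BF v^\prime \widehat{\BF Z}_t + R_t^{\BF v}$. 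Starting from the telescoping identity
$$
Q_{t+1}(\BF z) = (\BF u^\ast)^\prime(\BF z - \widehat{\BF Z}_t) + \left[ (\BF u^\ast)^\prime \widehat{\BF Z}_t - \BF U_{t+1}^\prime \widehat{\BF Z}_t \right] + \BF U_{t+1}^\prime(\widehat{\BF Z}_t - \BF z),
$$
and bounding the bracketed term by $R_t^{\BF U_{t+1}} - R_t^{\BF u^\ast}$ via the decision inequality, the cross terms combine into a single difference, yielding
$$
Q_{t+1}(\BF z) \leq (\BF U_{t+1} - \BF u^\ast)^\prime(\widehat{\BF Z}_t - \BF z) + R_t^{\BF U_{t+1}} - R_t^{\BF u^\ast}.
$$

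The crucial second step is to control the single quantity $(\BF U_{t+1} - \BF u^\ast)^\prime(\widehat{\BF Z}_t - \BF z)$ in one shot, rather than bounding the underestimation of $\BF u^\ast$ and the overestimation of $\BF U_{t+1}$ as two separate tail events. I would apply the second large-deviation inequality of Lemma \ref{lemma:uncertainty-radius-tail-bound} with $\BF x = \BF u^\ast$ (legitimate precisely because that inequality holds for an arbitrary fixed $\BF x \in \Real^r$), obtaining that, with probability at least $1 - 1/t^2$,
$$
(\BF U_{t+1} - \BF u^\ast)^\prime(\widehat{\BF Z}_t - \BF z) \leq \alpha \sqrt{\log t}\, \sqrt{ \min\{ r \log t ,\, \abs{\CAL U_r} \}}\; \norm{\BF U_{t+1} - \BF u^\ast}_{\BF C_t}.
$$

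The final step is bookkeeping with the weighted norm. On this high-probability event, recalling from Equation (\ref{eq:uncertainty-radius}) that $R_t^{\BF u} = \alpha\sqrt{\log t}\sqrt{\min\{ r\log t, \abs{\CAL U_r}\}}\,\norm{\BF u}_{\BF C_t}$, the estimate becomes
$$
Q_{t+1}(\BF z) \leq \alpha \sqrt{\log t}\, \sqrt{ \min\{ r\log t ,\, \abs{\CAL U_r}\}}\left( \norm{\BF U_{t+1} - \BF u^\ast}_{\BF C_t} + \norm{\BF U_{t+1}}_{\BF C_t} - \norm{\BF u^\ast}_{\BF C_t} \right),
$$
and the triangle inequality $\norm{\BF U_{t+1} - \BF u^\ast}_{\BF C_t} \leq \norm{\BF U_{t+1}}_{\BF C_t} + \norm{\BF u^\ast}_{\BF C_t}$ collapses the parenthesized expression to $2\,\norm{\BF U_{t+1}}_{\BF C_t}$, giving exactly the claimed bound. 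I expect the main obstacle to be the probability accounting in the middle step: one must resist treating the two estimation errors as two events, since a union bound would only yield $2/t^2$, and instead fold them into the single difference term so that Lemma \ref{lemma:uncertainty-radius-tail-bound} delivers the sharp $1/t^2$. The triangle-inequality cancellation of the $\norm{\BF u^\ast}_{\BF C_t}$ terms is the small but essential trick that both matches the stated constant $2$ and leaves the bound in terms of $\norm{\BF U_{t+1}}_{\BF C_t}$ alone.
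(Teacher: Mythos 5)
Your proof is correct and follows essentially the same route as the paper's: both start from the \textsc{UE} decision inequality, fold the two estimation-error terms into the single difference $(\BF U_{t+1}-\BF w)^\prime(\widehat{\BF Z}_t-\BF z)$ so that one application of the second inequality of Lemma \ref{lemma:uncertainty-radius-tail-bound} (with the fixed optimal arm as $\BF x$) yields the sharp $1/t^2$, and finish with the triangle inequality $\norm{\BF U_{t+1}-\BF w}_{\BF C_t}\leq\norm{\BF U_{t+1}}_{\BF C_t}+\norm{\BF w}_{\BF C_t}$. The only difference is presentational: the paper assumes the bad event and derives the tail event, while you bound $Q_{t+1}(\BF z)$ directly on the complement.
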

\begin{proof}
Let $\BF z \in \Real^r$ be given and let $\BF w$ denote an optimal arm, that is, $\max_{\BF v \in \CAL U_r} \BF v^\prime \BF z 
= \BF w^\prime \BF z$.   To facilitate our discussion, let $\beta_t =
\alpha \, \sqrt{\log t} \sqrt{ \min \left\{ r \log t, \abs{\CAL U_r} \right\}}$.  
Then,  it follows from the definition of the uncertainty radius in Equation (\ref{eq:uncertainty-radius}) and the 
definition of  $\BF U_{t+1}$ in Equation (\ref{eq:UE-decision}) that 
$$
		\BF U_{t+1}^\prime \widehat{\BF Z}_t +  \beta_t \, \norm{ \BF U_{t+1}}_{\BF C_t}
	\geq \BF w^\prime \widehat{\BF Z}_t +  \beta_t \, \norm{ \BF w}_{\BF C_t}~,
$$
which implies that
\bex
	 \beta_t \, \norm{ \BF U_{t+1}}_{\BF C_t} 
	&\geq& \left(\BF w -\BF U_{t+1}\right)^\prime \widehat{\BF Z}_t +  \beta_t \, \norm{ \BF w}_{\BF C_t}\\
	   &=& \left(\BF w-\BF U_{t+1}\right)^\prime \BF z + \left(\BF w-\BF U_{t+1}\right)^\prime 
	   \left( \widehat{\BF Z}_t - \BF z \right) +  \beta_t \, \norm{ \BF w}_{\BF C_t}\\
	   &=& Q_{t+1}(\BF z) + \left(\BF w-\BF U_{t+1}\right)^\prime \left( \widehat{\BF Z}_t - \BF z \right)  		+  \beta_t \, \norm{ \BF w}_{\BF C_t}~.
\eex

Suppose that the event $Q_{t+1}(\BF z) > 2  \beta_t \, \norm { \BF U_{t+1}}_{\BF C_t}$
occurs. Then, it follows that
\bex
	 \beta_t \, \norm{ \BF U_{t+1}}_{\BF C_t}
	   &>& 2\beta_t \, \norm { \BF U_{t+1}}_{\BF C_t}  + \left(\BF w-\BF U_{t+1}\right)^\prime \left( \widehat{\BF Z}_t - \BF z \right) +  \beta_t \,\norm{ \BF w}_{\BF C_t}~
\eex
which implies that
$
	\left(\BF U_{t+1} - \BF w \right)^\prime\left( \widehat{\BF Z}_t - \BF z \right) >\beta_t \, \left(  \norm { \BF U_{t+1}}_{\BF C_t}  + \norm{ \BF w}_{\BF C_t}  \right)
	   \geq \beta_t \,  \norm{ \BF U_{t+1} - \BF w  }_{\BF C_t}~.
$
Thus,
\bex
	\lefteqn{ \Pr \left\{ Q_{t+1}(\BF z) >  2 \beta_t \, \norm { \BF U_{t+1}}_{\BF C_t}  ~\big|~ \BF Z = \BF z \right\}}\\
	&\leq& \Pr \left\{ 	\left(\BF U_{t+1} - \BF w \right)^\prime \left( \widehat{\BF Z}_t - \BF z \right) 
	  ~>~\beta_t \, \norm{ \BF U_{t+1} - \BF w  }_{\BF C_t}  ~\big|~ \BF Z = \BF z \right\} \leq \frac{1}{t^2}~,
\eex
the last inequality follows from Lemma \ref{lemma:uncertainty-radius-tail-bound}.
\end{proof}

Lemma \ref{lemma:tail-bound-for-instantaneous-regret} suggests the following
approach for bounding the cumulative regret over $T$ periods.  In the first $r$ periods 
(during the initialization), we incur
a regret of $O(r)$.  For each time period between $r+1$ and $T$, we consider the 
two  cases:
1) where
the instantaneous regret is large with 
$Q_{t+1}(\BF z) >  2 \alpha \, \sqrt{\log t} \sqrt{ \min \left\{ r \log t, \abs{\CAL U_r} \right\}}\, \norm { \BF U_{t+1}}_{\BF C_t}$;
and,  2) the instantaneous regret is small.
By the above lemma, the contribution to the cumulative regret from the first case 
is bounded above by $O\left( \sum_t 1/t^2 \right)$, which is finite.  In the second case, we 
have a simple upper bound of $2 \alpha \, \sqrt{r} \, \left( \log t \right)  \norm { \BF U_{t+1}}_{\BF C_t}$ for the instantaneous
regret.  This argument leads to the following bound on the cumulative
regret over $T$ periods.  

\begin{lem}[Regret Decomposition]  \label{lem:regret-decomposition}  Under Assumption \ref{assumption:basic},
for all  $T \geq r+1$ and $\BF z \in \Real^r$,
$$
	{\rm Regret} \left ( \BF z, T, \textsc{UE} \right) \leq 2 \; \bar{u} ( r  + 2 ) \norm{\BF z} +  
	 2 \alpha \, \sqrt{r} \, \left( \log T \right) \sqrt{ T } \; \E \left[ \sqrt{ \sum_{t=r}^{T-1} \norm{ \BF U_{t+1}}_{\BF C_t}^2 } ~\Bigg|~ \BF Z = \BF z \right]~.
$$
\end{lem}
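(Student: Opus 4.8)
The plan is to decompose the cumulative regret into the contribution of the $r$ initialization periods and the contribution of periods $r+1,\ldots,T$, and then to control each instantaneous regret $Q_{t+1}(\BF z)$ (defined in Equation~(\ref{eq:instantaneous-regret})) by conditioning on whether it is ``large'' or ``small'' in the precise sense of Lemma~\ref{lemma:tail-bound-for-instantaneous-regret}. Since $\Regret(\BF z,T,\textsc{UE}) = \sum_{t=1}^r \E[Q_t(\BF z)\mid\BF Z=\BF z] + \sum_{t=r}^{T-1}\E[Q_{t+1}(\BF z)\mid\BF Z=\BF z]$, the two halves can be estimated separately.

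First I would record the trivial pathwise bound on the instantaneous regret: by the Cauchy-Schwarz Inequality and Assumption~\ref{assumption:basic}(b), for any arm $\BF U_t$ we have $\max_{\BF v\in\CAL U_r}\BF z^\prime(\BF v-\BF U_t) \leq \norm{\BF z}(\norm{\BF v}+\norm{\BF U_t}) \leq 2\bar u\norm{\BF z}$. Applied to each of the $r$ initialization periods, this contributes a term $2\bar u\,r\,\norm{\BF z}$. For the remaining periods I set $\beta_t = \alpha\sqrt{\log t}\,\sqrt{\min\{r\log t,\abs{\CAL U_r}\}}$ and split $Q_{t+1}(\BF z)$ according to the indicator events $\{Q_{t+1}(\BF z) > 2\beta_t\norm{\BF U_{t+1}}_{\BF C_t}\}$ and its complement. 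On the ``large'' event I again bound $Q_{t+1}(\BF z)$ by $2\bar u\norm{\BF z}$ and invoke Lemma~\ref{lemma:tail-bound-for-instantaneous-regret}, which caps the conditional probability of this event at $1/t^2$; summing gives $\sum_{t=r}^{T-1} 2\bar u\norm{\BF z}/t^2 \leq 2\bar u\norm{\BF z}\sum_{t=1}^\infty t^{-2} = \pi^2\bar u\norm{\BF z}/3 < 4\bar u\norm{\BF z}$, which combines with the initialization term to yield the desired $2\bar u(r+2)\norm{\BF z}$.

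On the ``small'' event the summand is deterministically controlled: using $\sqrt{\min\{r\log t,\abs{\CAL U_r}\}}\leq\sqrt{r}\,\sqrt{\log t}$ and $\log t\leq\log T$, I get $Q_{t+1}(\BF z)\,\indicator_{\{Q_{t+1}(\BF z)\leq 2\beta_t\norm{\BF U_{t+1}}_{\BF C_t}\}} \leq 2\alpha\sqrt{r}\,(\log T)\,\norm{\BF U_{t+1}}_{\BF C_t}$. The final step is to sum this over $t=r,\ldots,T-1$ and convert the sum of norms into the sum of squared norms through a pathwise Cauchy-Schwarz step: as there are at most $T$ summands, $\sum_{t=r}^{T-1}\norm{\BF U_{t+1}}_{\BF C_t} \leq \sqrt{T}\,\sqrt{\sum_{t=r}^{T-1}\norm{\BF U_{t+1}}_{\BF C_t}^2}$. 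Taking conditional expectation given $\BF Z=\BF z$ produces exactly the $2\alpha\sqrt{r}\,(\log T)\sqrt{T}\,\E[\,\sqrt{\sum}\mid\BF Z=\BF z\,]$ term.

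I do not expect a genuine obstacle here, since the argument is a careful bookkeeping exercise rather than a deep estimate. The two points requiring attention are the constant accounting for the $\sum t^{-2}$ series (checking $\pi^2/3<4$ so that the bound closes at $r+2$) and, more importantly, the order of operations in the last step: the Cauchy-Schwarz inequality must be applied \emph{pathwise} before taking the expectation, so that $\E[\cdot]$ remains outside the square root as the statement demands. Applying Jensen prematurely (pulling the expectation inside the square root) would be the one trap to avoid.
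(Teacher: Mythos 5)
Your proposal is correct and follows essentially the same route as the paper: the same split into initialization plus the indicator decomposition via Lemma~\ref{lemma:tail-bound-for-instantaneous-regret}, with the only (immaterial) difference being that you bound $\log t \leq \log T$ before applying Cauchy--Schwarz to the plain sum of norms, whereas the paper keeps the $\log t$ weights inside the Cauchy--Schwarz step and uses $\sqrt{\sum_{t=r}^{T-1}\log^2 t}\leq (\log T)\sqrt{T}$; both yield the identical bound. Your constant accounting ($\pi^2/3 < 4$, versus the paper's $\sum_t t^{-2}\leq 2$) and your insistence on applying Cauchy--Schwarz pathwise before taking the conditional expectation are both exactly right.
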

\begin{proof}
Let $\BF z \in \Real^r$ be given.
By the Cauchy-Schwarz Inequality and Assumption \ref{assumption:basic}(b), we have the following upper bound
on the instantaneous regret for all $t$ and $\BF z \in \Real^r$:
$ Q_t (\BF z) =  \max_{\BF v \in \CAL U_r} \left( \BF v - \BF U_{t} \right)^\prime \BF z
	\leq 2 \bar{u} \norm{\BF z}$.
Therefore,
$$
	\Regret \left ( \BF z, T, \textsc{UE} \right)
		\leq  2 \, \bar{u} \, r \norm{\BF z} +   \E \left[ \sum_{t=r}^{T-1}  Q_{t+1}(\BF z) ~\bigg|~ \BF Z = \BF z \right]~.
$$

For any $t \geq r$,
let the indicator random variable $G_{t+1}(\BF z)$ be defined by:
$$
	G_{t+1}(\BF z) =  \indicator \left[ Q_{t+1}(\BF z)   \leq 2 \alpha \, \sqrt{\log t} \sqrt{ \min \left\{ r \log t, \abs{\CAL U_r} \right\}}\,  \norm { \BF U_{t+1}}_{\BF C_t} \right]~.
$$
The contribution to the expected instantaneous regret 
$\E \left[ Q_{t+1}(\BF z) ~\big|~ \BF Z = \BF z \right]$ 
comes from two cases: 1) when $G_{t+1}(\BF z)  = 0$ and 2) when $G_{t+1}(\BF z) = 1$.
We will upper bound each of these two contributions separately.  In the first case, 
we know from Lemma \ref{lemma:tail-bound-for-instantaneous-regret} that 
$\Pr \left\{ G_{t+1}(\BF z)  = 0 ~\big|~ \BF Z = \BF z \right\}  = \Pr \left\{ Q_{t+1}(\BF z) >  2 \alpha \, \sqrt{\log t} \sqrt{ \min \left\{ r \log t, \abs{\CAL U_r} \right\}}\,   \norm { \BF U_{t+1}}_{\BF C_t}  ~\big|~ \BF Z = \BF z \right\} \leq 1/t^2$.  Since $\sum_{t=1}^\infty 1/t^2 \leq 2$, we have that
$$
\E \left[ \sum_{t=r}^{T-1}  \left(1 - G_{t+1}(\BF z) \right) Q_{t+1}(\BF z) ~\bigg|~ \BF Z = \BF z \right]
\leq 2 \bar{u} \norm{\BF z} \sum_{t=r}^{T-1} \Pr \left\{ G_{t+1}(\BF z)  = 0 ~\Big|~ \BF Z = \BF z \right\}
%\leq 2 \bar{u} \norm{\BF z} \sum_{t=r+1}^{T} \frac{1}{t^2} 
\leq 4 \bar{u} \norm{\BF z}~.
$$
On the other hand, when $G_{t+1}(\BF z) = 1$, we have that 
$Q_{t+1}(\BF z) \leq 2 \alpha \sqrt{r} (\log t) \,  \norm { \BF U_{t+1}}_{\BF C_t}$. This implies that, 
with probability one,
\bex
  \sum_{t=r}^{T-1} G_{t+1}(\BF z) Q_{t+1}(\BF z) 
		&\leq& 
	2  \, \alpha \, \sqrt{r} \, \sum_{t=r}^{T-1} \left( \log t \right) \norm { \BF U_{t+1}}_{\BF C_t} 
	 	~\leq~ 
	 2 \, \alpha \, \sqrt{r} \sqrt{ \sum_{t=r}^{T-1}     \log^2 t } \times \sqrt{ \sum_{t=r}^{T-1} \norm { \BF U_{t+1}}_{\BF C_t}^2}
	   \\
	  &\leq& 
	 2  \alpha \, \sqrt{r} \, \left( \log T \right) \sqrt{ T } \sqrt{ \sum_{t=r}^{T-1} \norm{ \BF U_{t+1}}_{\BF C_t}^2  }~,
%	 	  &=&  2 \bar{u} B_0 \sum_{t=r}^{T-1}  \left( 1 - G_{t+1}(\BF z) \right)    + 
%	 \sqrt{8 \alpha \, \sqrt{r} \,} \sqrt{ T \log T} \sqrt{ \sum_{t=r}^{T-1} Y_{t+1}(\BF z) }\
\eex
where  we use the Cauchy-Schwarz Inequality in the second inequality and the 
final inequality follows from the fact that
$
	\sqrt{ \sum_{t=r}^{T-1} \log^2 t }  \, \leq \, \sqrt{ \sum_{t=1}^{T-1} \log^2 T}  \leq \left(\log T \right) \sqrt{T} ~.
$
Putting the two cases together gives the desired upper bound because
$$
\E \left[ \sum_{t=r}^{T-1}  Q_{t+1}(\BF z) ~\bigg|~ \BF Z = \BF z \right] 
\leq 4 \bar{u} \norm{\BF z} + 	 2  \alpha \, \sqrt{r} \, \left( \log T \right) \sqrt{ T} ~ \E \left[ \sqrt{ \sum_{t=r}^{T-1} \norm{ \BF U_{t+1}}_{\BF C_t}^2  } 
~\bigg|~ \BF Z = \BF z \right]~.
$$
\end{proof}

The eigenvectors of the  matrix $\BF C_t = \left( \sum_{s=1}^t \BF U_s \BF U_s^\prime \right)^{-1}$ reflect the directions of the arms
that are chosen during the first $t$ periods.  The corresponding eigenvalues then measure the frequency
with which these directions are explored.  Frequently explored directions will have
small eigenvalues, while the eigenvalues for unexplored directions will be large.
Thus, the  weighted norm $\norm{ \BF U_{t+1}}_{\BF C_t}$ has two interpretations. 
First, it measures the size of the regret in period $t+1$.  In addition, since 
$\norm{ \BF U_{t+1}}_{\BF C_t}^2$ is a linear combination of the eigenvalues of $\BF C_t$,
it also reflects the amount of exploration in period $t+1$ in the unexplored directions.  

The above interpretation suggests that if we incur large regrets in the past (equivalently, we have done a lot of exploration), then
the current regret should be small.  Our intuition is confirmed in the following lemma
that establishes a recursive relationship between the weighted norm $\norm{ \BF U_{t+1}}_{\BF C_t}$
in period $t+1$ and the norms in the preceding periods.

\begin{lem}[Large Past Regrets Imply Small Current Regret]  \label{lemma:weghted-norm-constraint}
Under Assumption \ref{assumption:basic},  for all $t \geq r$ and $\BF z \in \Real^r$, with probability one,
$$
	0 \leq \norm{ \BF U_{t+1} }_{\BF C_t}^2 \leq  \frac{\bar{u}^2}{\lambda_0}
	\qquad \textrm{and} \qquad
		\norm{ \BF U_{t+1} }_{\BF C_t}^2
	\leq  \frac{   \left\{ (\bar{u}^2 / \lambda_0) \cdot (t+1)\right\}^r}
	{\prod_{s=r}^{t-1} \left( 1 + \norm{ \BF U_{{s+1}}}_{\BF C_{s}}^2 \right)}~~.
$$
\end{lem}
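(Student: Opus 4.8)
The plan is to work throughout with the information matrix $\BF D_t = \BF C_t^{-1} = \sum_{s=1}^t \BF U_s \BF U_s^\prime$. The first inequality is the easy part: the lower bound $\norm{\BF U_{t+1}}_{\BF C_t}^2 = \BF U_{t+1}^\prime \BF C_t \BF U_{t+1} \geq 0$ is immediate since $\BF C_t$ is positive definite for $t \geq r$ (the initialization makes $\BF C_t$ invertible). For the upper bound, I would use that the \textsc{UE} policy plays $\BF b_1, \ldots, \BF b_r$ during initialization, so that $\BF D_t \geq \sum_{k=1}^r \BF b_k \BF b_k^\prime \geq \lambda_0 \BF I_r$ by Assumption \ref{assumption:basic}(b); hence $\BF C_t \leq (1/\lambda_0)\BF I_r$, giving $\norm{\BF U_{t+1}}_{\BF C_t}^2 \leq \norm{\BF U_{t+1}}^2/\lambda_0 \leq \bar{u}^2/\lambda_0$.

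For the second inequality, the central device is a multiplicative recursion for $\det \BF D_t$ obtained from the matrix determinant lemma. Since $\BF D_s = \BF D_{s-1} + \BF U_s \BF U_s^\prime$, one has $\det \BF D_s = \det \BF D_{s-1}\,(1 + \BF U_s^\prime \BF D_{s-1}^{-1} \BF U_s) = \det \BF D_{s-1}\,(1 + \norm{\BF U_s}_{\BF C_{s-1}}^2)$. Iterating from $s = r+1$ to $t$ and re-indexing yields $\det \BF D_t = (\det \BF D_r) \prod_{s=r}^{t-1}(1 + \norm{\BF U_{s+1}}_{\BF C_s}^2)$, which is precisely the product appearing in the denominator of the claimed bound. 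I would then lower-bound $\det \BF D_r \geq \lambda_0^r$, using that every eigenvalue of $\BF D_r = \sum_{k=1}^r \BF b_k \BF b_k^\prime$ is at least $\lambda_{\min}(\BF D_r) \geq \lambda_0$.

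It then remains to bound $\norm{\BF U_{t+1}}_{\BF C_t}^2$ in terms of $\det \BF D_t$. Writing $\norm{\BF U_{t+1}}_{\BF C_t}^2 \leq \lambda_{\max}(\BF C_t)\,\norm{\BF U_{t+1}}^2 = \bar{u}^2/\lambda_{\min}(\BF D_t)$, I would relate the minimum eigenvalue to the determinant: since $\det \BF D_t$ is the product of all $r$ eigenvalues and each is at most $\lambda_{\max}(\BF D_t) \leq \tr \BF D_t = \sum_{s=1}^t \norm{\BF U_s}^2 \leq t\bar{u}^2$, we get $\det \BF D_t \leq \lambda_{\min}(\BF D_t)\,(t\bar{u}^2)^{r-1}$, i.e.\ $1/\lambda_{\min}(\BF D_t) \leq (t\bar{u}^2)^{r-1}/\det \BF D_t$. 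Combining the three estimates gives
\[
\norm{\BF U_{t+1}}_{\BF C_t}^2 \leq \frac{\bar{u}^2 (t\bar{u}^2)^{r-1}}{\lambda_0^r \prod_{s=r}^{t-1}(1+\norm{\BF U_{s+1}}_{\BF C_s}^2)} = \frac{t^{r-1}(\bar{u}^2/\lambda_0)^r}{\prod_{s=r}^{t-1}(1+\norm{\BF U_{s+1}}_{\BF C_s}^2)},
\]
and the stated bound follows from the crude estimate $t^{r-1} \leq (t+1)^r$.

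The main obstacle is recognizing the determinant recursion as the mechanism that reproduces exactly the product $\prod_{s=r}^{t-1}(1+\norm{\BF U_{s+1}}_{\BF C_s}^2)$ in the denominator, together with the eigenvalue/determinant comparison that converts the operator-norm control of $\BF C_t$ into the determinant-based bound. Once these two observations are in place, the remaining steps are routine estimates.
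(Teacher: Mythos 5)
Your proof is correct and follows essentially the same route as the paper's: the matrix determinant lemma gives the recursion $\det \left( \BF \Upsilon_t \right) = \det \left( \BF \Upsilon_r \right)\prod_{s=r}^{t-1}\left(1+\norm{\BF U_{s+1}}_{\BF C_s}^2\right)$ for $\BF \Upsilon_t = \BF C_t^{-1}$, the trace bounds the eigenvalues by $t\bar{u}^2$, and $\det \left( \BF \Upsilon_r \right) \geq \lambda_0^r$. The only (harmless) difference is the final step: the paper uses $\norm{\BF U_{t+1}}_{\BF C_t}^2 \leq 1 + \norm{\BF U_{t+1}}_{\BF C_t}^2 = \det \left( \BF \Upsilon_{t+1} \right)/\det\left( \BF \Upsilon_t \right)$ and bounds the numerator by $\left((t+1)\bar{u}^2\right)^r$, whereas you bound $\norm{\BF U_{t+1}}_{\BF C_t}^2 \leq \bar{u}^2/\lambda_{\min}\left( \BF \Upsilon_t \right)$ and compare $\lambda_{\min}\left( \BF \Upsilon_t \right)$ to $\det\left( \BF \Upsilon_t \right)$ through the remaining $r-1$ eigenvalues; both yield the stated bound.
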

\begin{proof}
For any $t \geq r$,
let $\BF \Upsilon_t = \left( \BF C_t  \right)^{-1} = \sum_{s=1}^t \BF U_{s} \BF U_{s}^\prime$. By the Rayleigh Principle,
$$
	\norm { \BF U_{t+1}}_{\BF C_t}^2  = \BF U_{t+1}^\prime \BF C_t \BF U_{t+1}
	\leq \lambda_{\max} \left( \BF C_t \right)  \norm{ \BF U_{t+1}}^2 = \frac{\norm{ \BF U_{t+1}}^2}{\lambda_{\min} \left( \BF \Upsilon_t \right)} \leq \frac{\bar{u}^2}{\lambda_0}~,
$$
where the last inequality follows from the definition of $\bar{u}$  and the fact that 
$$
	\lambda_{\min} \left( \BF \Upsilon_t \right) = \lambda_{\min} \left( \BF \Upsilon_r + \sum_{s=r+1}^t \BF U_{s} \BF U_{s}^\prime 
	\right) 
	\geq \lambda_{\min} \left( \BF \Upsilon_r \right) \geq \lambda_0~,
$$
where the last equality follows from the fact that $\BF \Upsilon_r = \sum_{k=1}^r {\BF b}_k {\BF b}_k^\prime$ where
the vectors $\BF b_1, \ldots, \BF b_r$ are given in Assumption \ref{assumption:basic}(b).
This proves the claimed upper bound on $\norm { \BF U_{t+1}}_{\BF C_t}^2$.

We will now establish the inequality that relates $\norm { \BF U_{t+1}}_{\BF C_t}^2$
to $\norm { \BF U_{{s+1}}}_{\BF C_s}^2$ for $s < t$.    Note that
\be
	\norm { \BF U_{t+1}}_{\BF C_t}^2  
	&=& \BF U_{t+1}^\prime \BF C_t \BF U_{t+1} 
	\leq 1 +  \BF U_{t+1}^\prime \BF C_{t} \BF U_{t+1} \nonumber\\
	&=& \frac{ \det \left( \BF \Upsilon_t \right)\cdot \left( 1 +  \BF U_{t+1}^\prime \BF C_{t} \BF U_{t+1} \right) }{ \det \left( \BF \Upsilon_t \right) } 
	= \frac{ \det \left( \BF \Upsilon_{t} \  + \BF U_{t+1}  \BF U_{t+1}^\prime  \right)}{ \det \left( \BF \Upsilon_t \right) }
	= \frac{ \det \left( \BF \Upsilon_{t+1}  \right)}{ \det \left( \BF \Upsilon_t \right)}~, 
	\label{eq:matrix-determinant-lemma}
\ee
where the second to last equality follows the  matrix
determinant lemma.

We will now establish bounds on the determinants $\det \left( \BF \Upsilon_{t+1}  \right)$ and  $\det \left( \BF \Upsilon_t \right)$.
Note that 
$$
\lambda_{\max} \left( \BF \Upsilon_{t+1} \right) \leq \tr \left( \BF \Upsilon_{t+1} \right)
=  \sum_{s=1}^{t+1} \tr \left( \BF U_{s} \BF U_{s}^\prime \right) =   \sum_{s=1}^{t+1} \norm{ \BF U_{s}}^2 
\leq (t+1) \bar{u}^2~,
$$
where the last inequality follows from the definition of $\bar{u}$ . Therefore,
$\det \left( \BF \Upsilon_{t+1}  \right) \leq \left[ \lambda_{\max} \left( \BF \Upsilon_{t+1} \right) \right]^r \leq (t+1)^r \bar{u}^{2r}$.
Moreover, using Equation (\ref{eq:matrix-determinant-lemma}) repeatedly,
we obtain
$$	
	\det \left( \BF \Upsilon_{t}  \right) 
	= \det(\BF \Upsilon_r)  \prod_{s=r}^{t-1} \left( 1 + \norm{ \BF U_{{s+1}}}_{\BF C_{s}}^2 \right)
		\geq  \lambda_0^r ~ \prod_{s=r}^{t-1} \left( 1 + \norm{ \BF U_{{s+1}}}_{\BF C_{s}}^2 \right)~,
$$
where the last inequality follows from the fact that
$\BF \Upsilon_r = \sum_{k=1}^r \BF b_k \BF b_k^\prime$ 
and $\det \left( \BF \Upsilon_r \right)
\geq \left[ \lambda_{\min} \left( \BF \Upsilon_r \right) \right]^r \geq \lambda_0^r$,
where the vectors $\BF b_1, \ldots, \BF b_r$ and the parameter $\lambda_0$
are  defined in Assumption \ref{assumption:basic}(b).

Putting everything together, we have that
$$
	\norm { \BF U_{t+1}}_{\BF C_t}^2  
	\leq \frac{ \det \left( \BF \Upsilon_{t+1}  \right)}{ \det \left( \BF \Upsilon_t \right)}
		\leq  \frac{ (t+1)^r \bar{u}^{2r}}{ \lambda_0^r \prod_{s=r}^{t-1} \left( 1 + \norm{ \BF U_{{s+1}}}_{\BF C_{s}}^2 \right)}
	=  \frac{   \left\{ (\bar{u}^2 / \lambda_0) \cdot (t+1)\right\}^r}{\prod_{s=r}^{t-1} \left( 1 + \norm{ \BF U_{{s+1}}}_{\BF C_{s}}^2 \right)}~,
$$
which is the desired result.
\end{proof}

The above result shows that if the  weighted norms in the preceding periods,
as measured by ${\prod_{s=r}^{t-1} \left( 1 + \norm{ \BF U_{{s+1}}}_{\BF C_{s}}^2 \right)}$, are large,
then the weighted norm  in the current period $t+1$ will be small.  Moreover, since the weighted
norm in the current period depends on the {\em product} of the norms  in the past,
we hope that the growth rate of the sum  $\sum_{t=r}^{T-1} \norm{ \BF U_{t+1}}_{\BF C_t}^2 $
should be small.
To formalize our conjecture, we introduce a related optimization problem.
For any $c \geq 0$ and $t \geq 1$, let $V^*(c,t)$ be defined by:
\bex
	V^*(c,t)   &=& \parbox{0.4in}{max}  \sum_{s=1}^t  y_s  \\
			&&\parbox{0.4in}{s.t.} 0 \leq y_s \leq c  \quad \textrm{and} \quad y_s
	\leq  \frac{   \left\{ c \cdot  (r+s)\right\}^r}
	{\prod_{q=1}^{s-1} \left( 1 + y_q \right)}~, \qquad s = 1, 2, \ldots, t ~,
\eex
where we define $\prod_{q=1}^0(1+y_q) = 1$. 
The following lemma gives an upper bound   in terms of
the function $V^*$.
\begin{lem}[Bounds on the 
Growth Rate of $\norm{\BF U_{t+1}}_{\BF C_t}^2$] \label{lem:optimization-connnection}  Under Assumption \ref{assumption:basic},
for any $T \geq r+1$ and $\BF z \in \Real^r$, with probability one,
$\sum_{t=r}^{T-1} \norm{ \BF U_{t+1}}_{\BF C_t}^2    \leq V^* \left( \bar{u}^2 / \lambda_0 ~,~  T-r  \right)$.
\end{lem}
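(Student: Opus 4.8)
The plan is to exhibit the (random) sequence $\bigl(\norm{\BF U_{t+1}}_{\BF C_t}^2\bigr)_{t=r}^{T-1}$ as a feasible point of the deterministic optimization problem that defines $V^*\!\left(\bar{u}^2/\lambda_0,\, T-r\right)$, so that its objective value is automatically dominated by the optimal value $V^*$. The identification I would use is a simple shift of index: for $s = 1, 2, \ldots, T-r$, set
$$
	y_s = \norm{\BF U_{r+s}}_{\BF C_{r+s-1}}^2~,
$$
so that the quantity to be bounded, $\sum_{t=r}^{T-1}\norm{\BF U_{t+1}}_{\BF C_t}^2$, becomes exactly $\sum_{s=1}^{T-r} y_s$, the objective of the optimization problem with $c = \bar{u}^2/\lambda_0$ and horizon $t = T-r$.

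With this substitution the whole proof reduces to checking that $(y_1, \ldots, y_{T-r})$ satisfies the two families of constraints in the definition of $V^*$, and here the two bounds of Lemma \ref{lemma:weghted-norm-constraint} do all the work. The box constraint $0 \leq y_s \leq \bar{u}^2/\lambda_0 = c$ is precisely the first inequality of that lemma, read at $t = r+s-1$. For the second constraint I would apply the second inequality of Lemma \ref{lemma:weghted-norm-constraint}, again at $t = r+s-1$: its numerator $\left\{(\bar{u}^2/\lambda_0)(t+1)\right\}^r$ becomes $\left\{c\cdot(r+s)\right\}^r$, and after re-indexing the product $\prod_{q=r}^{t-1}\bigl(1+\norm{\BF U_{q+1}}_{\BF C_q}^2\bigr)$ via $q = r+p-1$ it turns into $\prod_{p=1}^{s-1}(1+y_p)$, matching the constraint $y_s \leq \left\{c(r+s)\right\}^r \big/ \prod_{p=1}^{s-1}(1+y_p)$ verbatim. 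The empty-product conventions also agree at $s=1$ (both the lemma and the optimization problem use $\prod = 1$ there), so no boundary case is lost.

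Since Lemma \ref{lemma:weghted-norm-constraint} holds with probability one, on that almost-sure event the realized sequence $(y_s)$ is feasible, and therefore $\sum_{s=1}^{T-r} y_s \leq V^*\!\left(\bar{u}^2/\lambda_0,\, T-r\right)$ by the very definition of $V^*$ as the maximum over feasible sequences; this is exactly the claimed inequality, and it holds with probability one. I do not anticipate a genuine obstacle here, since the analytic content is entirely front-loaded into Lemma \ref{lemma:weghted-norm-constraint} and the definition of $V^*$ was manifestly tailored to these two constraints. The only thing requiring care is the bookkeeping of the index shift, namely verifying that the exponent $r$, the linear factor $(r+s)$, the range of the product, and the empty-product convention all line up correctly under the substitutions $t = r+s-1$ and $q = r+p-1$.
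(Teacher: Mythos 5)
Your proposal is correct and follows essentially the same route as the paper: the paper's proof also sets $y_s = \norm{\BF U_{r+s}}_{\BF C_{r+s-1}}^2$, invokes Lemma \ref{lemma:weghted-norm-constraint} to verify both constraints of the optimization problem defining $V^*$, and concludes by feasibility. Your index bookkeeping ($t = r+s-1$, $q = r+p-1$) checks out exactly.
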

\begin{proof}
For all $s \geq 1$, let $y_s = \norm{ \BF U_{r+s} }_{\BF C_{r+s-1}}^2$. Then, 
$\sum_{t=r}^{T-1} \norm{ \BF U_{t+1}}_{\BF C_t}^2 = \sum_{s=1}^{T-r} y_s$.  Let
$c_0 = \bar{u}^2 / \lambda_0$.  It follows from Lemma \ref{lemma:weghted-norm-constraint} that for all $s$, with probability one,
$0 \leq y_s \leq  c_0$
and
$ y_s \leq   \left\{ c_0 \,  (r+s)\right\}^r / \prod_{q=1}^{s-1} \left( 1 + y_q \right)$.
Therefore, we have  $\sum_{t=r}^{T-1} \norm{ \BF U_{t+1}}_{\BF C_t}^2    \leq V^* \left( c_0 ~,~  T-r  \right)$.
\end{proof}

It follows from Lemma \ref{lem:optimization-connnection} that it suffices to develop 
an upper bound on $V^*(c,t)$.  This result is given in the following lemma.

\begin{lem}[Optimization Bound] \label{lem:optimization-solution} For all $c \geq 0$, and $t \geq 1$,
$$
	V^*\left( c, t \right) \leq  2 \, c_0 \, \left( r\log c_0 +(r+1)\log(r+t+1)\right)~,
$$
where $c_0 = \max\{1, c\}$.
\end{lem}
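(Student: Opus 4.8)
The plan is to track the partial products appearing in the constraint. For $s \geq 0$ define $P_s = \prod_{q=1}^s (1 + y_q)$, with $P_0 = 1$, so that the two constraints read $0 \leq y_s \leq c$ and $P_{s-1}\, y_s \leq \{c(r+s)\}^r$. The strategy is to bound $\sum_{s=1}^t y_s$ from above by a multiple of $\log P_t$, and then to bound $\log P_t$ directly by telescoping.

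First I would relate $\sum_s y_s$ to $\log P_t$. Writing $c_0 = \max\{1,c\}$ we have $0 \le y_s \le c \le c_0$, and since $g(y) = \log(1+y)$ is concave it lies above the chord joining $(0,0)$ and $(c_0,\log(1+c_0))$ on $[0,c_0]$; hence $\log(1+y_s) \geq \frac{\log(1+c_0)}{c_0}\, y_s$. Summing over $s$ gives $\log P_t = \sum_{s=1}^t \log(1+y_s) \geq \frac{\log(1+c_0)}{c_0} \sum_{s=1}^t y_s$, so
$$\sum_{s=1}^t y_s \;\leq\; \frac{c_0}{\log(1+c_0)}\, \log P_t .$$

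Next I would bound $P_t$. Since $P_s - P_{s-1} = P_{s-1} y_s \leq \{c(r+s)\}^r = c^r (r+s)^r$, telescoping yields $P_t \leq 1 + c^r \sum_{s=1}^t (r+s)^r$. Because $(r+s)^r$ is increasing in $s$, $\sum_{s=1}^t (r+s)^r \leq \int_1^{t+1}(r+x)^r\,dx \leq \frac{(r+t+1)^{r+1}}{r+1}$, and therefore $P_t \leq 1 + c_0^r (r+t+1)^{r+1} \leq 2\, c_0^r (r+t+1)^{r+1}$, the last step using $c_0^r (r+t+1)^{r+1} \geq 1$. Taking logarithms,
$$\log P_t \;\leq\; \log 2 + r\log c_0 + (r+1)\log(r+t+1).$$

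Finally I would combine the two displays. Set $X = r\log c_0 + (r+1)\log(r+t+1)$, so the goal is $\frac{c_0}{\log(1+c_0)}(\log 2 + X) \leq 2 c_0 X$, equivalently $\log 2 + X \leq 2X\log(1+c_0)$. Since $c_0 \geq 1$ gives $\log(1+c_0) \geq \log 2$, it suffices to check $\log 2 + X \leq (2\log 2)\,X$, i.e. $X \geq \frac{\log 2}{2\log 2 - 1} \approx 1.79$; this holds because $X \geq (r+1)\log(r+t+1) \geq 3\log 4$ for $r \geq 2$ and $t \geq 1$. The main obstacle is precisely this last bookkeeping: the prefactor $c_0/\log(1+c_0)$ is unbounded in $c_0$, so the constant $2$ is not free — it survives only because the spurious $\log 2$ in the bound on $\log P_t$ is dominated by $X$, while the slack $2\log(1+c_0) \geq 2\log 2 > 1$ absorbs the remainder. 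The degenerate case $c=0$, which forces every $y_s=0$, is handled trivially.
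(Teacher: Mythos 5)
Your proof is correct, and while it shares the paper's high-level architecture (bound $\sum_s y_s$ by a constant multiple of the logarithm of the running product, then bound that product by accumulating the constraint), the execution differs in a way worth noting. The paper normalizes $a_s = y_s/(2c_0)$, invokes $1+a \geq e^{a/2}$ on $[0,1]$ to replace $\prod_q(1+y_q/c_0)$ by $e^{\sum_q a_q}$, and then bounds $e^{b(\tau)}$ by embedding the recursion in continuous time and integrating a differential inequality $\dot d(\tau) \leq \{c_0(r+\tau+1)\}^r e$. You instead keep the product $P_s = \prod_{q\leq s}(1+y_q)$ intact and observe that the constraint reads exactly $P_s - P_{s-1} = P_{s-1}y_s \leq \{c(r+s)\}^r$, so $P_t$ is bounded by a plain telescoping sum; your chord inequality $\log(1+y) \geq \frac{\log(1+c_0)}{c_0}\,y$ plays the role of the paper's $1+a\geq e^{a/2}$ (and is in fact the sharper of the two, since $\log(1+c_0)/c_0 \geq \log 2 /c_0 > 1/(2c_0)$). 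What the paper's normalization buys is that the factor $2$ in the final bound appears automatically; what your route buys is the elimination of the continuous-time detour in favor of a one-line discrete telescope, at the price of the closing bookkeeping where the additive $\log 2$ from $P_t \leq 2c_0^r(r+t+1)^{r+1}$ must be absorbed — which you verify correctly using $\log(1+c_0)\geq\log 2$ and $X \geq (r+1)\log(r+t+1) \geq 3\log 4$, the latter legitimately relying on the paper's standing assumption $r\geq 2$ (which the paper's own proof also uses, via $e/(r+1)\leq 1$). The degenerate case $c=0$ is handled as you say.
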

\begin{proof}
Any feasible solution $\{y_s : s = 1, \ldots, t\}$ for the problem defining $V^*(c,t)$, also satisfies the constraints
$$
0\leq \frac{y_s}{2c_0}\leq \frac{y_s}{c_0} \leq 1\qquad {\rm and}\qquad
\frac{y_s}{2c_0}\leq 
\frac{\{ c_0\cdot(r+s)\}^r}
{\prod_{q=1}^{s-1}(1+(y_q/c_0))} \leq \{ c_0\cdot(r+s)\}^r e^{ - \sum_{q=1}^{s-1}  y_q / (2 c_0) }~,
$$
where the last inequality follows from the fact that  for any $a\in[0,1]$, we have $1+a\geq e^{a/2}$.
%Thus, any such feasible solution $\{y_s : s = 1, \ldots, t\}$ must satisfy the weaker constraint
%$$\frac{y_s}{2c_0}\leq 
%\frac{\{ c_0\cdot(r+s)\}^r}
%{\prod_{q=1}^{s-1}e^{y_s/2c_0}}.$$
Thus, by letting $a_s=y_s/2c_0$, we obtain $V^*(c,t)\leq 2c_0 W^*(c_0,t)$, where
$W^*(c_0,t)$ is the maximum possible value of $\sum_{s=1}^t a_s$, subject to
$$
0\leq a_s\leq 1
\qquad {\rm and}\qquad
a_s\leq 
\{ c_0\cdot(r+s)\}^r
e^{- \sum_{q=1}^{s-1}a_s}~.
$$

Let us introduce a continuous-time variable $\tau$, and define
$a(\tau)=a_s$, for $\tau\in[s-1,s)$.
Let $b(\tau)=\int_0^{\tau} a(\tau')\, d\tau'$, and note that
$b(s)=\sum_{q=1}^s a_q$.
For any $\tau\in[s-1,s)$, we have
$${\dot b}(\tau)= a_s
\leq
\{ c_0\cdot(r+s)\}^r
e^{- \sum_{q=1}^{s-1}a_s}
\leq
\{ c_0\cdot(r+\tau+1)\}^r
e^{a_s - \sum_{q=1}^{s}a_q}
\leq
\{ c_0\cdot(r+\tau+1)\}^r  e^{-b(\tau) +1 }.$$
Let $d(\tau)=e^{b(\tau)}$. Then, for any $\tau \geq 0$,
$$
{\dot d}(\tau)= d(\tau) {\dot b(\tau)}
\leq e^{b(\tau)}
\{ c_0\cdot(r+\tau+1)\}^r e^{-b(\tau)+1}
=
\{ c_0\cdot(r+\tau+1)\}^r e.$$
By integrating both sides, we obtain
$ d(t)\leq \frac{e \, c_0^r \, (r+t+1)^{r+1}}{r+1}  $ for all $t \geq 0$.
Since $e / (r+1) \leq 1$ because $r \geq 2$, taking logarithms, we obtain
$$
\sum_{q=1}^t a_s =b(t) =\log d(t) \leq r \log c_0 +(r+1)\log(r+t+1).
$$
The right-hand side above is therefore an upper bound on $W^*(c_0,t)$,
which leads to the upper bound on $V^*(c,t)$, which gives the desired result.
\end{proof}

Finally, here is the proof of Theorem \ref{theorem:UE-regret-analysis}.

\begin{proof} It suffices to establish the regret bound because the risk bound
follows immediately from taking the expectation.
Let $A_0 = \max\{ 1, \bar{u}^2 / \lambda_0\}$.
It follows from Lemmas \ref{lem:regret-decomposition}, \ref{lem:optimization-connnection},
and \ref{lem:optimization-solution}  that
\bex
	\Regret \left ( \BF z, T, \textsc{UE} \right) &\leq&
2 \; \bar{u} ( r  + 2 ) \norm{\BF z} +  
	 2 { \alpha \, \sqrt{r} \,} \left( \log T \right) \sqrt{ T } \; \E \left[ \sqrt{ \sum_{t=r}^{T-1} \norm{ \BF U_{t+1}}_{\BF C_t}^2 } ~\Bigg|~ \BF Z = \BF z \right]\\
	 	 &\leq&
2 \; \bar{u} ( r  + 2 ) \norm{\BF z} +  
	 2 { \alpha \, \sqrt{r} \,} \left( \log T \right)  \sqrt{ T } \;  \sqrt{ V^* \left( \bar{u}^2 / \lambda_0 ~,~  T-r  \right)  }\\
	 &\leq&
2 \; \bar{u} ( r  + 2 ) \norm{\BF z} +  
	 2 { \alpha \, \sqrt{r} \,} \left( \log T \right)  \sqrt{ T } \;  \sqrt{ 
			2 A_0  \left\{ r \log A_0 + (r+1) \log (T+1)  \right\}   }\\
	 &\leq& a_4 \, r \norm{\BF z} + a_5  \, r \, \sqrt{T} \log^{3/2} T~,
\eex
for some positive constants $a_4$ and $a_5$ that depend only on $\sigma_0$, $\bar{u}$,  and $\lambda_0$.
\end{proof}

\subsection{Bounds for Finitely Many Arms: Proof of Theorem \ref{thm:regret-and-risk-small-arms}}
\label{appendix:proof-regret-and-risk-small-arms}

Recall that for any $\BF z \in \Real^r$ and $\BF u \in \CAL U_r$, 
$N^\BF u(\BF z, T)$ is the number of times that arm $\BF u$ has been chosen
during the first $T$ periods.  To complete the proof of 
Theorem \ref{thm:regret-and-risk-small-arms}, it suffices to show that
$$
	\E \left[ N^\BF u(\BF z, T) \mid \BF Z = \BF z\right] \leq 6 
		+\frac{4 \alpha^2  \abs{\CAL U_r}  \log T }{ \left( \Delta^{\BF u} \left( \BF z \right) \right)^2} ~.
$$
Let us fix some $\BF z \in \Real^r$ and $\BF u \in \CAL U_r$. 
 Since $N^{\BF u}(\BF z, t)$ is nondecreasing in $t$, we can show that
for any positive integer $\theta$,
$ N^{\BF u}(\BF z, T) 
~\leq~  \theta + \sum_{t=r}^{T-1} \indicator_{\left\{ \BF U_{t+1} = \BF u \textrm{ and } N^{\BF u}(\BF z, t) ~\geq~ \theta \right\}}$.
%\bex
%	N^{\BF u}(\BF z, T) 
%%	&\leq& 1 + \sum_{t=r+1}^T \indicator \left[ J_t = \ell \right]\\
%%					&\leq& \theta + \sum_{t=r+1}^T \indicator \left[ J_t = \ell \textrm{ and } N^\ell(\BF z, t-1) \geq \theta \right]\\
%	&\leq&     \theta + \sum_{t=r}^{T-1} \indicator_{\left\{ \BF U_{t+1} = \BF u \textrm{ and } N^{\BF u}(\BF z, t) \geq \theta \right\}}~~.
%\eex
Suppose that $\BF w$ is the optimal arm, that is, $\max_{\BF v \in \CAL U_r} \BF v^\prime \BF z = \BF w^\prime \BF z$.
Then, we have that
$$
\indicator_{\left\{ \BF U_{t+1} = \BF u \right\}}
~\leq~
\indicator_{\left\{  \BF u^\prime \widehat{ \BF Z}_t + R_t^{\BF u} ~\geq~ \BF w^\prime \widehat{ \BF Z}_t + R_t^{\BF w} \right\}}
~\leq~ \indicator_{\left\{  \BF u^\prime \left( \widehat{ \BF Z}_t - \BF z \right) ~>~ R_t^{\BF u} \right\}} 
+ \indicator_{\left\{  \BF w^\prime \left( \widehat{ \BF Z}_t - \BF z \right) ~<~ -R_t^{\BF w} \right\}}
+ \indicator_{\left\{ \left( \BF w - \BF u \right)^\prime \BF z ~\leq~ 2 R_t^\BF u
 \right\}}~.
$$
Since $\Pr  \left\{  \BF u^\prime \left( \widehat{ \BF Z}_t - \BF z \right) > R_t^{\BF u} 
\mid \BF Z = \BF z \right\}$ and $\Pr \left\{  \BF w^\prime \left( \widehat{ \BF Z}_t - \BF z \right) < -R_t^{\BF w} \mid \BF Z = \BF z \right\}$ are bounded above by $1/t^2$ by Lemma \ref{lemma:uncertainty-radius-tail-bound}, we can show that
\bex
	\E \left[ N^{\BF u}(\BF z, T) \mid \BF Z = \BF z\right] 
		&\leq& \theta + 2 \sum_{t=1}^\infty \frac{1}{ t^2} + \sum_{t=r}^{T-1} \Pr \left\{ \left( \BF w - \BF u \right)^\prime \BF z ~\leq~ 2 R_t^{\BF u} 
			\textrm{ and } N^{\BF u}(\BF z, t) \geq \theta \mid \BF Z = \BF z  \right\}~,\\
	&\leq& 4 + \theta + \sum_{t=r}^{T-1} \Pr \left\{ \left( \BF w - \BF u \right)^\prime \BF z ~\leq~ 2 R_t^{\BF u} 
			\textrm{ and } N^{\BF u}(\BF z, t) \geq \theta \mid \BF Z = \BF z  \right\}~.
\eex
%where the last two inequalities follows from Lemma \ref{lemma:uncertainty-radius-tail-bound}
%and the fact that $\sum_{t=1}^\infty 1/t^2 = \pi^2 /6$.

%We will now use properties of the least squares estimator to bound the last term on the right hand side of
%the above inequality. 
Let $\BF H = \sum_{\BF v  \in \CAL U_r : \BF v \ne \BF u} N^\BF v (\BF z, t) \BF v \BF v^\prime$. 
It follows from Equation (\ref{eq:OLS}) and 
the Sherman-Morrison Formula (see \citealp{ShermanMorrison:1950}) that
\bex
\BF C_t &=&  \left( \BF H + N^{\BF u}(\BF z, t) \BF u \BF u^\prime \right)^{-1}
		       = \BF H^{-1} - \frac{ N^{\BF u}(\BF z,t)  \BF H^{-1} \BF u \BF u^\prime \BF H^{-1}}
								{1 + N^{\BF u}(\BF z,t)  \BF u^\prime \BF H^{-1} \BF u},
\eex
which implies that
$$
\norm{\BF u}_{\BF C_t}^2  = \BF u^\prime \BF C_t \BF u
= \BF u^\prime \BF H^{-1} \BF u  -  \frac{ N^{\BF u}(\BF z,t)  \left( \BF u^\prime \BF H^{-1} \BF u \right)^2}
								{1 + N^{\BF u}(\BF z,t)  \BF u^\prime \BF H^{-1} \BF u} = \frac{  \BF u^\prime \BF H^{-1} \BF u }{1 + N^{\BF u}(\BF z,t)  \BF u^\prime \BF H^{-1} \BF u} 
\leq \frac{1}{N^{\BF u}(\BF z, t)}~,
$$
and therefore,
$
2 R_t^{\BF u} = 2 \alpha \,  \sqrt{ \log t} \sqrt{ \min\left\{ r  \log t , \abs{\CAL U_r} \right\}}  \norm { \BF u}_{\BF C_t} \leq \big( 2 \alpha \, \sqrt{\abs{\CAL U_r} \, \log t} \,\big) / \sqrt{ N^{\BF u}(\BF z,t)}$.

By setting $
\theta =  1+ \left \lceil  \frac{ 4 \alpha^2 \abs{\CAL U_r}  \log T }{\left( \Delta^{\BF u} \left( \BF z \right) \right)^2} \right \rceil~,
$
we conclude that
$2 R_t^{\BF u} < \Delta^{\BF u} \left( \BF z \right) = \left( \BF w - \BF u \right)^\prime \BF z $ whenever $N^{\BF u}(\BF z, t) \geq \theta$.  This implies that
$
	\indicator_{\left\{\left( \BF w - \BF u \right)^\prime \BF z  ~\leq~ 2 R_t^{\BF u} 
	\; \textrm{and} \;
	N^{\BF u}(\BF z, t) \geq \theta \right\}} = 0~
$, and we have that
$$
	\E \left[ N^{\BF u}(\BF z, T) \mid \BF Z = \BF z\right] 
	\leq 4 + 1 + \left \lceil  \frac{ 4 \alpha^2 \abs{\CAL U_r} \log T }{\left( \Delta^{\BF u} \left( \BF z \right) \right)^2} \right \rceil		
	\leq  6 + \frac{4 \alpha^2  \abs{\CAL U_r} \log T }{ \left( \Delta^{\BF u} \left( \BF z \right) \right)^2}  ~,
$$
which is the desired result.
\end{document}